\crefname{section}{Section}{secs}
\crefname{theorem}{Theorem}{thms}
\crefname{assumption}{Assumption}{asss}
\crefname{lemma}{Lemma}{lems}
\newtheorem{definition}{Definition}
\newtheorem{theorem}{Theorem}
\newtheorem{lemma}[theorem]{Lemma}    
\newtheorem{corollary}[theorem]{Corollary}
\theoremstyle{definition}
\newtheorem{remark}{Remark}
\newtheorem{assumption}{Assumption}
\newcommand{\CC}{\mathcal{C}}
\newcommand{\R}{\mathbb{R}}
\newcommand{\N}{\mathbb{N}}
\newcommand{\wenyu}[1]{\textcolor{blue}{}}
\newcommand{\mmp}[1]{\textcolor{red}{}}
\newcommand{\MM}{\mathcal{M}}
\newcommand{\XX}{\mathcal{X}}
\newcommand{\tilXX}{\tilde{\mathcal{X}}}
\newcommand{\tiln}{\tilde{n}}
\newcommand{\phihat}{\hat{\varphi}}
\newcommand{\That}{\hat{T}}
\newcommand{\rrr}{\mathbb{R}}
\newcommand{\wfs}{{\rm wfs}}
\newcommand{\Rmnum}[1]{\expandafter\@slowromancap\romannumeral #1@}
\title{How well behaved is finite dimensional Diffusion Maps embedding?}
\author{%
  Wenyu Bo\thanks{--} \\
  Department of Statistics\\
  University of Washington\\
  Seattle, WA 98195  \\
  \texttt{wbo@uw.edu} \\
  \And
  Marina Meil\u{a}\\
  Department of Statistics\\
  University of Washington\\
  Seattle, WA 98195  \\
  \texttt{mmp@stat.washington.edu}
}
\begin{document}   

\maketitle

\begin{abstract}
	Under a set of assumptions on a family of submanifolds $\subset \R^D$, we derive a series of geometric properties that remain valid after finite-dimensional diffusion maps (DM), including almost uniform density, finite polynomial approximation and reach. Leveraging these properties, we establish rigorous bounds on the embedding errors introduced by the DM algorithm is $O\left((\frac{\log n}{n})^{\frac{1}{8d+16}}\right)$. Furthermore, we quantify the error between the estimated tangent spaces and the true tangent spaces over the submanifolds after the DM embedding,
	\begin{align*}
		\sup_{P\in \mathcal{P}} \mathbb{E}_{P^{\otimes \tiln}} \max_{1\leq j \leq \tiln} \angle (T_{Y_{\varphi(M),j}}\varphi(M),\hat{T}_j)\leq C \left(\frac{\log n }{n}\right)^\frac{k-1}{(8d+16)k},
	\end{align*}
	which providing a precise characterization of the geometric accuracy of the embeddings. These results offer a solid theoretical foundation for understanding the performance and reliability of DM in practical applications.
	
\end{abstract}

\section{Introduction}
\label{sec:intro}
The Diffusion Maps (DM) embedding\cite{COIFMAN20065,belkin2003laplacian}, a dimensionality reduction technique that captures the geometric structure of data by constructing a diffusion process among data points, is central to manifold learning from
samples. \cite{berard1994embedding} showed that in the limit of
infinite dimension it is an isometric embedding, while
\cite{portegies2016embeddings} showed that almost isometry can be achieved
with a finite number $m$ of eigenfunctions, where this $m$ depends on
manifold geometric properties. \cite{bates2014embedding} showed that if
isometry is not required, then the sufficient embedding dimension by
Laplacian eigenfunctions depends on dimension, injectivity radius, Ricci curvature and volume, thus that it can be
arbitrarily larger than the Whitney embedding dimension of $2d$.

The DM embedding is widely used for non-linear dimension reduction as
the Diffusion Maps algorithm \cite{COIFMAN20065,lafon2006diffusion,coifman2008diffusion,arvizu2016dimensionality}, which embeds a sample into $m$
dimensions by the eigenvectors of $L_n$, a $n\times n$ matrix
estimator of the Laplace-Beltrami operator $\Delta$.
For instance, DM is frequently used to analyze high-dimensional single-cell RNA sequencing data, revealing cell differentiation trajectories and underlying biological patterns\cite{haghverdi2015diffusion}, and it can order cells along their differentiation paths, enabling accurate reconstruction of branching developmental processes\cite{haghverdi2016diffusion}. 
In chemistry, DM can extract the dynamical modes of high-dimensional simulation trajectories, furnishing a kinetically low-dimensional framework\cite{ferguson2011nonlinear}, it can identify the collective coordinates of rare events in molecular transitions\cite{boninsegna2015investigating}.
In Astronomy, DM is applied to estimate galaxy redshifts from photometric data, demonstrating comparable accuracy to existing methods\cite{freeman2009photometric}.

Stimulated in part by the practical applications of DM, the consistency of  finite sample estimators of $\Delta$, and of their eigenfunctions has been intensely studied. Many early works discussed the pointwise consistency for empirical operator with smooth functions on $M$ and they derived convergence (with different rates) \cite{von2008consistency,belkin2008towards,gine2006empirical,10.5555/3104322.3104459,hein2007graph}, which provide illuminating results for subsequent studies. Not only $\Delta$ itself, the consistency of its embedding are also widely studied, they discussed how the eigenvectors and eigenvalues of the empirical Laplacian with specific graph (K-NN, $\epsilon$-graph) converge to the
eigenfunctions and eigenvalues of $\Delta$ in the different norm ($L^2$, $L^\infty$)\cite{cheng2022eigen,garcia2020error,wang2015spectral,belkin2006convergence}, also there are some other special convergence been studied. For instance, convergence in Lipschitz norm\cite{calder2022lipschitz}, embedding using heat kernel\cite{jones2008manifold,DUNSON2021282}

This paper completes the picture/advances the understanding of the DM
embedding in the infinite and finite sample case, by considering
geometric properties of a manifold, such as smoothness, injectivity
radius, reach, volume, diameter, and examining to what extent, and
under the conditions that diffusion time $t$ is small enough and embedding dimension $m$ is large enough, the DM preserves
these properties. In other words, if the original manifold $\MM$ is
``well behaved'', what can be said about its DM embedding $\varphi(\MM)$?

More specifically, we set a series of geometric assumptions on a family of compact manifolds (\cref{sec:problem}), then we study the smoothness of $\varphi$ by examining its Sobolev norms
(\cref{sec:smooth}) and use it to implies the existence of local parameterization, then we derived consistent properties that hold uniformly for all manifolds in the family after DM embedding, like sampling densities and reach (\cref{phiM}). In \cref{sec:convergence}, we identified the correspondence between the embedding via eigenfunctions and the embedding via eigenvectors of the graph matrix and address the convergence between them, then estimate the error between the eigenfunctions and eigenvectors to quantify the noise in the positions of the embedded points. Finally, we use the error we estimated and the geometric quantities transformed by the DM to approximate the tangent space and obtained the convergence rate.(\cref{sec:tangent})

\section{Background, challenges and assumptions}
\label{sec:problem}

\subsection{Manifolds and the Diffusion Map}
For basic definitions reader should consult \cite{do1992riemannian,lee2003introduction,lee2018introduction}. Here we consider sub-manifolds $M\subset \rrr^D$ which we will generically call {\em manifolds}. Note that the ambient dimension $D$ will not appear throughout the paper, only Euclidean distances $|x-y|$ with $x,y\in \rrr^D$ will appear, hence $D$ can actually be infinite, and $\rrr^D$ could become a Hilbert space.

We also assume that $M$ is a closed manifold with smoothness of class $C^k$, as described in \cref{as:smooth}. This assumption ensures the existence of a local polynomial expansion up to order $k$ in the neighborhood of each point on $M$. Such expansions satisfy specific regularity constraints, enabling precise characterization of the local geometry and supporting rigorous analytical derivations. These properties are fundamental for establishing the theoretical results and ensuring consistency in subsequent computations.

In this work, we focus on several key geometric quantities, including the tangent space, reach, Riemannian metric, and geodesic curves. These quantities play critical roles in understanding the geometry of the manifold and in estimating and reconstructing manifolds. While these concepts are broadly defined in the general framework in Riemannian geometry, considering our sampling points are in the submanifolds in $\R^D$, we restrict our attention to their specialized formulations for submanifolds embedded in Euclidean space. This setting simplifies their definitions and aligns with the analytical and computational techniques employed in this study. For example, the tangent space is characterized as the hyperplane tangent to $M$ at a given point, and the Riemannian metric is induced from the ambient Euclidean metric. 

\begin{definition}[Tangent Space]
	\upshape
	Let $M$ be a smooth manifold and $p$ be a point of $M$. If a linear map $v:C^\infty(M)\rightarrow \R$ satisfies
	\begin{align}
		v(fg)=f(p)vg+g(p)vf \quad \text{for all } f,g \in C^\infty(M),
	\end{align}
	then we call $v$ a derivation at $p$. The set of all derivations at $p$ is called the tangent space to $M$ at $p$, denoted by $T_pM$. 
\end{definition}

For a manifold $M\subset \R^D$, let $\iota: M \hookrightarrow \R^D$ be the inclusion map. Let $(U,\textbf{x})$ be the chart containing $p$, and $\hat{U}:=\textbf{x}(U) \subset \R^d$. We have $\hat{\iota}:=\iota \circ \textbf{x}^{-1}:\hat{U} \overset{\textbf{x}^{-1}}{\rightarrow} U \overset{\iota}{\hookrightarrow} \R^D$ is a local representation near $p$ by
\begin{align}
\left(\textbf{x}_1(p),\dots,\textbf{x}_d(p)\right)\rightarrow\left(p_1,\dots,p_D\right).
\end{align}
Then $\left\{\frac{\partial\hat{\iota}}{\partial x_1}(\textbf{x}(p)),\dots,\frac{\partial\hat{\iota}}{\partial x_d}(\textbf{x}(p))\right\}$ span a $d$-dimensional subspace at $\iota(p)$ of $\R^D$, which is the common definition of tangent space at $p$ in the Euclidean case. 

\begin{definition}[Geodesic Normal Coordinate]
	Let $p \in U \subset M$ and $\left. \exp_p\right|_V : V\subset T_pM \rightarrow U$ be a diffeomorphism, and there is a basis isomorphism $B$ between $\R^d$ and orthonormal basis $\left\{b_i\right\}$ for $T_p M$ by $B(x_1,\dots,x_d)=\sum_{i=1}^d x_i b_i$. Then $U$ is the normal neighborhood with normal coordinate $\left(\left. \exp_p\right|_V \circ B\right)^{-1}$.
\end{definition}

Since $d_0 (\exp_p)$ is the identity map, thus by inverse function theorem, such neighborhood $V$ always exists, then it is well defined.

\begin{definition}[Riemannian Metric]
	\upshape
	A Riemannian metric $g$ on a manifold $M$ assigns to every $p$ a inner product $g_p(\cdot,\cdot)$ on $T_pM$ which is smooth in the following sense: For a chart $(U,\textbf{x})$ containing $p$, with $\textbf{x}^{-1}(x_1,\dots,x_d)= q \in U$ and $\frac{\partial}{\partial x_i}(q)=d\textbf{x}^{-1}_q(0,\dots,1,\dots,0)$, then $\left\langle\frac{\partial}{\partial x_i}(q),\frac{\partial}{\partial x_j}(q) \right\rangle = g_{ij}(x_1,\dots,x_d)$ is smooth on $\hat{U}$. 
\end{definition}

In our setting, manifolds are embedded into $\R^D$ via $\iota$, then Riemannian metric automatically inherits Euclidean metric, i.e. $g_p(v,w)=\left\langle d\iota_p v, d \iota_p w  \right\rangle $.

Riemannian metric defines a measure on $M$, called the {\em Riemannian measure} via the {\em volume form}. If $M$ is compact, it has a finite volume $Vol(M)$. We can now have other measures on $M$, which are absolutely continuous w.r.t. the Riemannian measure, through their densities $f:M\rightarrow (0,\infty)$.  

In statistics/manifold learning, it is assumed that we are given an i.i.d. sample $X_n=\{x_1,\ldots x_n\}\subset M$ from $f$. 

\begin{definition}[Reach\cite{federer1959curvature}]
	\upshape
	The reach of a subset $A$ of $\R^n$ is the largest $\tau$ (possibly $\infty$) such that if $x \in\R^n$ and the  distance from $x$ to $A$ is smaller that $\tau$, then $A$ contains a unique point nearest to $x$. For more detail and properties, refer \cite{aamari2019estimating}.
\end{definition}
The reach of a subset $A \subset \R^n$ 
provides a measure of its local geometric regularity. Specifically, if the reach of 
$A$ is $\tau>0$, then every point within a distance less than $\tau$ from $A$ has a unique nearest point on $A$. Geometrically, this implies that 
$A$ does not exhibit sharp corners, cusps, or regions of high curvature within the specified radius. In particular: For convex subsets of $\R^n$, the reach is infinite, reflecting the absence of curvature bounds or sharp features. For smooth submanifolds, the reach is inversely related to its maximum principal curvature. Intuitively, the reach corresponds to the radius of the smallest osculating ball that fits locally around.

\begin{definition}[Laplacian-Beltrami Operator]
	\upshape~
	
The Laplacian-Beltrami operator is the linear operator $\Delta:C^\infty(M)\rightarrow C^\infty(M)$ defined by
\begin{align}
	\Delta f = div(grad f).
\end{align}
\end{definition}

It is a classical result that the eigenvalues of the Laplace-Beltrami operator $-\Delta$ on a Riemannian manifold $M$ form a non-decreasing spectrum, i.e. \begin{align}
	0\leq \lambda_0 \leq \lambda_1\leq \lambda_2 \dots,
\end{align}
where each eigenvalue is repeated according to its multiplicity. When $M$ is compact,  the spectrum of $-\Delta$ is discrete, and each eigenvalue has finite multiplicities. If $M$ is compact and without boundary, the smallest eigenvalue $\lambda_0 =0 $ , and its eigenspace consists of constant functions. Furthermore, the regularity theory for elliptic operators ensures that all eigenfunctions are smooth, i.e., they belong to $C^\infty(M)$.

\paragraph{The Diffusion map}
\label{embed}
Let $M$ be a smooth, closed manifold of class $\mathcal{C}_3$ embedded in the
(possibly high-dimensional) Euclidean space $\rrr^D$ and $\Delta_{\mathcal{M}}$ be Laplacian
operator on $M$ with eigenvalues $0=\lambda_0 \leq \lambda_1 \leq
\cdots$. We consider eigenfunctions corresponding these eigenvalues:
$e_0,e_1,\cdots$, and we normalize them such that
$\|e_i\|_{L^2}=1$. It is easy to check $e_0 \equiv const$. Then we have the following embedding theorem.

\begin{theorem}
	\label{embeddingthm}
Let $\MM$ be the set of $d$ dimensional, closed Riemannian manifolds whose Ricci curvature is bounded from below by $\kappa$, injectivity radius is bounded from below by $\iota$, and the volume is bounded from above by $V$, we define
$\varphi: M\subset \mathbb{R}^D \rightarrow N\subset \mathbb{R}^m$ as the following:
\begin{equation}
	\varphi(x)=(2t)^\frac{d+2}{4}\sqrt{2}(4\pi)^\frac{d}{4}(e^{-\lambda_1 t}e_1(x),e^{-\lambda_2 t}e_2(x),\cdots,e^{-\lambda_m t}e_m(x)) \in \mathbb{R}^m,
\end{equation}
then there exists a $t_0=t_0(d,\kappa,\iota,\epsilon)$ such that for all $0<t<t_0$, there exists a $N_0(d,\kappa,\iota,\epsilon,V,t)$ such that if $N>N_0$, then for all $M\in \mathcal{M}(d,\kappa,\iota,V)$, the map above is an embedding of $M$ into $\mathbb{R}^N$, and $1-\epsilon < \|d \varphi_p v\| < 1+\epsilon$ where $\|v\|=1$.\cite{portegies2016embeddings}
\end{theorem}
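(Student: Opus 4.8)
The plan is to realize $\varphi$ as a finite-dimensional truncation of the full heat-kernel embedding of B\'erard--Besson--Gallot and to prove the claim by (a) controlling the small-$t$ asymptotics of the pullback metric of the full embedding and (b) bounding the distortion introduced by keeping only the first $m$ coordinates, tracking every constant uniformly over the family $\MM(d,\kappa,\iota,V)$. Write $p_s(x,y)=\sum_{i\ge 0}e^{-\lambda_i s}e_i(x)e_i(y)$ for the heat kernel and $c(t)=(2t)^{(d+2)/4}\sqrt2(4\pi)^{d/4}$ for the normalizing constant in the statement, so that $\varphi$ is $c(t)$ times the projection onto coordinates $1,\dots,m$ of the infinite map $\Phi_t(x)=(e^{-\lambda_i t}e_i(x))_{i\ge 1}$. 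Since $de_0=0$, the squared differential is $\|d\varphi_p v\|^2=c(t)^2\sum_{i\ge 1}e^{-2\lambda_i t}(de_i|_p v)^2$, and the full (untruncated) version of this sum is exactly the on-diagonal mixed derivative $\partial_v^x\partial_v^y p_{2t}(x,y)\big|_{x=y=p}$. Thus the entire theorem reduces to two quantitative statements about $p_{2t}$: (i) $\partial_v^x\partial_v^y p_{2t}(p,p)=(4\pi\cdot 2t)^{-d/2}(2t)^{-1}\bigl(\tfrac12 g_p(v,v)+O(t)\bigr)$ with the $O(t)$ uniform over $\MM$; and (ii) the contribution of the indices $i>m$ to this sum is uniformly negligible once $m$ is large.

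For step (i) I would localize with a partition of unity to a geodesic ball of radius comparable to the injectivity-radius bound $\iota$ and use the short-time parametrix $p_s(x,y)=(4\pi s)^{-d/2}e^{-d_g(x,y)^2/4s}\bigl(u_0(x,y)+s\,u_1(x,y)+\cdots\bigr)$ plus an exponentially small remainder, together with $d_g(x,y)^2=|x-y|^2+O(|x-y|^4)$ in normal coordinates and $u_0(p,p)=1$; differentiating twice on the diagonal then produces the expansion above, and a short computation identifies the leading coefficient $\tfrac12(4\pi)^{-d/2}$, which is precisely what $c(t)^2$ is designed to cancel, giving $\|d\Phi_{t,p}v\|^2=(1+O(t))g_p(v,v)$. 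Because we only assume a Ricci \emph{lower} bound, the naive parametrix remainder estimates are not available, so I would instead obtain the two-sided Gaussian heat-kernel bounds $c_1 s^{-d/2}e^{-C_1 d_g^2/s}\le p_s(x,y)\le c_2 s^{-d/2}e^{-c_2 d_g^2/s}$ and the gradient estimate $|\nabla_x p_s(x,y)|\le C s^{-(d+1)/2}e^{-c\,d_g^2/s}$ via Li--Yau / Cheeger--Yau comparison, Bishop--Gromov volume doubling, and the non-collapsing $\mathrm{Vol}(B(x,r))\ge c(d,\kappa,\iota)\,r^d$ for $r\lesssim\iota$, so that all constants, and hence the remainder in (i), depend only on $(d,\kappa,\iota,V)$. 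Choosing $t_0=t_0(d,\kappa,\iota,\eps)$ so the $O(t)$ term is below $\eps/3$ handles this half.

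For step (ii), the tail $c(t)^2\sum_{i>m}e^{-2\lambda_i t}(de_i|_p v)^2\le c(t)^2\sum_{i>m}e^{-2\lambda_i t}\|\nabla e_i\|_\infty^2$ is controlled by combining a uniform eigenvalue lower bound $\lambda_i\ge c(d,\kappa,\iota)(i/V)^{2/d}$ with a uniform eigenfunction gradient bound $\|\nabla e_i\|_\infty\le C(d,\kappa,\iota)\lambda_i^{(d+2)/4}$. The first follows from the heat-trace bound $\sum_i e^{-\lambda_i s}=\int_M p_s(x,x)\,dx\le C(d,\kappa,\iota)\,V s^{-d/2}$ (from the uniform on-diagonal upper bound) by the usual Karamata/Tauberian estimate $N(\lambda)\lesssim V\lambda^{d/2}$; the second follows from $e^{-\lambda_i s}\nabla_x e_i(x)=\int_M\nabla_x p_s(x,y)\,e_i(y)\,dy$, Cauchy--Schwarz, the gradient Gaussian bound, and optimizing $s=1/\lambda_i$. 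Then $c(t)^2\sum_{i>m}\lambda_i^{(d+2)/2}e^{-2\lambda_i t}\le c(t)^2 C\sum_{i>m} i^{(d+2)/d}e^{-2c t (i/V)^{2/d}}$, a convergent series whose tail tends to $0$ as $m\to\infty$ for each fixed $t$, so there is $N_0(d,\kappa,\iota,\eps,V,t)$ making this below $\eps/3$; summing the two pieces gives $1-\eps<\|d\varphi_p v\|<1+\eps$, and these $m,N_0$ play the roles of $N,N_0$ in the statement. Finally, $\varphi$ is a genuine embedding because it is an injective immersion on a compact manifold: the metric distortion bound makes it an immersion; nearby points are separated since the pullback metric of $\Phi_t$ is uniformly $C^1$-close to $g$ on scales $\lesssim\iota$ (bounded-geometry comparison), macroscopically distant points are separated since $\|\Phi_t(x)-\Phi_t(y)\|^2=c(t)^2\bigl(p_{2t}(x,x)+p_{2t}(y,y)-2p_{2t}(x,y)\bigr)$ and the Gaussian bounds force $p_{2t}(x,y)\ll p_{2t}(x,x)$ when $d_g(x,y)$ is bounded below, and the truncation tail $\sum_{i>m}e^{-2\lambda_i t}(e_i(x)-e_i(y))^2\lesssim\sum_{i>m}\lambda_i^{d/2}e^{-2\lambda_i t}\to 0$ controls $\|\varphi(x)-\varphi(y)\|$ versus $\|\Phi_t(x)-\Phi_t(y)\|$.

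\emph{Main obstacle:} making every constant uniform over the whole family $\MM(d,\kappa,\iota,V)$ from only a Ricci \emph{lower} bound --- in particular the heat-kernel parametrix remainder, the on-diagonal two-sided bounds, and the eigenfunction gradient bounds --- since without two-sided curvature control one must route everything through Li--Yau gradient estimates, volume doubling, non-collapsing, and Moser iteration, verifying at each step that the constants depend only on $(d,\kappa,\iota,V)$. A secondary subtlety is the order of quantifiers: $t_0$ must be fixed before $m$, because the truncation threshold $N_0$ is allowed to (and genuinely must) depend on $t$ through the factor $c(t)^2$ and the rate of convergence of the eigenvalue tail.
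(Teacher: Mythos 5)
The paper itself does not prove \cref{embeddingthm}: it is imported verbatim from \cite{portegies2016embeddings} (with the normalization $t\mapsto 2t$ relative to B\'erard--Besson--Gallot), so the relevant comparison is with that source's proof. Your skeleton is the right one and matches how the result is organized there: write $\varphi$ as the truncation of the heat-kernel embedding, identify $\|d\varphi_p v\|^2$ with the on-diagonal mixed derivative of $p_{2t}$ minus a spectral tail, check that $c(t)^2$ exactly cancels the Euclidean leading term (your constant bookkeeping is correct), and fix $t_0$ before choosing $N_0=N_0(t)$ — the quantifier order you flag is indeed essential.

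The genuine gap is your step (i), which is the actual content of the theorem. Under only a Ricci \emph{lower} bound and an injectivity-radius lower bound, the Minakshisundaram--Pleijel parametrix cannot be run uniformly over $\mathcal{M}(d,\kappa,\iota,V)$: the coefficients $u_1,u_2,\dots$ and the remainder involve the full curvature tensor and its covariant derivatives, none of which are controlled in this class. The substitute you propose (Cheeger--Yau and Li--Yau comparison, volume doubling, non-collapsing, Moser iteration) produces two-sided Gaussian bounds and gradient bounds whose multiplicative constants and Gaussian exponents do not match (upper bounds with $e^{-d^2/(4+\alpha)t}$, lower bounds with $e^{-d^2/(4-\alpha)t}$, constants $c_1\neq c_2$), so it can bound $\partial_v^x\partial_v^y p_{2t}$ above and below at the right order in $t$ but can never yield the sharp asymptotic $\|d\Phi_{t,p}v\|^2=(1+O(t))\,g_p(v,v)$ with leading constant exactly $1$; without that sharp constant there is no $\epsilon$-almost-isometry. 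This is precisely why Portegies does not argue this way: the uniform small-$t$ statement is proved by contradiction and compactness --- take a sequence of manifolds, points and times $t\to 0$ violating the estimate, use the injectivity-radius bound to rule out collapse, pass to a measured Gromov--Hausdorff (Cheeger--Colding) limit after rescaling, and use convergence of heat kernels and their derivatives along the sequence to contradict the exact Euclidean computation in the limit. Your step (ii) (tail control via heat-trace/Weyl counting and eigenfunction gradient bounds) is sound in outline, and the secondary issues (injectivity of the truncated map at intermediate scales needs more than $C^0$ closeness of the pullback metric) are repairable; but as written, the proposal assumes away the hardest uniformity statement rather than proving it, so it does not constitute a proof of the theorem as stated.
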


\begin{lemma}
	\label{kernelerror}
We use the same notation as in the preceding theorem. Then for $\epsilon^\prime>0$, there exists $N_1=N_1(d,\kappa,\iota,V,\epsilon^\prime,t_0)$ such that when $N\geq N_1$ and $t_0 \leq t^\prime\leq 4$, we have 
\begin{align}
	\|K_N(t^\prime,p,\cdot)-K(t^\prime,p,\cdot)\| \leq \epsilon^\prime,
\end{align}
where $K(t,p,q)$ is the heat kernel:
\begin{align}
	K(t,p,q)=\sum_{i=0}^{\infty} e^{-\lambda_i t} e_i(p) e_i(q),
\end{align}
and $K_N(t,p,q)$ is the truncated heat kernel:
\begin{align}
	K_m(t,p,q)=\sum_{i=0}^{m} e^{-\lambda_i t} e_i(p) e_i(q).
\end{align} 
\end{lemma}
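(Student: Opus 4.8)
The plan is to read $\|\cdot\|$ as the $L^2(M,dq)$ norm in the free variable and reduce the claim to a uniform tail estimate for a spectral series. Since the $e_i$ are $L^2$-orthonormal, Parseval gives
\begin{align}
\left\| K(t',p,\cdot) - K_N(t',p,\cdot) \right\|_{L^2}^2
 = \left\| \sum_{i=N+1}^{\infty} e^{-\lambda_i t'} e_i(p)\, e_i(\cdot) \right\|_{L^2}^2
 = \sum_{i=N+1}^{\infty} e^{-2\lambda_i t'} e_i(p)^2 .
\end{align}
The elementary splitting $e^{-2\lambda_i t'} = e^{-\lambda_i t'}\cdot e^{-\lambda_i t'} \le e^{-\lambda_{N+1}t'}\, e^{-\lambda_i t'}$ for $i\ge N+1$ then yields
\begin{align}
\sum_{i=N+1}^{\infty} e^{-2\lambda_i t'} e_i(p)^2
 \le e^{-\lambda_{N+1}t'} \sum_{i=0}^{\infty} e^{-\lambda_i t'} e_i(p)^2
 = e^{-\lambda_{N+1} t'}\, K(t',p,p).
\end{align}
So it suffices to bound $K(t',p,p)$ from above and $\lambda_{N+1}$ from below, both uniformly over $M\in\mathcal M(d,\kappa,\iota,V)$ and $t'\in[t_0,4]$.

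For the first bound, note that $t'\mapsto K(t',p,p)=\sum_i e^{-\lambda_i t'}e_i(p)^2$ is nonincreasing, so $\sup_{t'\ge t_0} K(t',p,p)=K(t_0,p,p)$. A uniform Gaussian-type on-diagonal upper bound $K(s,p,p)\le C(d,\kappa,\iota)\max(s^{-d/2},1)$ is available from Li--Yau / Cheeger--Gromov--Taylor heat-kernel estimates: the Ricci lower bound $\kappa$ controls the volume-doubling constant via Bishop--Gromov, and combined with the injectivity-radius lower bound $\iota$ it produces a uniform lower bound on $\mathrm{Vol}(B(p,\sqrt{s}))$ on the relevant range of $s$. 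Hence $K(t_0,p,p)\le C_1(d,\kappa,\iota,t_0)$ for all $p$ and all $M$ in the family.

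For the second bound, integrating the same on-diagonal estimate gives the heat-trace bound $\sum_i e^{-\lambda_i s}=\int_M K(s,x,x)\,dx\le C(d,\kappa)\,V\,s^{-d/2}$ for small $s$; since $N(\lambda):=\#\{i:\lambda_i\le\lambda\}$ satisfies $N(\lambda)e^{-\lambda s}\le\sum_i e^{-\lambda_i s}$, optimizing over $s$ (taking $s\sim d/(2\lambda)$) yields $N(\lambda)\le C(d,\kappa,V)\,\lambda^{d/2}$ for $\lambda$ large, i.e. $\lambda_{N+1}\ge c(d,\kappa,V)(N+1)^{2/d}$ once $N$ is large. Combining the two bounds,
\begin{align}
\left\| K(t',p,\cdot) - K_N(t',p,\cdot) \right\|_{L^2}^2
 \le C_1(d,\kappa,\iota,t_0)\, \exp\!\big(-c(d,\kappa,V)\, t_0\, (N+1)^{2/d}\big),
\end{align}
and choosing $N_1=N_1(d,\kappa,\iota,V,\epsilon',t_0)$ so that the right-hand side is $<(\epsilon')^2$ for all $N\ge N_1$ finishes the proof.

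The main obstacle is not a single computation but marshalling the uniform geometric-analytic inputs — the on-diagonal heat-kernel upper bound and the Weyl-type eigenvalue lower bound — with constants depending only on $(d,\kappa,\iota,V)$; this is where the cited uniform estimates over the bounded-geometry family $\mathcal M(d,\kappa,\iota,V)$ do the work, and one must check that the small-time regime needed for the heat-trace bound is compatible with the constraint $t'\ge t_0$. Finally, if $\|\cdot\|$ is instead intended as the $L^\infty$ (or $L^2(M\times M)$) norm, the same scheme applies: by Cauchy--Schwarz $|K(t',p,q)-K_N(t',p,q)|\le\big(\sum_{i>N}e^{-\lambda_i t'}e_i(p)^2\big)^{1/2}\big(\sum_{i>N}e^{-\lambda_i t'}e_i(q)^2\big)^{1/2}$, and each factor is controlled as above after writing $e^{-\lambda_i t'}\le e^{-\lambda_{N+1}t'/2}e^{-\lambda_i t'/2}$, losing only a harmless factor of $2$ in the exponent (and replacing $t_0$ by $t_0/2$ in the constants).
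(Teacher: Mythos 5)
Your proof is correct, but it is not the paper's proof, because the paper does not prove this lemma at all: it is stated as an imported result, with the citation attached to the preceding embedding theorem (Portegies), and is simply invoked later when choosing $t$, $t'$ and $m$. Your route --- Parseval in the second variable, the tail estimate $e^{-2\lambda_i t'}\le e^{-\lambda_{N+1}t_0}e^{-\lambda_i t'}$ for $i>N$, a uniform on-diagonal bound $K(t_0,p,p)\le C_1(d,\kappa,\iota,t_0)$, and a Weyl-type counting bound $\lambda_{N+1}\ge c(d,\kappa,\iota,V)(N+1)^{2/d}$ --- is a clean self-contained alternative, and its two geometric inputs are exactly the uniform estimates the paper already keeps on hand: the Li--Yau upper bound (\cref{thm:upper_app}) combined with Croke's volume estimate (\cref{injvol}) yields your on-diagonal bound uniformly over the family, and the eigenvalue lower bound $\lambda_k\ge c\,k^{2/d}$ already used in \cref{sec:reach} could replace your heat-trace/Chebyshev step. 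What your argument buys is explicit, checkable dependence of $N_1$ on $(d,\kappa,\iota,V,\epsilon',t_0)$, which is precisely the uniformity over $\mathcal{M}$ that the paper needs; what the citation buys the paper is a formulation already matched to the norm and constants of the embedding theorem.

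Two small points to tighten. First, the heat-trace constant should carry the injectivity-radius dependence: the bound $\int_M K(s,x,x)\,dx\le C(d,\kappa,\iota)\,V\,s^{-d/2}$ holds for $s$ below a constant multiple of $\iota^2$, since the ball-volume lower bound $\mathrm{Vol}\bigl(B_x(\sqrt{s})\bigr)\ge c(d)\min\bigl(\sqrt{s},\iota/2\bigr)^d$ is where $\iota$ enters; this does not change the admissible parameter list $(d,\kappa,\iota,V,\epsilon',t_0)$, and the restriction to small $s$ is compatible with taking $s$ of order $d/\lambda$ for large $\lambda$. Second, the paper never specifies which norm $\|\cdot\|$ is meant; your closing remark covering both the $L^2$ and sup-norm readings via Cauchy--Schwarz (with the on-diagonal bound applied at time $t_0/2$) is the right hedge, and since your on-diagonal estimate is uniform in $p$ and in $M\in\mathcal{M}$, the resulting $N_1$ is as well.
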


We select $t=t_0/2$, $t^\prime = t_0$ (if $t_0>4$, we set $t=2$ and $t^\prime=4$) and $m=\max\left\{N_0,N_1\right\}+1$ to make the results above both hold.

In addition, $\varphi:M\rightarrow \varphi(M)$ is homeomorphism, so $\dim M=\dim \varphi(M)$, and $\varphi$ is an embedding, also an immersion, thus $\varphi$ is local diffeomorphism, and $\varphi$ is bijective, so $\varphi$ is diffeomorphism. Thus $\varphi(M)$ is a smooth manifold.\cite{lee2003introduction}

\mmp{
\begin{enumerate}
	\item We have $n$ samples $\left\{x_{1:n}\right\} \overset{i.i.d.}{\sim} M$, where $M$ is a d-dimensional manifold, and $x_i \in \mathbb{R}^D$.
	\item Using Embedding Algorithm (DM), we have $\left\{y_{1:n}\right\}=\hat{\varphi}(x_{1:n}) \in \mathbb{R}^m$ where $m\geq d$.
	\item We estimate Tangent Space $\Tilde{T_i}$, and want to estimate $\Vert Err(\Tilde{T_i}) \Vert$.
\end{enumerate}
shall we be consistent about $x\in M, \,y\in\varphi(M)$?
}

Given a finite sample $\mathcal{X}_n=\left\{x_1,\dots,x_n\right\}$ the DM algorithm 
 constructs a similarity matrix to measure the pairwise relationships between data points,where the commonly used kernel is the Gaussian kernel, defined as $k(x_i, x_j) = \exp(-\|x_i - x_j\|^2 / h)$, where $h$ is a scale parameter, and this similarity matrix is then normalized, we denote it as Laplacian graph which is the approximation of Laplacian-Beltrami Operator. By performing eigenvalue decomposition on normalized Laplacian graph, the algorithm extracts the dominant eigenvalues and eigenvectors. These eigenvectors, scaled by their corresponding eigenvalues, define the diffusion coordinates, providing a low-dimensional embedding of the data. This embedding preserves the global geometry of the dataset while emphasizing its intrinsic structure.

In this paper, we will focus mainly on the geometric properties of the
DM $\varphi(M)$, w.r.t. the original manifold $M$. The results we obtain
will be useful in characterizing the output of the DM algorithm in
finite sample settings, and we apply them specifically to the
estimation of the tangent subspace $\left\{T_{\varphi(x_i)}\varphi(x_i)\right\}_{i=1}^n$.

\cite{berard1994embedding} have shown that, in the limit of large $m$ and small $t$, $\varphi(M)$ is isometric with $M$, and that this is possible approximatively with a finite $m$ for manifolds with bounded diameter and Ricci curvature bounded from below. In spite of these seemingly encouraging results, the DM can be highly unstable even for apparently ``nice'' manifolds. The intuitive explanation is the fact that, even if $M$ is compact and smooth to order $k$, the local interactions between the manifold curvature and the manifold reach (note that these are not independent quantities) can exert a strong influence of the Laplacian eigenfunctions. 

\subsection{Assumptions}

Thus, in predcting smoothness (w.r.t. Sobolev norms) and geometric properties of $\varphi(M)$, one needs to consider $\tau_M, \iota_M, \kappa_M$ in addition the smoothess of $M$ (made more precise below). We will give the assumptions for the set of manifolds we will perform DM, and then discuss basic properties of $M$ based on these assumptions. We will see all these assumptions intuitively ensure our manifolds have very good shape, which can help us avoid extremely bizarre situations. And we will use them to derive existence of the geometric bound after DM, which are critical in estimating the convergence rate.

\begin{assumption}[Curvature]
	\label{as:kappa}
	The absolute value of sectional curvature of $M$ is bounded by $\kappa$, that is $|K(u,v)|\leq \kappa$. which immediately implies Ricci curvature of $M$ is bounded below by $-\kappa(d-1)$ since $Ric(v,v)=\frac{1}{d-1}\sum_{i=1}^{d}K(v,x_i)$ where $\left\{v, x_2,\dots,x_d\right\}$ are orthonormal basis. This assumption will be used in \cref{embed}, \cref{sec:phim-lambdam}, \cref{thm:upperbound}, \cref{thm:lowerbound}. 
  \end{assumption} 
Bounded Ricci curvature prevents the submanifold from having extreme geometric variations or "infinite negative curvature" in any direction. This geometric control ensures that local neighborhoods behave predictably, which is crucial for DM that rely on local structure.

\begin{assumption}[Reach]
	\label{as:reach}
  The reach $\tau_M$ of $M$ is bounded below by $\tau_{min}$. This assumption will be used in \cref{sec:reach}.
\end{assumption}
Positive minimum reach is crucial because it ensures the manifold doesn't come too close to self-intersecting and has bounded curvature, making it possible to reliably reconstruct the manifold from discrete samples.

\begin{assumption}[Volume]
	\label{ass:V}
  The volume of $M$ is bounded below by $V_1$ and bounded above by $V_2$. This assumption will be used in \cref{embed}, \cref{sec:phim-lambdam}.
\end{assumption}

\begin{assumption}[Smoothness]
	\label{as:smooth}
For $k$ and $\textbf{L}:=\left(L_\perp,L_3,\dots,L_k\right)$, we assume there exists a local one-to-one parameterization $\Psi_p$ for all $p \in M$:
	\begin{align}
		\Psi_p :B_{T_pM}(0,r) \rightarrow M \quad \text{by} \quad \Psi_p(v)=p+v+N_p(v)
	\end{align}
	for some $r\geq \frac{1}{4L_\perp}$ with $N_p(v) \in C^k(B_{T_pM}(0,r),\mathbb{R}^D)$ such that 
	\begin{align}
		N_p(0)=p,\  d_0N_p=0,\ \|d_v^2N_p\| \leq L_\perp, \ \|d_v^iN_p\|\leq L_i\ \text{for }i=3,\dots,k
	\end{align}
	holds for all $\|v\|\leq \frac{1}{4L_\perp}$. This assumption will be used in \cref{sec:phim-L}
\end{assumption}

Smoothness of order $k$ implies manifolds can be approximated locally by multilinear map over tangent space with bounded norm.

In addition, we make the following more technical regularity
assumption that will be used in \cref{sec:phim-higher-deri},\cref{sec:reach}.

\begin{assumption}[Regular Condition]
	We assume that our estimating manifold family is a subset of $\MM$ such that the uniform constants $C_1(\MM)$, $C_2(\MM)$ ensure that \cref{star} holds for $M$.
\end{assumption}

\begin{assumption}[Christoffel Symbols]
	\label{as:Christoffel}
	In the normal coordinate chart, the derivatives with order not above $k-2$ of Christoffel symbol (including Christoffel symbol itself) have the uniform upper bound only depending on its order, i.e.
	\begin{align}
		|\frac{\partial^l \Gamma_{ij}^k}{\partial x_{i_1}\cdots\partial x_{i_l}}| \leq C(l) \quad \text{for } 1\leq i,j,k\leq d\ \text{ and }\  l\leq k-2
	\end{align}
\end{assumption}
Christoffel symbol measures the change of a vector along a curve due to curvature, thus bounded Christoffel symbol control the curvature in some manner.

Let $\mathcal{M}(d,\kappa,\tau_{min},V,k,\textbf{L},\Gamma)$ be the set of compact connected submanifolds $M \subset \mathbb{R}^D$, with dimension $d$ satisfying \cref{as:kappa}--\cref{as:Christoffel}. For simplicity, in the rest of our paper we always assume that the manifold is $d$-dimensional and the ambient dimension is $D$, we also use the abbreviation $\mathcal{M}$ for $\mathcal{M}(d,\kappa,\tau_{min},V,k,\textbf{L},\Gamma)$, and sometimes we include some of the parameters above to indicate that the assumptions corresponding to these parameters are satisfied.

We also assume that the sampling density on $M$ does not deviate too much from uniform.
\begin{assumption}[Density]
\label{ass:density}
  Let $\mathcal{P}_{f_{\min},f_{\max}}$ denote the set of distribution $P$ with support on $M \in \mathcal{M}$, and the density function $f$ of $P$ with respect to Hausdorff measure such that $0<f_{\min}\leq f \leq f_{\max}<\infty$. This assumption will be used in \cref{sec:density}.
\end{assumption}

All the assumptions above, perhaps with the exception of \cref{as:smooth} and \cref{as:Christoffel}, are generically present in the manifold learning literature.

\subsection{Direct Consequences of Assumptions}
Here we list some direct results for $M \in \MM$ from assumptions which will also be used in our following proof.

\begin{corollary}[Complete Manifold]
	Any compact Riemannian manifold is geodesically complete according to Hopf-Rinow theorem.\cite{do1992riemannian}
\end{corollary}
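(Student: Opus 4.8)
The plan is to obtain geodesic completeness as a consequence of metric completeness, which in turn is immediate from compactness, and then to invoke the Hopf--Rinow theorem. First I would recall that the Riemannian metric $g$ on $M$ induces a distance $d_g(p,q) = \inf_\gamma \mathrm{length}_g(\gamma)$, the infimum taken over piecewise-$C^1$ curves $\gamma$ joining $p$ to $q$; since every manifold in $\mathcal{M}$ is connected, this infimum is finite for all $p,q$, and $(M,d_g)$ is a genuine metric space whose induced topology agrees with the manifold topology of $M$ (a standard fact; see \cite{do1992riemannian}).

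Next, because $M$ is compact, the metric space $(M,d_g)$ is complete: any Cauchy sequence admits a convergent subsequence by sequential compactness, and a Cauchy sequence possessing a convergent subsequence converges to the same point. Equivalently --- and this is the route that avoids mentioning Cauchy sequences at all --- every closed subset of the compact space $M$ is compact, so $M$ trivially enjoys the Heine--Borel property that closed and bounded subsets are compact.

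Finally I would apply the Hopf--Rinow theorem \cite{do1992riemannian}, which states that for a connected Riemannian manifold the following are equivalent: $(M,d_g)$ is complete as a metric space; $M$ satisfies the Heine--Borel property; and $M$ is geodesically complete, i.e.\ $\exp_p$ is defined on all of $T_pM$ for some (hence every) $p \in M$. Having verified the first (or second) condition, geodesic completeness follows at once. There is no real obstacle in this argument --- it is entirely classical; the only point deserving a word of care is that Hopf--Rinow requires genuine completeness of the length metric $d_g$ together with connectedness of $M$, both of which are guaranteed here by compactness and by the definition of $\mathcal{M}$.
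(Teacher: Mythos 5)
Your argument is correct and is essentially the same route the paper takes: the paper simply invokes the Hopf--Rinow theorem (via the cited textbook) to pass from compactness, hence metric completeness, to geodesic completeness, exactly as you do. Your only extra care --- connectedness --- is indeed guaranteed by the definition of $\mathcal{M}$, so nothing further is needed.
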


\begin{corollary}[The injectivity radius]
	\label{injradius}
	The injectivity radius $\iota_M$ of $M$ is bounded below by $ \pi \tau_M$\cite{aamari2019estimating}, which implies $\iota_M$ is bounded by $\pi \tau_{\min}$ for all $M \in \MM$. We use it to make sure DM is an embedding.
\end{corollary}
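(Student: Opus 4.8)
The inequality $\iota_M\ge\pi\tau_M$ is exactly the statement borrowed from \cite{aamari2019estimating}, and once it is in hand the second claim is immediate: \cref{as:reach} gives $\tau_M\ge\tau_{min}$, so $\iota_M\ge\pi\tau_M\ge\pi\tau_{min}$ for every $M\in\MM$. The plan is therefore to recall the geometric mechanism behind $\iota_M\ge\pi\tau_M$. The starting point is Klingenberg's description of the injectivity radius of a compact manifold: $\iota_M=\min\{\operatorname{rad}_{\mathrm{conj}}(M),\ \tfrac12\,\ell(M)\}$, where $\operatorname{rad}_{\mathrm{conj}}(M)$ is the conjugate radius and $\ell(M)$ is the length of the shortest closed geodesic of $M$. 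It then suffices to bound each of these two quantities below by $\pi\tau_M$.

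First I would control the second fundamental form. A standard consequence of positive reach (Federer, and in the form used here Niyogi--Smale--Weinberger) is that for every $p\in M$ and every unit $v\in T_pM$ one has $\|\mathrm{II}_p(v,v)\|\le 1/\tau_M$; intuitively the osculating circle of any geodesic through $p$ has radius at least $\tau_M$. Feeding this into the Gauss equation $K(u,v)=\langle \mathrm{II}(u,u),\mathrm{II}(v,v)\rangle-\|\mathrm{II}(u,v)\|^2$ for an orthonormal pair $u,v$ shows that all sectional curvatures of $M$ are at most $1/\tau_M^2$. By the Rauch comparison theorem (equivalently, by comparing with the round sphere of curvature $1/\tau_M^2$), an upper sectional-curvature bound of $1/\tau_M^2$ forces $\operatorname{rad}_{\mathrm{conj}}(M)\ge\pi\tau_M$.

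Next I would bound the shortest closed geodesic. If $\gamma$ is a unit-speed geodesic of $M$, then as a curve in $\R^D$ its curvature vector is exactly $\mathrm{II}(\dot\gamma,\dot\gamma)$, of norm at most $1/\tau_M$ by the previous step. A closed $C^2$ curve in Euclidean space with pointwise curvature $\le 1/\tau_M$ has length at least $2\pi\tau_M$: this is Fenchel's theorem, since the total curvature of a closed space curve is at least $2\pi$ and is bounded above by (length)$\times(1/\tau_M)$. Hence $\ell(M)\ge 2\pi\tau_M$, so $\tfrac12\ell(M)\ge\pi\tau_M$. Combining this with the conjugate-radius bound yields $\iota_M\ge\pi\tau_M$, and the uniform estimate $\iota_M\ge\pi\tau_{min}$ follows as above.

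The only delicate points are the precise hypotheses behind Klingenberg's splitting of $\iota_M$ (completeness and compactness of $M$, both guaranteed here since $M\in\MM$ is compact) and the uniform bound $\|\mathrm{II}\|\le 1/\tau_M$, whose proof relies on the unique-nearest-point property inside the $\tau_M$-tube around $M$. Neither is hard, but this is where all the work of \cite{aamari2019estimating} actually sits, so in the paper it is legitimate to cite it rather than reprove it.
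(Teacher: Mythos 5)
Your proposal is correct, and it is consistent with what the paper does: the paper offers no argument of its own for $\iota_M\ge\pi\tau_M$ beyond the citation to \cite{aamari2019estimating}, plus the one-line observation that \cref{as:reach} then gives $\iota_M\ge\pi\tau_{\min}$ uniformly over $\MM$, which is exactly your second step. Your reconstruction of the cited bound (Klingenberg's splitting of $\iota_M$ into conjugate radius and half the shortest closed geodesic, the Niyogi--Smale--Weinberger bound $\|\mathrm{II}\|\le 1/\tau_M$ combined with the Gauss equation and Rauch to get the conjugate radius, and Fenchel's theorem for the closed-geodesic half) is the standard proof underlying that reference, so nothing further is needed.
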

The injectivity radius bounded from below implies we can find normal coordinate chart with the uniform radius.

\begin{corollary}[Diameter]
	For $M \in \mathcal{M}$, we have
	\begin{align}
		\operatorname{diam}(M)\leq \frac{C_d}{\tau_M^{d-1}f_{\min}}\leq \frac{C_d}{\tau_{\min}^{d-1}f_{\min}}
	\end{align}
	where $C_d$ is a constant only depending on $d$\cite{aamari2018stability}. Thus the diameter of $M \in \mathcal{M}$ have an uniform upper bound. We will use it in \cref{sec:phim-lambdam}.
\end{corollary}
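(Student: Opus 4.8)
The plan is to bound the diameter by a volume–packing argument: a manifold of bounded reach cannot have a large diameter without accumulating a total volume exceeding the (finite) bound forced by the density lower bound. First I would record the elementary consequence of \cref{ass:density}: since $\int_M f\,d\mathrm{vol}_M=1$ and $f\ge f_{\min}$ pointwise, we have $Vol(M)\le 1/f_{\min}$.

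The geometric core is a uniform lower bound on the volume of small geodesic balls: for every $M\in\MM$, every $p\in M$ and every $0<r\le r_0:=\tau_M/2$ one has $\mathrm{vol}\bigl(B_M(p,r)\bigr)\ge c_d\,r^d$ with $c_d$ depending only on $d$. To see this, the reach bound $\tau_M\ge\tau_{\min}$ controls the second fundamental form (principal curvatures $\le 1/\tau_M$ in absolute value), so by the Gauss equation the sectional curvature of $M$ is bounded above by $1/\tau_M^2$; combined with the injectivity radius estimate $\iota_M\ge\pi\tau_M$ from \cref{injradius}, Günther's volume comparison theorem gives the claimed lower bound on ball volumes. (Equivalently, for $r<\tau_M$ the nearest-point projection onto $T_pM$ maps $B_M(p,r)$ diffeomorphically onto a region of $T_pM$ containing a Euclidean ball of radius $\ge r/2$ with uniformly bounded Jacobian, which is the form this estimate takes in the cited \cite{aamari2018stability}.)

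Then I would run the packing argument. Since $M$ is compact, pick $p,q\in M$ with $d(p,q)=L:=\operatorname{diam}(M)$, and (using that $M$ is geodesically complete, so Hopf–Rinow applies) let $\gamma:[0,L]\to M$ be a unit-speed minimizing geodesic from $p$ to $q$. Put $N:=\lfloor L/(2r_0)\rfloor$ and $p_i:=\gamma(2r_0 i)$ for $i=0,\dots,N$. Every subarc of a minimizing geodesic is minimizing, so $d(p_i,p_j)=2r_0|i-j|\ge 2r_0$ for $i\ne j$, whence the balls $B_M(p_i,r_0)$ are pairwise disjoint. Summing the small-ball volume bound,
\[
\frac{1}{f_{\min}}\ \ge\ Vol(M)\ \ge\ \sum_{i=0}^{N}\mathrm{vol}\bigl(B_M(p_i,r_0)\bigr)\ \ge\ (N+1)\,c_d\,r_0^{d}\ \ge\ \frac{L}{2r_0}\,c_d\,r_0^{d}\ =\ \frac{c_d}{2}\,L\,r_0^{d-1}.
\]
Substituting $r_0=\tau_M/2$ and solving for $L$ gives $\operatorname{diam}(M)=L\le C_d/(\tau_M^{d-1}f_{\min})$ with $C_d:=2^d/c_d$ depending only on $d$; since $\tau_M\ge\tau_{\min}$, the right-hand side is at most $C_d/(\tau_{\min}^{d-1}f_{\min})$, which is the assertion.

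\textbf{Main obstacle.} Every step except the uniform small-ball volume lower bound is elementary bookkeeping. The substantive input is that estimate, which requires either Günther comparison (hence the curvature-from-reach bound and the injectivity-radius-from-reach bound, both available in the excerpt) or the quantitative bi-Lipschitz behaviour of the nearest-point projection inside the reach; the only mild care needed is to track that the radius $r_0$ at which these estimates are valid is a fixed fraction of $\tau_M$, so that the resulting constant $C_d$ depends on $d$ alone.
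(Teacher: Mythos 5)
Your proposal is correct, but note that the paper does not prove this statement at all: it is quoted as a black-box consequence of \cref{as:reach} and \cref{ass:density} with a citation to Aamari--Levrard, so what you have written is essentially a reconstruction of the cited lemma rather than a parallel to an in-paper argument. Your route is sound: $Vol(M)\le 1/f_{\min}$ from $\int_M f\,d\mathrm{vol}=1$ and $f\ge f_{\min}$; a uniform small-ball bound $\mathrm{vol}(B_M(p,r))\ge c_d r^d$ for $r\le \tau_M/2$, which is legitimately obtained from the reach controlling the second fundamental form together with $\iota_M\ge\pi\tau_M$ (\cref{injradius}) and G\"unther comparison; and then the disjoint balls centered at points spaced $2r_0$ apart along a minimizing geodesic joining a diameter-realizing pair (Hopf--Rinow gives the geodesic, and minimality of subsegments gives disjointness), which yields $\operatorname{diam}(M)\le C_d/(\tau_M^{d-1}f_{\min})$ and hence the uniform bound via $\tau_M\ge\tau_{\min}$. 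Two small remarks: the Gauss equation gives $|K(u,v)|\le 2/\tau_M^2$ rather than $1/\tau_M^2$ (since $K(u,v)=\langle \mathrm{II}(u,u),\mathrm{II}(v,v)\rangle-\|\mathrm{II}(u,v)\|^2$), which only changes the dimensional constant $c_d$ and not the conclusion; and the argument in the cited reference is phrased instead as a covering/connectedness count of balls of radius comparable to $\tau_M$, so your geodesic-chain packing is a mildly different but equally valid organization of the same volume-versus-reach mechanism. Either way the constant ends up depending on $d$ alone, as required.
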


\subsection{Flowchart}
Here we use a flowchart to show the algorithm we want to run. In order to run this algorithm, we need to ensure a series of manifold properties, which will be proved in the next chapter.

\tikzstyle{startstop} = [rectangle,rounded corners, minimum width=3cm,minimum height=1cm,text centered, draw=black,fill=red!30]
\tikzstyle{io} = [trapezium, trapezium left angle = 70,trapezium right angle=110,minimum width=3cm,minimum height=1cm,text centered,draw=black,fill=blue!30]
\tikzstyle{process} = [rectangle,minimum width=3cm,minimum height=1cm,text centered,text width =3cm,draw=black,fill=orange!30]
\tikzstyle{decision} = [diamond,minimum width=3cm,minimum height=1cm,text centered,draw=black,fill=green!30]
\tikzstyle{arrow} = [thick,->,>=stealth]
\begin{figure}[H]
	\centering
	\tikzstyle{startstop} = [rectangle,rounded corners, minimum width=3cm,minimum height=1cm,text centered, draw=black]
	\tikzstyle{io} = [trapezium, trapezium left angle = 70,trapezium right angle=110,minimum width=3cm,minimum height=1cm,text centered,draw=black]
	\tikzstyle{process} = [rectangle,minimum width=3cm,minimum height=1cm,text centered,text width =3cm,draw=black]
	\tikzstyle{decision} = [diamond,minimum width=3cm,minimum height=1cm,text centered,draw=black]
	\tikzstyle{arrow} = [thick,->,>=stealth]
	\begin{tikzpicture}[node distance=2cm]
		\node (geo-assump) [startstop, text width=6cm] {Manifold Family $\MM$ with the intrinsic dimension is $d$, reach $\geq$ $\tau$, Ricci curvature $\geq -\kappa(d-1)$, \par  volume  $V \in (V_1, V_2)$, the smoothness condition (\cref{as:smooth}), Christoffel symbols and their higher order derivatives are bounded.};
		
		\node (data-assump) [startstop,right of =geo-assump , xshift=5cm,  text width=6cm] {Data points $\mathcal{X}_n=\left\{X_{1},\dots,X_{n}\right\}$ with uniform density.};
		
		\node (data-compute) [process, below of=data-assump,yshift=-0.5cm,  text width=6cm ] {Compute bandwidth $h=(\frac{\log n}{n})^{\frac{1}{4d+13}}$.};
		
		\node (inject) [process,below of=geo-assump,yshift=-0.5cm, text width=6cm] {Control injectivity radius $\geq \iota$, and the diameter $\leq \text{diam}$, and select error $\epsilon$. };
		
		\node (tm) [process,below of=inject, text width=7cm] {Compute $t=\min\left\{t_0(d,\kappa,\iota,\epsilon),4,\frac{\iota^2}{4}\right\}$, $\epsilon^\prime=\frac{(4\pi t)^{-\frac{d}{2}}}{8}\exp\left(-\frac{\beta^2t}{4}-\frac{2\sqrt{3dt}\beta}{3}\right)$, $m=\max\left\{N_0(d,\kappa,\iota,\epsilon,V,\frac{t}{2}),N_1(d,\kappa,\iota,V,\epsilon^\prime,t)\right\}+1$
		};
		
		\node (matrix) [process,below of=data-compute,yshift=0cm] {Construct $W$ and $D$ and $L_n$.};
		
		\node (eigen) [process,below of=tm,yshift=-0.5cm, xshift=3.5cm,text width=6cm ] {Compute the first $m$ eigenvalues $\lambda_i$ and normalized eigenvectors $e_i$ of $L_n$.};
		
		\node (embed) [process,below of=eigen,yshift=0cm,text width=6cm  ] {Embed $X_j$ to $(t)^\frac{d+2}{4}\sqrt{2}(4\pi)^\frac{d}{4}\left(e^{-\lambda_i t/2}e_i(X_j)\right)_{i=1}^m$.};

		\node (stop) [startstop,below of=embed] {Estimate tangent space at $\tiln=n^{\frac{d}{(8d+16)k}}$ embedding points};
		
		\draw [arrow] (geo-assump) -- (inject);
		\draw [arrow] (data-assump) -- (data-compute);
		\draw [arrow] (inject) -- (tm);
		\draw [arrow] (data-compute) -- (matrix);
		\draw [arrow] (tm) -- (eigen);
		\draw [arrow] (matrix) -- (eigen);
		\draw [arrow] (eigen) -- (embed);
		\draw [arrow] (embed) -- (stop);
	\end{tikzpicture}
    \caption{Flowchart}  
\label{fig:flowchart} 
\end{figure}
\section{Properties of the diffusion maps embeeding $\varphi(\mathcal{M})$}
\label{phiM}

In this section, we will discuss the properties of $\varphi(M)$, and
we will first give some uniform estimate of geometric quantities for
all $M \in \mathcal{M}$, and we will prove that the family
$\varphi(\mathcal{M}) \in \mathcal{M}(\tau_{\varphi(M)},\textbf{L}^\prime)$
where $\textbf{L}^\prime:=\left(L_\perp^\prime,\dots,L_k^\prime\right)$ and the
density function of $\varphi(M)\in  \varphi(\mathcal{M})$ admits an
uniform upper and lower bounds.


\subsection{Bounds on $\lambda_m$ and the (higher) derivatives of $\phi$}
\label{sec:smooth}

Estimating and uniformly bounding the $k$-th derivative of $\phi$,
$\|d^k\varphi\|$, plays an important role in controlling other
geometric quantities such as $\tau_{min}$ and proving existence of
local ono-to-one parametrization of $\varphi(\mathcal{M})$. Recall that $k$ is the smoothness of $\MM$. Because the $i$-th component of map $\varphi:M \rightarrow \mathbb{R}^m$
is $e_i$ multiple of $e^{-\lambda_i t_0/2}$ which is a bounded scalar when $t_0$ is fixed, we can ignore
it when we estimate the upper bound.

We notice that $\varphi(M)$ is a $d$-dimensional manifold embedded in $\mathbb{R}^m$, then $d\varphi_p$ is a map such that 
\begin{align}
	d\varphi_p:T_p M \rightarrow T_{\varphi(p)}\varphi(M)\hookrightarrow\mathbb{R}^m
\end{align} 
and we can treat the tangent vector in $\mathbb{R}^m$ as a tangent vector in $T_\varphi(p) \varphi(M)$ through the natural isomorphism. This isomorphism maps local basis to local basis, thus this map is also isometric. In the following, we will consider $T_{\varphi(p)} \varphi(M) \cong \mathbb{R}^d$ and $T_{\varphi(p)} \varphi(M) \subset \mathbb{R}^m$ as equivalent through this mapping.
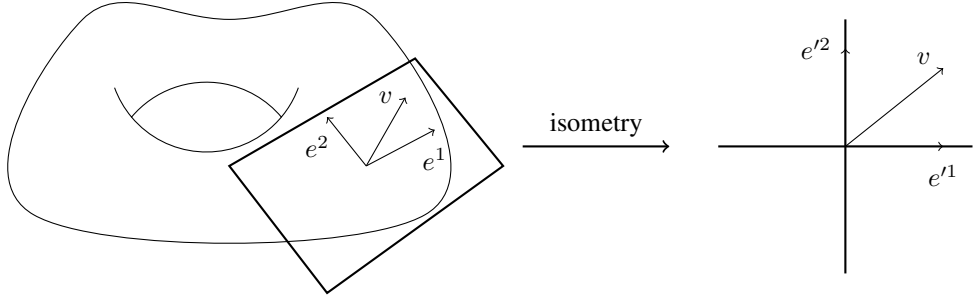
\begin{figure}[H]
	\centering
	\begin{tikzpicture}[scale=1.3]
		\draw[smooth cycle,tension=.7] plot coordinates{(0,0) (0.5,2) (2,2) (3.5,2) (4,0)};
		\coordinate (A) at (1,1);
		\draw (A) arc(140:40:1) (A) arc(-140:-20:1) (A) arc(-140:-160:1);

		\draw[thick] (2,0.5) -- (3,-0.8) -- (4.8,0.5) -- (3.9,1.6) -- cycle;
		\draw[->] (3.4,0.5) -- (4.1,0.87);
		\draw[->] (3.4,0.5) -- (3.0,1);
		\node at (4.1,0.57) {\(e^1\)};
		\node at (2.9,0.7) {\(e^2\)};
		
		\draw[->] (3.4,0.5) -- (3.8,1.2);
		\node at (3.6,1.2) {\(v\)};
		
		\draw[->, thick] (5,0.7) -- (6.5,0.7) node[midway, above] {isometry};
		
		\draw[thick] (7,0.7) -- (9.6,0.7);
		\draw[thick] (8.3,-0.6) -- (8.3,2);
		
		\draw[->] (8.3,0.7) -- (9.3,0.7);
		\draw[->] (8.3,0.7) -- (8.3,1.7);
		\node at (9.3,0.4) {\(e'^1\)};
		\node at (8.0,1.7) {\(e'^2\)};
		
		\draw[->] (8.3,0.7) -- (9.3,1.5);
		\node at (9.1,1.6) {\(v\)};
		
	\end{tikzpicture}
	\caption{Isometry}
\end{figure}

\subsubsection{Upper Bound of $\lambda_m$}
\label{sec:phim-lambdam}

The most classical result for estimating eigenvalues is Weyl's law:
\begin{align}
N(\lambda)\sim \frac{\omega(d) Vol(M) \lambda^\frac{d}{2}}{(2\pi)^{d}},
\end{align}
where $N(\lambda)$ is the number of eigenvalues less than or equal to $\lambda$. 

If we ask $\lambda=\lambda_k$, the above is equivalent to
\begin{align}
	\lambda_k^{d/2} \sim \frac{(2\pi)^d k}{\omega(n) Vol(M)}
\end{align}

Weyl's law provides an asymptotic expression for the eigenvalues of the Laplace-Beltrami operator, offering profound insights into their growth rates. However, in our paper, strict control for eigenvalues is essential. This requires not only asymptotic estimates but also rigorous upper bounds that hold universally. A notable result for the upper bound was established by Li-Yau, who derived explicit upper bounds for the eigenvalues under geometric constraints, which align with the assumptions in our study.

We estimate the upper bound of eigenvalues. If the lower bound of the Ricci curvature $\kappa_M$ of $M$ is less than $0$, we have 
\begin{align}
	\lambda_m \leq 
	\left\{
	\begin{array}{l}
		\displaystyle \frac{(2\beta+1)^2}{4}(-\kappa_M) + 4(1+2^\beta)^2\pi^2 \left(\frac{\sinh \sqrt{-\kappa_M}\text{diam}}{\sqrt{-\kappa_M}\text{diam}}\right)^\frac{2d-2}{d} \\
		\hspace{10em} \times \left((m+1)\frac{\omega(d-1)}{d}\frac{1}{V}\right)^\frac{2}{d} \\[1.5ex]
		\text{when } d=2(\beta+1), \beta = 0,1,2\dots \\[2.5ex]
		\displaystyle \frac{(2\beta+2)^2}{4}(-\kappa_M) + 4(1+\pi^2)(1+2^{2\beta})^2 \left(\frac{\sinh \sqrt{-\kappa_M}\text{diam}}{\sqrt{-\kappa_M}\text{diam}}\right)^\frac{2d-2}{d} \\
		\hspace{10em} \times \left((m+1)\frac{\omega(d-1)}{d}\frac{1}{V}\right)^\frac{2}{d} \\[1.5ex]
		\text{when } d=2\beta+3, \beta = 0,1,2\dots
	\end{array} 
	\right. ,
\end{align}

if $M$ has non-negative Ricci curvature $\kappa_M$, we have
\begin{align}
	\lambda_m \leq (d+4)d^{1-\frac{2}{d}}\left(\frac{m+1}{V}\omega(d-1)\right)^\frac{2}{d} ,
\end{align}
where $d$ is the dimension of $M$, $V$ is the volume of $M$ and $\omega(n)$ is the volume of $S^n$ in $\mathbb{R}^{n+1}$.\cite{esteigenvalue}

Since for $M \in \mathcal{M}$, the volume is greater that $V_1$ and the lower bound of the Ricci curvature is greater than $\kappa$, so $\frac{1}{V}\leq \frac{1}{V_1}$ and $-\kappa_M\leq -\kappa$, thus we have 
\begin{align}
	\lambda_m \leq \left\{
	\begin{array}{cc}
		C(m,d,V)\quad &\text{if } \kappa_M \text{ is non-negative} \\
		C(m,d,\kappa,\text{diam},V)\quad  &\text{if } \kappa_M \text{ is negative}
	\end{array}
	\right. .
\end{align}

For $M \in \mathcal{M}$, we can control $d$, $\kappa$, $\text{diam}$, $V$ uniformly, so we can find a constant $C$ only depending on $m$ for all $M \in \mathcal{M}$ such that
\begin{align}
	\lambda_m \leq C(m) .
\end{align}

\subsubsection{Estimate of Higher Order Derivatives}
\label{sec:phim-higher-deri}
In this subsection, we will estimate the $L^\infty$ norm of higher order derivatives for eigenfunctions to control the operator norm of $d N^\prime$ in the next subsection.  

To estimate the $L^\infty$ norm of derivatives for eigenfunctions, we first choose the coordinate chart to be normal coordinate chart to make our derivatives well defined. We take derivative of $e(x): M \rightarrow \R$ in the following sense: Let $x \in U$ be the normal coordinate chart $(U,\textbf{x})$ and $\hat{U}=\textbf{x}(U)$, then $\hat{e}(x)=e\circ \textbf{x}^{-1}:\hat{U}\rightarrow R$ be the coordinate representation of $e(x)$. We treat $\partial_i e(x)$ as $\partial_i \hat{e}(x)$, and for higher order derivatives as well.

Let $\gamma$ be a $2d$ multi-index with $|\gamma|:=\sum_i\gamma_i\leq k$, and $x,y \in U$ be a geodesic normal coordinate of $M$. We denote $\sum_{\lambda_j\leq \lambda} e_j(x) e_j(y)$ as $e(x,y,\lambda)$. Let $\gamma = \left(\gamma_1,\dots,\gamma_{2d}\right)$, then the $\gamma$ order derivative for $e_j(x) e_j(y)$ is 
\begin{align}
	\partial^\gamma_{x,y} e_j(x) e_j(y) = \frac{\partial^{|\gamma_1+\cdots+\gamma_{d}|}e_j(x)}{\partial x_1^{\gamma_1} \cdots \partial x_d^{\gamma_d}} \frac{\partial^{|\gamma_{d+1}+\cdots+\gamma_{2d}|}e_j(y)}{\partial y_1^{\gamma_{d+1}} \cdots \partial y_d^{\gamma_{2d}}}
\end{align}

We have\cite{2004Xuhigherorder}
\begin{align}
	|\partial^\gamma_{x,y}e(x,y,\lambda)|\leq C_\gamma (1+\lambda)^{\frac{d+|\gamma|}{2}}.
\end{align}

To have the estimate for derivatives of eigenfunctions, let $\alpha$ be a $d$ dimension multi-index, and using the inequality above with $\gamma = \left(\alpha , \alpha\right)$, then we have 
\begin{align}
	|\partial^\alpha e_\lambda (x)|^2\leq\sum\limits_{\lambda_j\leq \lambda} |\partial^\alpha e_j(x)|^2 \leq C_\alpha (1+\lambda)^{d/2+|\alpha|}
\end{align}
so
\begin{align}
	\label{deriinequ}
	\|e_\lambda\|_{H^\infty_k}=\max\limits_{|\alpha|\leq k}|\partial^\alpha e_\lambda(x)|\leq \sup\limits_{|\alpha|\leq k}\left\{C_\alpha\right\}(1+\lambda)^{k/2+d/4}
\end{align}
which implies that $\|e_j\|_{H_l^\infty}\leq C \lambda_j^\frac{2l+d}{4}$ for $e_j:M \rightarrow \mathbb{R}$. Therefore, $|\frac{\partial^l e_j}{\partial x_{i_1}\cdots\partial x_{i_l}}| \leq C \lambda_j^\frac{2l+d}{4}$ for $l\leq k$ with respect to normal coordinate.

To estimate the norm of $d\varphi$, we can derive its local coordinate representation and compute the covariant derivatives in the normal chart since  choosing different chart does not have the influence on the norm. Now we denote $\varphi$ as $j$-th component of $\varphi$ out of simplifying the notation and $(U,\textbf{x})$ is the normal chart.

For the first order covariant derivative, we have
\begin{align}
	\nabla \varphi (X) = \nabla_X \varphi = X\varphi
\end{align} 
thus 
\begin{align}
	\nabla \varphi = \varphi_i dx^i\quad  \text{with}\ \varphi_i = \partial_i \varphi
\end{align}
so for first order covariant derivative, the local coordinate representation is first order derivative of local representation of $\varphi$.

For second order covariant derivative, which is also known as covariant Hessian. We have 
\begin{align}
\nabla^2 \varphi (X,Y)= \nabla^2_{Y,X}=\nabla_Y(\nabla_X \varphi)-\nabla_{\nabla_Y X}\varphi=X(Y\varphi)-(\nabla_X Y)\varphi
\end{align}
thus in local coordinates,
\begin{align}
	\nabla^2 \varphi = \varphi_{ij} dx^i \otimes dx^j \quad \text{with}\ \varphi_{ij}=\partial^2_{ij}\varphi-\Gamma^k_{ij}\partial_k \varphi,
\end{align}
with $i,j,k\in\{1,2,\ldots m\}$. 
For the third order coordinate derivative, we have 
\begin{align}
	\nabla^3 \varphi (X,Y,Z)&=(\nabla_Z \nabla^2 \varphi) (X,Y)\\
	&=Z(\nabla^2 \varphi (X,Y))-\nabla^2\varphi (\nabla_Z X,Y)-\nabla^2 \varphi (X,\nabla_Z Y)\\
	&=Z(X(Y\varphi)-(\nabla_X Y)\varphi)-\nabla_Z X(Y\varphi)+(\nabla_{\nabla_Z X} Y)\varphi-X(\nabla_Z Y\varphi)+(\nabla_X \nabla_Z Y)\varphi
\end{align}

thus in the local coordinate, 
\begin{align}
 \nabla^3 \varphi =\varphi_{ijk} dx^i \otimes dx^j \otimes dx^k &\\
 \text{with}\quad \varphi_{ijk}=\partial^3_{ijk} \varphi- \partial_k \Gamma_{ij}^l \partial_l \varphi - \Gamma_{ij}^l \partial^2_{kl}\varphi-\Gamma_{ik}^m\partial^2_{mj} \varphi&+ \Gamma_{ik}^l \Gamma_{lj}^m \partial_m \varphi-\partial_i \Gamma_{jk}^l \partial_l \varphi \nonumber\\
 & -\Gamma_{jk}^l \partial^2_{il}\varphi+ \Gamma_{il}^m\Gamma_{jk}^l\partial_m \varphi
\end{align}
The local coordinate representation is combined with derivative of Christoffel symbol and third order derivative of $\varphi$. And we can prove easily by induction that the local coordinate representation of $\nabla^l \varphi$ is $\varphi_{i_1\cdots i_l} dx^{i_1} \otimes \cdots \otimes dx^{i_l}$
with $\varphi_{i_1\cdots i_l}$ can be represented by derivative of $\varphi$ not higher than order $l$ and derivative of Christoffel symbol not higher than order $l-2$. To prove this, we only need to notice that 
\begin{align}
	\nabla^{l+1}\varphi(X_1,\cdots,X_l,X)&=\nabla(\nabla^l \varphi)(X_1,\cdots,X_l,X) \\
	&=(\nabla_X \nabla^l \varphi) (X_1,\cdots,X_l)\\
	&=X(\nabla^l \varphi(X_1,\cdots,X_l))-\sum_{i=1}^l \nabla^l\varphi(X_1,\cdots,\nabla_X X_i,\cdots,X_l)
\end{align}

Since we assume for any $M \in \mathcal{M}$, the local coordinate
representation in the normal chart of the Christoffel symbols and
their derivatives with order not above $k-2$ have an uniform bound only depends on its order,
and we have claimed derivatives of $\varphi$ which order is not higher
than $k$ have an uniform bound, and we notice that the local
representation of $\nabla^k \varphi$ is combined with them in the same
pattern among all $M \in \mathcal{M}$. Therefore, $\varphi_{i_1\cdots
  i_l}$ is bounded uniformly for all $l \leq k$, i.e.
\begin{align}
|\varphi_{i_1\cdots i_l}| \leq C \lambda_j^\frac{2l+d}{4} \quad \text{for }l\leq k
\end{align}

With this control of upper bound, we can estimate the norm of higher derivatives of $\varphi$:
\begin{align}
	\|d^l \varphi\|&=\sup\limits_{\|v_1\|=1, \dots,\|v_l\|=1} \|\nabla^l \varphi (v_1,\dots,v_l)\| \\
	&=\sup\limits_{\|v_1\|=1, \dots,\|v_l\|=1} \|\nabla^l \varphi (v_1\otimes\dots\otimes v_l)\| \\
	&=|\varphi_{i_1\cdots i_l} dx^{i_1} \otimes \cdots \otimes dx^{i_l} (v_1^{i_1} \frac{\partial}{\partial x^{i_1}} \otimes \cdots \otimes v_l^{i_l} \frac{\partial}{\partial x^{i_l}})|\\
	&=|\varphi_{i_1\cdots i_l} v_1^{i_1} \cdots v_l^{i_l}| \\
	&\leq C \lambda_j^\frac{2l+d}{4} |\sum_{i_1,i_2,\dots i_l=1}^d   v_1^{i_1} \cdots v_l^{i_l}|\\
	&= C \lambda_j^\frac{2l+d}{4} |\sum_{i_1=1}^{d} v_1^{i_1} \sum_{i_2,\dots i_l=1}^dv_2^{i_2} \cdots v_l^{i_l}|\\
	\text{(AM–GM)}&\leq C \lambda_j^\frac{2l+d}{4} |\sum_{i_2,\dots i_l=1}^dv_2^{i_2} \cdots v_l^{i_l}| \sqrt{d}\\
	&\ \ \vdots \\
	&\leq C \lambda_j^\frac{2l+d}{4} d^{l/2}.
\end{align}

This inequality holds for all $l \leq k$. Here $\varphi : M \rightarrow \mathbb{R}$, thus $d^l \varphi_j (v_1,\dots,v_l)$ gives a vector in $\mathbb{R}$, and then $d^l \varphi (v_1,\dots,v_l) = \left(d^l\varphi_1(v_1,\dots,v_l),\dots,d^l\varphi_m (v_1,\dots,v_l)\right)$ is a vector in $\mathbb{R}^m$, which is also a tangent vector in $T_{\varphi(p)}\varphi(M)$ through the natural isometry. Since $d^l \varphi(v_1,\dots,v_l)$ has $m$ components, it is easy to check the operator norm of $d^l \varphi$ has the upper bound $\sqrt{m}C \lambda_m^\frac{2l+d}{4}d^{l/2} = C(l,d,m)\lambda_m^\frac{2l+d}{4}$. Therefore, Since $\lambda_m$ is bounded above by $C(m)$, $\|d^l \varphi\|$ are bounded uniformly for $l \leq k$.

\subsection{Uniform upper bounds  $\left\{L_{\perp},L_3,\dots,L_k\right\}$ on the higher derivatives of the local parametrization reminder}
\label{sec:phim-L}

In this section, we will verify that there exists an common parameter
set
$\textbf{L}^\prime:=\left\{L_{\perp}^\prime,L_3^\prime,\dots,L_k^\prime\right\}$
such that the operator norm of the derivatives of remainder $N_{\varphi(p)}$ defined in Assumption \ref{as:smooth} can be controlled uniformly over $\varphi(\MM)$.

We fix a point $p^\prime=\varphi(p) \in \varphi(M)$, for $v^\prime \in T_{p^\prime}\varphi(M)=d\varphi_p(T_pM)$, we define 
\begin{align}
	\Psi^\prime_{p^\prime}(v^\prime)=\varphi(\Psi_p(d(\varphi^{-1})_{p^\prime} (v^\prime)))=p^\prime+v^\prime+N^\prime_{p^\prime}(v^\prime)
\end{align}
from definition of $\Psi_p$, we know $\Psi^\prime_{p^\prime}(v^\prime)$ and $N^\prime_{p^\prime}(v^\prime)$ are well defined when $\Vert d(\varphi^{-1})_{p^\prime}(v^\prime)\Vert \leq \frac{1}{4L_\perp}$. 

We estimate $\|d \varphi^{-1}_{p^\prime}\|$: We can treat $T_p M$ as a
$d$ dimensional subspace of $\mathbb{R}^D$ and treat $T_{p^\prime}
\varphi(M)$ is a  $d$ dimensional subspace of $\mathbb{R}^m$, and these
inclusions are canonical and isometric, so they are both closed
subspaces. And $d \varphi_p$ is homeomorphism, so graph of $d
\varphi^{-1}_{p^\prime}$ is closed, then $d \varphi^{-1}_{p^\prime}$
is bounded, i.e. $\|d \varphi^{-1}_{p^\prime}\|$ exists.
\begin{align}
	\|d \varphi^{-1}_{p^\prime}\|&=\sup \left\{\|d\varphi^{-1}_{p^\prime} v^\prime \|: v^\prime \in Im(d\varphi_p), \|v^\prime\| = 1\right\} \\
	&=\sup \left\{\frac{\|v\|}{\|d\varphi_p v\|}: v^\prime \in Im(d\varphi_p), \|v^\prime\| = 1, v=d \varphi^{-1}_{p^\prime} v^\prime\right\} \\
	&=\sup \left\{\frac{1}{\|d\varphi_p v\|}: \|v\| = 1\right\} \\
	&\leq \frac{1}{1-\epsilon}
\end{align}
so when $\|v^\prime\| \leq \frac{1}{4L_\perp(1-\epsilon)}$, $\Vert d(\varphi^{-1})_{p^\prime}(v^\prime)\Vert \leq \frac{1}{4L_\perp}$, then $\Psi^\prime_{p^\prime}(v^\prime)$ and $N^\prime_{p^\prime}(v^\prime)$ are well defined.

Now, We check $N^\prime_{p^\prime}(0)=0$ and $d_0N^\prime_{p^\prime}=0$:
\begin{align}
	N^\prime_{p^\prime}(0)+p^\prime=\varphi(\Psi_p(d(\varphi^{-1})_{p^\prime}(0)))=\varphi(\Psi_p(0))=\varphi(p+0+N_p(0))=\varphi(p)=p^\prime\\
	d_0N^\prime_{p^\prime}=d\varphi_p d_0\Psi_p d(\varphi)_{p^\prime}^{-1}-I_m=d\varphi_p(I_D+d_0N_p) d(\varphi)_{p^\prime}^{-1}-I_m=0
\end{align}

We assume $\|v^\prime\| \leq \frac{1}{4L_\perp(1-\epsilon)}$ and denote $d\varphi_{p^\prime}^{-1}v^\prime$ as $v$, and $c(v^\prime)=p+v+N_p(v)$, then for any unit vector $w^\prime \in T_{p^\prime} \varphi(\mathcal{M})$, using Faà di Bruno's formula, we have
\begin{align}
	\|d^k_{v^\prime}N^\prime_{p^\prime}(w^{\prime \otimes k})\|&=\|\sum\limits_{\pi \in P(k)} d^l_{c(v^\prime)} \varphi \circ \left(d^{j_1}_v \Psi_p \left(\left\{d_{p^\prime}\varphi^{-1} w^\prime\right\}^{\otimes j_{1}}\right),\dots,d^{j_l}_v \Psi_p\left(\left\{d_{p^\prime}\varphi^{-1} w^\prime\right\}^{\otimes j_{l}}\right) \right)\|
\end{align}
where $P(k)$ is a partition of $k$ with $l$ parts such that $j_1+\dots+j_l=k$.

Since $\Psi_p(v)=p+v+N_p(v)$, so $d_v \Psi_p = I+ d_v N_p$, $d^i_v \Psi_p = d^i_v N_p$, thus
\begin{align}
	\|d_v \Psi_p\| = \| I+ d_v N_p \| \leq 1 + L_\perp \|v\| \\
	\|d^i_v \Psi_p\| =\|d^i_v N_p\| \leq L_i
\end{align}
therefore
\begin{align}
\|d^k_{v^\prime}N^\prime_{p^\prime}(w^{\prime \otimes k})\| &\leq \sum\limits_{\pi \in P(k)} \|d^l \varphi\| \prod_{i=1}^l \| d^{j_i} \Psi_p\| \|d_{p^\prime}\varphi^{-1} w^\prime\|^{j_i} \\
&\leq \sum\limits_{\pi \in P(k)} \|d^l \varphi\| \frac{1}{(1-\epsilon)^l} \prod_{i=1}^l \| d^{j_i} \Psi_p\| 
\end{align}

We have proved $\|d^j \varphi\|$ and $\|d^j_v \Psi_p\|$ have the uniform upper bound for $j=2,\dots,k$, so we can select $\textbf{L}^\prime:=\left(L_\perp^\prime,\dots,L_k^\prime\right)$ as the uniform upper bound of $\|d^2_{v^\prime}N^\prime_{p^\prime}\|,\dots,\|d^k_{v^\prime}N^\prime_{p^\prime}\|$.

\subsection{Bounds on the pushforward density}
\label{sec:density}

In this subsection, we estimate the pushforward density of the sampling process. By establishing both lower and upper bounds for the pushforward density, we ensure that the sampling over $\varphi(M)$ remains approximately uniform. This guarantees that there are always some sampling points in any nonzero measure region with nonzero probability, thereby allowing the manifold to be effectively approximated using the sampled points.

We denote $d\varphi_p:T_p M \rightarrow T_{p^\prime} N$ as $A$, then for $\|v\|=1$, $1-\epsilon<\|Av\|<1+\epsilon$, which means $\|A\|<1+\epsilon$, so $\sqrt{\lambda_{\max}(A^T A)}<1+\epsilon$. We know $A^T A$ is positive-definite  symmetric matrix, we consider an unit eigenvector $v$ of it with eigenvalue $\lambda$, then 
\begin{align}
	\lambda=\|v^T \lambda v\|= \|v^T A^T A v\|=\|Av\|^2\in ((1-\epsilon)^2,(1+\epsilon)^2)
\end{align}

Since $A^TA$ is symmetric, then $A^T A=Q\Lambda Q^T$,  where $Q$ is orthogonal matrix and $\Lambda$ is diagonal matrix whose diagonal elements are eigenvalues of $A^T A$ and all of them are close to $1$.
\begin{align}
	|\|\pi_{T_p M} A^T A v\|-\|v\||&\leq \|\pi_{T_p M}(A^T A -I) v\|\\
	&=\|\pi_{T_p M} Q(\Lambda-I)Q^T v\|\\
	&\leq \|Q(\Lambda-I)Q^T v\|\\
	&\leq 3\epsilon
\end{align}
the last inequality is due to orthogonal matrix does not change norm of vector and diagonal elements in $\Lambda-I$ are smaller than $3\epsilon$.

Thus we have $1-3\epsilon <\|\pi_{T_p M} A^T A v\|<1+3\epsilon $, so the eigenvalues of $\pi_{T_p M} A^T A|_T$ is greater than $1-3\epsilon$ and less than $1+3\epsilon$. Therefore, if 
\begin{align}
	\label{range_of_e}
	\epsilon \leq \min\left\{\frac{4^{1/d}-1}{3},\frac{1-\frac{1}{4^{d/1}}}{3}\right\},
\end{align}
then
\begin{align}
	\det (\pi_{T_p M} A^T A|_T)=\prod \lambda_i\in ((1-3\epsilon)^d,(1+3\epsilon)^d)\subset(\frac{1}{4},4),
\end{align}
which implies
\begin{align}
	\sqrt{\det \left(\pi_{T_p M} \circ d\varphi_p^T \circ d\varphi_p |_{T_pM}\right)} \in (\frac{1}{2},2).
\end{align}

According to \cref{thm:push_density}, we have the pushforward density of $\varphi_\# P$ is 
	\begin{align}
	g(p^\prime)=f(p)/\sqrt{\det \left(\pi_{T_p M} \circ d\varphi_p^T \circ d\varphi_p |_{T_pM}\right)},
\end{align}
where $p^\prime = \varphi(p)$. Therefore, $\frac{f_{min}}{2}<g<2f_{\max}$.

In our setting, our sampleing is uniform, thus $f \equiv \frac{1}{Vol(M)}$, which is bounded from both and above, which implies the density of $\varphi(\MM)$ is also bounded from below by $\min\limits_{M \in \MM} \left\{Vol(M)\right\}/2$ and bounded above by $2\max\limits_{M \in \MM} \left\{Vol(M)\right\}$

\subsection{Estimation of $\tau_{min}$}
\label{sec:reach}
    
    If $M$ has reach $\tau$, then at least one
  of the following cases holds:
\begin{enumerate}
	\item[I] (Global case) $M$ has a bottleneck, i.e. there exist $p, q \in M$, such that $(p+q)/2 \in \text{Med}(M)$ and $\|p-q\|=2\tau$.
	\item[II] (Local case) There exists $p \in M$, and an arc-length parametrized geodesic $\gamma$ such that $\gamma(0)=p$ and $|\gamma^{\prime\prime}(0)|=1/\tau$.
\end{enumerate}
where $\text{Med}(M)=\left\{z\in \mathbb{R}^D: \exists p\neq q\in M, \|z-p\|=\|z-q\|=d_E(z,M) \right\}$, $d_E(x,M)$ is the distance between $x$ and $M$.

We denote global reach and local reach as $\tau_g$ and $\tau_l$, respectively. Thus $\tau = \min \left\{\tau_g, \tau_l\right\}$

\begin{figure}[H]
	\centering
\begin{tikzpicture}
	
	\draw[thick] plot [smooth cycle, tension=0.9] coordinates {(-4.5,-1.5) (-3.8,1.5) (-1.5,0.5) (0.5,0.5) (3,1.5) (4,-1.4) (0,-0.5) };
	\draw[dashed] (3.65,-0.9) circle [radius=0.59];
	\fill (3.65,-0.9) circle [radius=0.03];
	\draw[<->, dashed, blue] (3.65,-0.9) -- (4,-1.4);
	
	\node[blue]  at (4.0,-1.1) {\(\tau_l\)};
	\draw[dashed, thick] plot [smooth, tension=0.8] coordinates {(-3.2,0) (-1,-0.1) (0,-0.05) (1,0.05) (2.5,0.2)};
	
	\draw[dashed, thick] (-3.2,0) -- (-4,-1);
	\draw[dashed, thick] (-3.2,0) -- (-3.5,1);
	\draw[dashed, thick] (2.5,0.2) -- (3.4,-0.8);
	\draw[dashed, thick] (2.5,0.2) -- (2.8,0.8);
	
	\draw[<->, dashed, black] (-0.32,-0.07) -- (-0.3,-0.5);
	
	\node[black] at (0.2,-0.25) {\(\tau_g\)};
	\node at (0.3,1.2) {\(M\)};
	\node at (-2,0.3) {\(Med(M)\)};
	
\end{tikzpicture}
	\caption{Reach}
\end{figure}

And we can also define reach $\tau:=\min \left\{\tau_l,\tau_{\wfs}\right\}$, where $\tau_l$ is local reach and $\tau_{\wfs}$ is weak feature size, and we give more details about them. Let $\Gamma_M(y)=\left\{x\in M: d_E(y,M)=|x-y| \right\}$, then define generalized gradient:
\begin{align}
	\nabla_M(y):=\frac{y- \text{Center}(\Gamma_M(y))}{d_E(y,M)}
\end{align}
where $\text{Center}(A)$ is the center of the smallest ball enclosing the bounded subset $A \subset \mathbb{R}^D$. We say $y$ is a critical point of $d_E(\cdot,M)$ if $\nabla_M(y)=0$, then we can define \begin{align}
	\tau_{\wfs}:=\inf \left\{d_E(y,M), y \in \mathcal{C}\right\}
\end{align}where $\mathcal{C}$ is the set of critical points.

And we can define $\tau_l$ easily by 
\begin{align}
	\tau_l := \inf\limits_{p\in M} \left\{\frac{1}{\|\Rmnum{2}_p\|}\right\}
\end{align}

\subsubsection{Local Reach}
Local reach is a quantity measuring the "curvature" locally, controlling of local reach will avoid some extremely weird manifold. Now we estimate the lower bound of local reach for $\varphi(\mathcal{M})$. 

To control the local reach, we will derive the local coordinate representation of geodesic and then use the boundedness of derivative of eigenfunctions to obtain the upper bound of geodesic of $\varphi(\mathcal{M})$ in $\mathbb{R}^m$.

 For any $p \in M$, we can choose neighborhoods $U\subset M$ and $V \subset \varphi(M)$ s.t. $\varphi(U)=V$, and $(U,\mathbf{x})$ is a normal coordinate chart of $p$, where $\mathbf{x}^{-1}=\exp_p \circ E$, $E$ is the isomorphism from $\mathbb{R}^d$ to $T_pM$, and we denote $\mathbf{x}(U)$ as $\hat{U} \subset \mathbb{R}^d$ is open. 

We have a parametrization of $U\subset M$: 
\begin{align}
	\mathbf{x}^{-1}: \hat{U}\subset \mathbb{R}^d &\rightarrow U \subset M \\
	(x_1,\dots,x_d) &\rightarrow \mathbf{x}^{-1}(x_1,\dots,x_d)= (\mathbf{x}^{-1}_1(x_1,\dots,x_d),\dots,\mathbf{x}^{-1}_D(x_1,\dots,x_d))
\end{align}

Using this map, we have a coordinate representation of $\varphi$, i.e.  $\hat{\varphi}:=\varphi \circ \mathbf{x}^{-1}$ of $V \subset \varphi(M)$ by $\hat{U}\overset{\mathbf{x}^{-1}}{\rightarrow}U \overset{\varphi}{\rightarrow} V \subset \mathbb{R}^m$:
\begin{align}
(x_1,\dots,x_d) &\overset{\mathbf{x}^{-1}}{\rightarrow} \mathbf{x}^{-1}(x_1,\dots,x_d) \overset{\varphi}{\rightarrow} (\varphi_1\circ \mathbf{x}^{-1},\dots,\varphi_m\circ \mathbf{x}^{-1})(x_1,\dots,x_d)
\end{align}

Let $r=\hat{\varphi}$ be the parameterization of $\varphi(M)$, we compute its $k$-th tangent vector
\begin{align}
	r_i=\frac{\partial \hat{\varphi}}{\partial x_i}=\left(\frac{\partial \hat{\varphi}_1}{\partial x_i},\dots,\frac{\partial \hat{\varphi}_m}{\partial x_i}\right)\quad  i=1,\dots,d,
\end{align}
and $r$'s second derivative is 
\begin{align}
	r_{ij}=\left(\frac{\partial^2 \hat{\varphi}_1}{\partial x_i \partial x_j},\dots,\frac{\partial^2 \hat{\varphi}_m}{\partial x_i \partial x_j}\right)\quad  i,j=1,\dots,d .
\end{align}

For $\varphi(p) \in \varphi(M)$, we have the orthogonal decomposition:
\begin{align}
	\R^m=T_{\varphi(p)}\varphi(M) \oplus N_{\varphi(p)}\varphi(M),
\end{align}
where $T_{\varphi(p)}\varphi(M)$ is spanned by $\left\{r_1,\dots,r_d\right\}$ and $ N_{\varphi(p)}\varphi(M)$ is the $m-d$ subspace orthogonal to $T_{\varphi(p)}\varphi(M)$. Let $\pi$ be the orthogonal projection from $\R^m$ onto $N_{\varphi(p)}\varphi(M)$, we can derive the second fundamental form on $\varphi(M)$, which is a symmetric 2-tensor field given by 
\begin{align}
	\Rmnum{2}_{ij} dx^i \otimes dx^j \quad 
	\text{with} \quad \Rmnum{2}_{ij} = \pi(r_{ij}) \in  N_{\varphi(p)}\varphi(M).
\end{align}

Then we can verify that 
\begin{align}
	\|\Rmnum{2}\|_{op} \leq d \max \left\{\pi(r_{ij})\right\}\leq d \max\left\{|r_{ij}|\right\}.
\end{align} 

We compute the module of $r_{ij}=\left(\frac{\partial^2 \hat{\varphi}_1}{\partial x_i \partial x_j},\dots,\frac{\partial^2 \hat{\varphi}_m}{\partial x_i \partial x_j}\right)$ in more detail. We have $\hat{\varphi}_i = (t_0)^\frac{d+2}{4}\sqrt{2}(4\pi)^\frac{d}{4} e^{-\lambda_i t_0/2} \hat{e}_i$, thus

\begin{align}
	\left|\frac{\partial^2 \hat{\varphi}_k}{\partial x_i \partial x_j}\right|=(t_0)^\frac{d+2}{4}\sqrt{2}(4\pi)^\frac{d}{4} e^{-\lambda_k t_0/2} \left|\frac{\partial^2 \hat{e}_k}{\partial x_i \partial x_j}\right| \\
	\leq (t_0)^\frac{d+2}{4}\sqrt{2}(4\pi)^\frac{d}{4} e^{-\lambda_k t_0/2} C_2 (1+\lambda_k)^{1+\frac{d}{4}},
\end{align}
where the inequality is from \cref{deriinequ} in \cref{sec:phim-higher-deri}.

We also have the lower bound of $\lambda_k$, 
\begin{align}
\lambda_k \geq C_1^{1+\text{diam}\sqrt{\kappa}} \text{diam}^{-2} k^{2/d},
\end{align}
where $C_1$ is constant only depending on $d$.\cite{hassannezhad2016eigenvalue}.

Therefore, 
\begin{align}
|r_{ij}|\leq (t_0)^\frac{d+2}{4}\sqrt{2}(4\pi)^\frac{d}{4} C_2\sqrt{\sum_{k=1}^{m}e^{-\lambda_k t_0} (1+\lambda_k)^{2+\frac{d}{2}}}.
\end{align}

It is easy to see that $e^{-x t_0} (1+x)^{4+d}$ is increasing when $x<\frac{4+d}{2t_0}-1$ and decreasing when $x>\frac{4+d}{2t_0}-1$, thus we can use 
\begin{align}
C_1^{1+\text{diam}\sqrt{\kappa}} \text{diam}^{-2} k^{2/d} \leq \lambda_k \leq C+C^\prime k^{2/d},
\end{align}
which implies $c_1 k^{2/d} \leq \lambda_k \leq c_2 k^{2/d}$, where $c_1,c_2$ are constants depending on $\kappa$, $\text{diam}$, $V$, $d$. Then we have $|r_{ij}|$ is bounded from above, thus $\|\Rmnum{2}\|_{op}$ is bounded from above, which implies that $\tau_{l,\varphi(M)}$ is bounded from below.

\subsubsection{Global Reach}
\label{sec:glo-reach}
In this subsection, we estimate the lower bound of the global reach of $\varphi(\MM)$. The global reach constrains the overall shape of the manifold by controlling the Euclidean distance to ''separate'' different parts of the manifold.

Under Assumptions \ref{as:reach}, when $p$ is close to $q$, the geodesic
distance $d(p,q)$ can be controlled by the Euclidean distance $|p-q|$. 

\begin{lemma}
	\label{lem:geodis_eucdis}
	If $d(p,q)=s$, then
	\begin{align}
		s-\frac{s^3}{24 r_0^2}\leq |p-q| \leq s,
	\end{align}
	where $\frac{1}{r_0}=\sup
	\left\{|\gamma^{\prime\prime}(s)|\right\}$ and $\gamma$ varies among
	all geodesics on $M$ in arc length parameter.
\end{lemma}

We consider the left inequality of \cref{lem:geodis_eucdis}, it is easy to see if $s\leq 2\sqrt{2}r_0$, then
\begin{align}
	\frac{2}{3}s\leq s-\frac{s^3}{24r_0^2} \leq |p-q|,
\end{align}
which implies 
\begin{align}
	|p-q|\leq d(p,q) \leq \frac{3}{2}|p-q|.
\end{align}

We can verify easily that $1/r_0=\sup
\left\{|\gamma^{\prime\prime}(s)|\right\}=1/\tau_{l}$, and this local linear approximation can exclude the global reach case when $d(p,q)< 2\sqrt{2}\tau_{l}$, and $\frac{3}{2}$ plays an important role here, it will lead to the contradiction to the global reach case.

\begin{lemma}
	\label{global_case}
	If $d(p,q)\leq s_0$ where $s_0=2\sqrt{2}\tau_{l}$, then $p,q$ cannot satisfy global reach case.
\end{lemma}

Since $\varphi$ is almost isometry, i.e. $| \|d\varphi\|-1| < \epsilon$, thus $d(p,q)\leq s/1+\epsilon$ implies $d(\varphi(p),\varphi(q)) \leq s$. We select $s_0 = 2\sqrt{2}\tau_{l,\varphi(M)}$, then $\varphi(p),\varphi(q)$ cannot satisfy global reach case.

 Since we have proved that for $\varphi(M) \in \varphi(\mathcal{M})$, there exists an uniform lower bound of local reach for them which only depends on the geometric properties of $\MM$, we denote this lower bound as $\tau_{l,\varphi(\MM)}$, then we choose $s_0 = 2\sqrt{2}\tau_{l,\varphi(\MM)}$ such that $d(p,q) \leq s_0/1+\epsilon$, then $\varphi(p), \varphi(q)$ cannot satisfy global reach case.

Consequently, only when $d(p,q)>\frac{s_0}{1+\epsilon}$, $\varphi(p),\varphi(q)$ can satisfy global reach case. And in the following, we will claim that if the geodesic distance of two points $p,q \in M$ is large enough, then the Euclidean distance of $\varphi(p), \varphi(q) \in \varphi(M)$ is bounded from below, i.e. 
\begin{theorem}
	\label{global condition}
	For large enough $s_1>0$, there exists $r_1>0$ such that
	\begin{align}
		d(p,q)> s_1 \implies |\varphi(p)-\varphi(q)|>r_1.
	\end{align}
	We call this the {\em global reach condition}.
\end{theorem}

To bound the Euclidean distance from below, we need the estimate of heat kernel:
\paragraph{Estimate for Heat Kernel}
We will bound the heat kernel $K(t,p,q)$ above and $K(t,p,p)$ from below to control this Euclidean distance. For upper bound, we have
\begin{lemma}
	\label{thm:upperbound}
	Let M be a complete Riemannian manifold of dimension $d$ with Ricci curvature is greater than $-\kappa(d-1)$ for some $\kappa\geq 0$, then heat kernel satisfies:
	\begin{align}
		K(t,p,q)\leq \frac{C_1(M)}{t^{d/2}}\exp\left(C_2(M)\kappa t - \frac{2d(p,q)^2}{9t}\right)
	\end{align}.
\end{lemma}

The existence of $C_1(M)$ and $C_2(M)$ follows from \cref{thm:upper_app} of Li-Yau and \cref{injvol} of Croke with $\alpha_1=\frac{3}{2}$, $\alpha_2=\frac{1}{2}$ and $t_0\leq \iota^2/4$, and \cref{thm:upper_app}, \cref{injvol} are in the appendix.

For lower bound, we only need on-diagonal lower estimate:
\begin{lemma}
	\label{thm:lowerbound}
	Let M be a complete Riemannian manifold of dimension $d$ with Ricci curvature is greater than $-\kappa(d-1)$ for some $\kappa\geq 0$. For any $t>0$ and $p\in M$, we have
	\begin{align}
		K(t,p,p) \geq (4\pi t)^{-d/2} \exp\left(-\frac{\beta^2}{4}t-\frac{2\sqrt{3d}\beta}{3}\sqrt{t}\right).
	\end{align}
	where $\beta=\sqrt{\kappa}(d-1)$.\cite{wang1997sharp}
\end{lemma}
This follows \cref{thm:lower_app} of Wang with $\sigma^2=\frac{3\beta^2}{8d}$ and $p=q$.

\begin{proof}[Proof of \cref{global condition}]
To show this, we will compute the distance between $\varphi(p)$ and $\varphi(q)$ in $\mathbb{R}^m$ directly and then estimate it using geodesic distance of $p,q$.

For any $p \in M$, $\varphi(p)=(t_0)^\frac{d+2}{4}\sqrt{2}(4\pi)^\frac{d}{4}\left(e^{-\lambda_1 t_0/2}e_1(p),\cdots,e^{-\lambda_m t_0/2}e_m(p)\right) \in \mathbb{R}^m$, thus the Euclidean distance between $\varphi(p)$ and $\varphi(q)$ is
\begin{align}
	|\varphi(p)-\varphi(q)|&=(t_0)^\frac{d+2}{4}\sqrt{2}(4\pi)^\frac{d}{4} \sqrt{\sum_{i=1}^me^{-\lambda_i t_0}(e_i(p)-e_i(q))^2}\\
	&=(t_0)^\frac{d+2}{4}\sqrt{2}(4\pi)^\frac{d}{4}\sqrt{\sum_{i=1}^me^{-\lambda_i t_0}(e_i(p)^2+e_i(q)^2-2e_i(p)e_i(q))} \\
	&=(t_0)^\frac{d+2}{4}\sqrt{2}(4\pi)^\frac{d}{4}\sqrt{\sum_{i=0}^m(e^{-\lambda_i t_0}e_i(p)^2+e^{-\lambda_i t_0}e_i(q)^2-2e^{-\lambda_i t_0}e_i(p)e_i(q))},
\end{align}
the last equality holds since $e_0$ is a constant function.

We have the representation of heat kernel
\begin{align}
	K(t,p,q)=\sum_{i=0}^{\infty} e^{-\lambda_i t} e_i(p) e_i(q),
\end{align}
and the truncated heat kernel $K_m$ is
\begin{align}
	K_m(t,p,q)=\sum_{i=0}^{m} e^{-\lambda_i t} e_i(p) e_i(q),
\end{align}
and we have $\|K_m(t_0,p,\cdot)-K(t_0,p,\cdot)\|_{\infty} \leq \epsilon^\prime$,
thus 
\begin{align}
	|\varphi(p)-\varphi(q)|&\geq (t_0)^\frac{d+2}{4}\sqrt{2}(4\pi)^\frac{d}{4} \sqrt{K_m(t_0,p,p)+K_m(t_0,q,q)-2K_m(t_0,p,q)}\\
	&\geq (t_0)^\frac{d+2}{4}\sqrt{2}(4\pi)^\frac{d}{4} \sqrt{K(t_0,p,p)+K(t_0,q,q)-2 K(t_0,p,q)-4\epsilon^\prime}.
\end{align}

Combining two inequities in \cref{thm:upperbound} and \cref{thm:lowerbound}, we have
\begin{align}
	|\varphi(p)&-\varphi(q)|\geq (t_0)^\frac{d+2}{4}2(4\pi)^\frac{d}{4}  \nonumber \\ \times &\sqrt{(4\pi t_0)^{-\frac{d}{2}} \exp\left(-\frac{\beta^2t_0}{4}-\frac{2\sqrt{3dt_0}\beta}{3}\right) - \frac{C_1(\MM)}{t_0^{d/2}}\exp\left(C_2(\MM)\kappa t_0 - \frac{2d(p,q)^2}{9t_0}\right) - 2\epsilon^\prime}. \label{sqrt_dis}
\end{align}
In our case, $t_0, m$ are fixed when embedding $M$ into $\mathbb{R}^m$. The first two items in the square root
are only depends on the geometric properties of $\MM$, and $\epsilon^\prime$ is chosen when we perform DM, we can choose appropriate $\epsilon^{\prime}$ based on our geometric setting such that 
\begin{align}
2\epsilon^{\prime}<(4\pi t_0)^{-\frac{d}{2}} \exp\left(-\frac{\beta^2t_0}{4}-\frac{2\sqrt{3dt_0}\beta}{3}\right)
\end{align}

 Therefore, for large enough $s_1$ with $d(p,q)\geq s_1$, we have $|\varphi(p)-\varphi(q)|\geq r_1$, where $r_1, s_1$ depend on our
setting of $\MM$.
\end{proof}

We calculate more carefully to select the appropriate $s_1$ and $\epsilon^\prime$ and derive $r_1$, we consider the item under square root in \cref{sqrt_dis} which is positive, that is
\begin{align}
&(4\pi t_0)^{-\frac{d}{2}} \exp\left(-\frac{\beta^2t_0}{4}-\frac{2\sqrt{3dt_0}\beta}{3}\right) - \frac{C_1(\MM)}{t_0^{d/2}}\exp\left(C_2(\MM)\kappa t_0 - \frac{2s_1^2}{9t_0}\right) - 2\epsilon^\prime > 0 \\
\iff&  \exp\left(C_2(\MM)\kappa t_0 - \frac{2s_1^2}{9t_0}\right)< \frac{(4\pi)^{-\frac{d}{2}}}{C_1(\MM)} \exp\left(-\frac{\beta^2t_0}{4}-\frac{2\sqrt{3dt_0}\beta}{3}\right) - \frac{2(t_0)^\frac{d}{2}\epsilon^\prime}{C_1(\MM)}.
\end{align}

We denote the right side $\frac{(4\pi)^{-\frac{d}{2}}}{C_1(\MM)} \exp\left(-\frac{\beta^2t_0}{4}-\frac{2\sqrt{3dt_0}\beta}{3}\right) - \frac{2(t_0)^\frac{d}{2}\epsilon^\prime}{C_1(\MM)}$ as $F$, we need it be a fixed constant, thus we can select 
\begin{align}
	\label{sele eprime}
\epsilon^\prime=\frac{(4\pi t_0)^{-\frac{d}{2}}}{8}\exp\left(-\frac{\beta^2t_0}{4}-\frac{2\sqrt{3dt_0}\beta}{3}\right),
\end{align}
which implies
\begin{align}
	F= \frac{3(4\pi)^{-\frac{d}{2}}}{4C_1(\MM)} \exp\left(-\frac{\beta^2t_0}{4}-\frac{2\sqrt{3dt_0}\beta}{3}\right),
\end{align}
Therefore, $\epsilon^\prime$ is only depending on the geometric properties of $\MM$. Choosing $\epsilon^\prime$ will affect $m=\max\left\{N_0(d,\kappa,\iota,\epsilon,V,\frac{t_0}{2}),N_1(d,\kappa,\iota,V,\epsilon^\prime,t_0)\right\}+1$ and is independent with choice of $t_0=t_0(d,\kappa,\iota,\epsilon)$ when perform DM. We need set appropriate $\epsilon^\prime$ based on geometric setting and derive $m$ and $t_0$ at the beginning, but until now we know how to choose them.

Then we ask $\exp\left(C_2(\MM)\kappa t_0 - \frac{2s_1^2}{9t_0}\right) < \frac{2}{3}F$, which is equivalent to
\begin{align}
	s_1^2>\frac{9t_0}{2}\left(C_2(d)\kappa t_0 + \frac{\beta^2 t_0}{4}+\frac{2\sqrt{3d t_0}\beta}{3}+ \log\left(2(4\pi)^\frac{d}{2}C_1(d)\right)\right), \label{smallests1}
\end{align}
Intuitively, $t_0$ is small, so $s_1$ is not very large.

When $s_1$ is the square root of \cref{smallests1} and  $\epsilon^\prime=\frac{(4\pi t_0)^{-\frac{d}{2}}}{8}\exp\left(-\frac{\beta^2t_0}{4}-\frac{2\sqrt{3dt_0}\beta}{3}\right)$, we have 
\begin{align}
	r=r_1=\sqrt{t_0} \exp\left(-\frac{\beta^2t_0}{8}-\frac{\sqrt{3dt_0}\beta}{3}\right)
\end{align}

Now we consider two cases based on $d(p,q)$. The first case is $d(p,q) \leq s_0/1+\epsilon$, then $d(\varphi(p),\varphi(q))\leq s_0$ since $|\|d\varphi\|-1|<\epsilon$, which implies $\varphi(p), \varphi(q)$ cannot be the global reach case. The second case is $d(p,q)>s_1$, then $|\varphi(p)-\varphi(q)|> r_1$. To make there is no gap between $s_0/1+\epsilon$ and $s_1$, we need 
\begin{align}
	s_0^2>\frac{9(1+\epsilon)^2t_0}{2}\left(C_2(\MM)\kappa t_0 + \frac{\beta^2 t_0}{4}+\frac{2\sqrt{3d t_0}\beta}{3}+\log\left(2(4\pi)^\frac{d}{2}C_1(\MM)\right)\right).
\end{align}
Combining $s_0=2\sqrt{2}r_0$ and $r_0 =\tau_{l,\varphi(\MM)}$. The condition making $s_0/1+\epsilon \geq s_1$ is 
\begin{align}
	\label{star}
	8\tau^2_{l,\varphi(\MM)} 
	\geq \frac{9(1+\epsilon)^2t_0}{2}\left(C_2(\MM)\kappa t_0 + \frac{\beta^2 t_0}{4}+\frac{2\sqrt{3d t_0}\beta}{3}+ \log\left(2(4\pi)^\frac{d}{2}C_1(\MM)\right)\right), \tag{*}
\end{align}
which is an assumption only based on geometric properties since $t_0=t_0(d,\kappa,\iota,\epsilon)$, $m=\max\left\{N_0,N_1\right\}+1$ and so on.

We assume that $\mathcal{M}$ satisfies regularity conditions such
that $\cref{star}$ holds, then $\|\varphi(p)-\varphi(q)\|\geq r_1$ for $d(p,q)>s_0/1+\epsilon$, and we have claimed that $\varphi(p),\varphi(q)$ with $d(p,q)\leq s_0/1+\epsilon$ cannot be the global reach case, thus $\tau_{g} \geq \frac{r_1}{2}$.

Therefore, the local reach and global reach both have the uniform lower bound, thus $\tau_{\varphi(M)}\geq \tau_{min,\mathcal{M}}$ for some fixed constant.

	\section{Convergence of eigenfunctions and eigenvectors/Finite sample error of Diffusion Map $\varphi$}
        \label{sec:convergence}

We consider $n$ sample points $\mathcal{X}_n=\left\{x_1,\dots,x_n\right\}\subset M \in \mathcal{M}$, we define graph affinity matrix $W$ and the degree matrix $D$ as:
\begin{align}
	W_{ij}=\frac{k_h (x_i,x_j)}{q_h(x_i)q_h(x_j)}, \quad D_{ii}=\sum_{j=1}^n W_{ij},
\end{align}
where $k_h(x,y)=\exp \left(-\frac{|x-y|^2}{4h^2}\right)$ is the Gaussian kernel and $q_h(x)=\sum_{i=1}^n k_h(x,x_i)$.

Then the normalized graph Laplacian $L_n$ is defined as
\begin{align}
	\label{Lap matrix}
L_n=\frac{D^{-1}W-I}{h^2}.
\end{align} 

We denote its $i$-th eigenvalue of $-L_n$ as $\mu_{i,n,h}$ with corresponding eigenvector $\tilde{v}_{i,n,h}$ normalized in $l^2$ norm. It is easy to verify that $\mu_{0,n,h}=0$ and $\tilde{v}_{0,n,h} = \mathbbm{1}$. Let $\mathbb{N}(i)=|B_h(x_i) \cap \left\{x_1,\dots,x_n\right\}|$ which is the cardinal of points in the $h$-ball of $x_i$. Then we define the $l^2$ norm of $\tilde{v}$ with respect to inverse estimate density $1/\hat{p}$ as:
\begin{align}
	\|\tilde{v}\|_{l^2(1/\hat{p})} = \sqrt{\frac{\omega(d-1)h^d}{d}\sum_{i=1}^{n}\frac{\tilde{v}^2(i)}{\mathbb{N}(i)}},
\end{align}
and we define
\begin{align}
	v_{i,n,h}=\frac{\tilde{v}_{i,n,h}}{\|\tilde{v}_{i,n,h}\|_{l^2(1/\hat{p})}}.
\end{align}

Let $\Delta$ be the Laplace-Beltrami operator of $M$, and $0=\lambda_0<\lambda_1\leq \lambda_2 \leq \cdots$ be the eigenvalues of $-\Delta$. Denote $e_i$ be the eigenfunctions of $-\Delta$ corresponding $\lambda_i$. Then we have
\begin{theorem}
	\label{thm:conver_rate}
Let $M$ be a $d$-dimensional smooth, closed and connected Riemann manifold embedded in $\R^D$, $f$ be the smooth probability density function on $M$ with infimum $f_{min}>0$ and $\mathcal{X}_n=\left\{x_1, \dots,x_n\right\}$ be the point cloud sampled following $f$ independently and identically. Suppose eigenvalues of $\Delta$ are simple. For fixed $m\in \N$, denote $\Gamma_m=\min\limits_{1\leq i \leq m} \text{dist}(\lambda_i, \sigma(-\Delta)\textbackslash\left\{\lambda_i\right\})$, where $\sigma(-\Delta)$ is the spectrum of $-\Delta$. Suppose 
\begin{align}
	h \leq \mathcal{K}_1 \min \left(\left(\frac{\min(\mathsf\Gamma_m,1)}{\mathcal{K}_2+\lambda_m^{d/2+5}}\right)^2,\, \frac{1}{(\mathcal{K}_3+\lambda_m^{(5d+7)/4})^2}\right),
\end{align}
where $\mathcal{K}_1$ and $\mathcal{K}_2, \mathcal{K}_3>1$  are constants depending on $d$, $f_{\min}$, $\|f\|_{C^2}$, and the volume, the injectivity radius, the curvature and the second fundamental form of the manifold.
Then, when $n$ is sufficiently large so that $h=h(n) \geq (\frac{\log n}{n})^{\frac{1}{4d+13}}$, with probability greater than $1-n^{-2}$, for all $0 \leq i < m$, 
\begin{align}
	|\mu_{i,n,h}-\lambda_{i}|\leq \Omega_1 h^{3/2}. 
\end{align}
And when $n$ is sufficiently large so that $h=h(n)\geq (\frac{\log n}{n})^{\frac{1}{4d+8}} $, with probability greater than $1-n^{-2}$, there are $a_i \in \{1,-1\}$ such that for all $0 \leq i <  m$, 
	\begin{align}
		\max_{x_j \in \mathcal{X}_n}|a_i v_{i,n,\epsilon}(j)-e_{i}(x_j)|\leq  \Omega_2 h^{1/2}.
	\end{align}
$\Omega_1$ depends on $d$, the  the diameter of $M$, $f_{\min}$, $\|f\|_{C^2}$, and $\Omega_2$ depends on $d$, the diameter and the volume of $M$, $f_{\min}$, and $\|f\|_{C^2}$. \cite{DUNSON2021282}
\end{theorem}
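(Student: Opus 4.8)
This is the spectral consistency theorem of \cite{DUNSON2021282}; I outline the structure of the argument. The plan is to interpose a deterministic integral operator $T_h$ between the random matrix $L_n$ and the Laplacian $\Delta$, control the two gaps separately — a deterministic ``bias'' $T_h\to\Delta$ and a stochastic ``variance'' $L_n\to T_h$ — and then convert operator-norm estimates into spectral estimates via the Courant--Fischer min--max principle and the Davis--Kahan theorem.

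First I would introduce the integral operator $T_h$ on $L^2(M)$ carrying exactly the normalization pattern of $W$ and $D$: the kernel $k_h(x,y)$ divided by $p_h(x)p_h(y)$ with $p_h(x)=\int_M k_h(x,z)f(z)\,dz$, then row-normalized by its own degree function. A Taylor expansion of $k_h$ along $M$ — using smoothness of $f$, the curvature bound, and the bound on the second fundamental form of $M\subset\R^D$ — shows that $(T_h-I)/h^2$ agrees with $\Delta$ up to an error of order $h$ on functions of bounded third-order Sobolev norm, the density $f$ dropping out of the leading term precisely because of the $p_h(x)p_h(y)$ factor, so that the limiting operator is the unweighted $\Delta$. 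Restricted to the span of eigenfunctions with eigenvalue $\le\lambda_m$, whose Sobolev norms are controlled by a power of $\lambda_m$ (cf.\ \cref{deriinequ}), this error is $O(\lambda_m^{\alpha}h)$ for an explicit power $\alpha$; that is the origin of the $\lambda_m$-powers in the hypothesis on $h$.

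Next, the variance. Each entry of $W$ and $D$ and each value $q_h(x_i)$ is a sum of $n$ i.i.d.\ bounded terms of size $\Theta(h^d)$, so Bernstein's inequality together with a union bound over the $n$ points and the $O(m)$ relevant eigen-directions shows that, with probability at least $1-n^{-2}$, $L_n$ and a discretization of $T_h$ agree on the low-frequency subspace — in the data-dependent $l^2(1/\hat p)$ norm, which is normalized precisely so as to approximate $\|\cdot\|_{L^2(M)}$ — up to $\sqrt{\log n/(nh^{d+2})}$ times polynomial factors in $\lambda_m$ and $h^{-1}$. The bandwidth hypotheses $h\ge(\log n/n)^{1/(4d+13)}$ and $h\ge(\log n/n)^{1/(4d+8)}$ are calibrated so that, once all the $d$-dependent powers are accounted for, this stochastic error is subdominant to $h^{3/2}$ and $h^{1/2}$ respectively. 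Combining bias and variance and feeding the result into the Courant--Fischer characterization of the eigenvalues on the finite-dimensional low-frequency space yields $|\mu_{i,n,h}-\lambda_i|\le\Omega_1 h^{3/2}$ for $i<m$; the Davis--Kahan theorem, using the spectral gap (which enters through $\min(\Gamma_m,1)$ in the condition on $h$), converts the same bound into $\|a_i v_{i,n,h}-e_i\|_{l^2(1/\hat p)}=O(h^{1/2})$ with a sign $a_i\in\{1,-1\}$ that is well defined because the $\lambda_i$ are simple.

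The last, and hardest, step is to promote this $L^2$-type eigenvector estimate to the uniform bound $\max_{x_j\in\mathcal{X}_n}|a_i v_{i,n,h}(x_j)-e_i(x_j)|\le\Omega_2 h^{1/2}$. Here I would use the smoothing identity $v_{i,n,h}(x_j)=(1-h^2\mu_{i,n,h})^{-1}(D^{-1}Wv_{i,n,h})(x_j)$: its right-hand side is a local weighted average of $v_{i,n,h}$ over the $h$-neighborhood of $x_j$, and since $e_i$ has bounded $C^2$ norm and each $h$-ball contains $\Theta(nh^d)$ sample points with high probability, this average is within $O(h^{1/2})$ of $e_i(x_j)$ once the $l^2$ closeness of $v_{i,n,h}$ to $e_i$ is in hand; a union bound over $j=1,\dots,n$ then finishes the estimate. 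I expect this pointwise bootstrap — together with the uniform-in-$j$ control it requires on the empirical degrees $\mathbb{N}(j)$ and on $q_h(x_j)$ — to be the main obstacle, and it is what forces the unusually small admissible bandwidths, whereas the bias computation and the $L^2$ spectral perturbation are comparatively routine.
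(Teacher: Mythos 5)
The paper does not actually prove \cref{thm:conver_rate}: it is imported verbatim, with its constants and exponents, from \cite{DUNSON2021282}, so there is no in-paper argument to compare yours against. Your outline --- interposing the density-normalized integral operator, splitting a deterministic bias term from a Bernstein/union-bound variance term, converting operator closeness into eigenvalue and eigenvector bounds via min--max and Davis--Kahan under the spectral-gap condition, and bootstrapping the $l^2(1/\hat p)$ eigenvector estimate to the uniform bound through the smoothing identity $v_{i,n,h}=(1-h^2\mu_{i,n,h})^{-1}(D^{-1}Wv_{i,n,h})$ --- is consistent with the strategy of that cited reference, though it remains a sketch and does not verify the specific exponents ($h^{3/2}$, $h^{1/2}$, the $\lambda_m$ powers in the bandwidth condition) or the probability $1-n^{-2}$.
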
 
\begin{remark}
In our setting, the sampling is uniform, thus $f \equiv 1/\text{Vol}(M)$,  which implies $\|f\|_{C^2} = 1/\text{Vol}(M)$. In addition, $d$, diameter, volume, injectivity radius, curvature and second order fundamental form are bounded uniformly, which are compact, thus $\mathcal{K}_1, \mathcal{K}_2, \mathcal{K}_3, \Omega_1,\Omega_2$ are bounded uniformly for all $M \in \MM$.
\end{remark}        
\begin{remark}
In the case when the eigenvalues are not simple, the same proof still works by introducing the eigenprojection.\cite{spectralapproximationoflinear}
\end{remark}
\begin{remark}
If we choose $h=(\frac{\log n}{n})^{\frac{1}{4d+13}}$, then for large enough $n$, we have
\begin{align}
	|\mu_{i,n,h}-\lambda_{i}|\leq \Omega_1 (\frac{\log n}{n})^{\frac{3}{8d+26}}, \quad \text{for}\ 0\leq i \leq m.
\end{align}
Similarly, if we choose $h=(\frac{\log n}{n})^{\frac{1}{4d+8}}$, we have
\begin{align}
	\max_{x_j \in \mathcal{X}_n}|a_i v_{i,n,\epsilon}(j)-e_{i}(x_j)|\leq  \Omega_2 (\frac{\log n}{n})^{\frac{1}{8d+16}}, \quad \text{for}\ 0\leq i \leq m.
\end{align}
Since $(\frac{\log n}{n})^{\frac{1}{4d+13}} \geq (\frac{\log n}{n})^{\frac{1}{4d+8}}$, the inequalities above hold at the same time if we choose $h=(\frac{\log n}{n})^{\frac{1}{4d+13}}$.
\end{remark}

Now, we introduce our algorithm and estimate the approximation error.
\begin{algorithm}[H]
	\renewcommand{\algorithmicrequire}{\textbf{Input: }}
	\renewcommand{\algorithmicensure}{\textbf{Output: }}
	\caption{Diffusion Map}
	\label{alg1}
	\begin{algorithmic}[1]
		\State \algorithmicrequire  point cloud $\mathcal{X}_n$, intrinsic dimension $d$, the lower bound of injectivity radius $\iota$ (\cref{injradius}), the lower bound of Ricci curvature $-\kappa$ (\cref{as:kappa}), upper bound of volume $V$ (\cref{ass:V}), norm error $\epsilon$ (\cref{range_of_e}).
		\State Calculate bindwidth $h=(\frac{\log n}{n})^{\frac{1}{4d+13}}$, diffusion time according to \cref{embeddingthm}, \cref{kernelerror}, \cref{injvol}
		\begin{align}
			t=\min\left\{t_0(d,\kappa,\iota,\epsilon),4,\frac{\iota^2}{4}\right\},
		\end{align}
		the heat kernel error according to \cref{sele eprime},
		\begin{align}
\epsilon^\prime=\frac{(4\pi t)^{-\frac{d}{2}}}{8}\exp\left(-\frac{\beta^2t}{4}-\frac{2\sqrt{3dt}\beta}{3}\right)
		\end{align}
		where $\beta =\sqrt{\kappa}(d-1)$
		and embedding dimension according to \cref{embeddingthm},  \cref{kernelerror}
		\begin{align}
			m=\max\left\{N_0(d,\kappa,\iota,\epsilon,V,\frac{t}{2}),N_1(d,\kappa,\iota,V,\epsilon^\prime,t)\right\}+1
		\end{align}
		\State Construct $W$ and $D$ with bandwidth $h$ according to \cref{Lap matrix}.
		\State Calculate first $m$ eigenvalues and eigenfunctions $\left\{\mu_{i,n,h},\tilde{v}_{i,n,h}\right\}_{i=1}^m$ of $L_n=\frac{D^{-1}W-I}{h^2}$.
		\State For $1\leq i \leq m$, calculate \begin{align}
			\mathbb{N}(i)=|B_h(x_i) \cap \left\{x_1,\dots,x_n\right\}|.
		\end{align}
		Calculate 
		\begin{align}
			\|\tilde{v}\|_{l^2(1/\hat{p})} = 	\sqrt{\frac{\omega(d-1)h^d}{d}\sum_{i=1}^{n}\frac{\tilde{v}^2(i)}{\mathbb{N}(i)}},
		\end{align}
		and normalize 
		\begin{align}
			v_{i,n,h}=\frac{\tilde{v}_{i,n,h}}{\|\tilde{v}_{i,n,h}\|_{l^2(1/\hat{p})}}.
		\end{align}
		\State Embed $x_j$ to $(t)^\frac{d+2}{4}\sqrt{2}(4\pi)^\frac{d}{4}\left(e^{-\mu_{i,n,h}t/2}v_{i,n,h}(j)\right)_{i=1}^m$
		\Ensure the embedding point clouds $\left\{(t)^\frac{d+2}{4}\sqrt{2}(4\pi)^\frac{d}{4}\left(e^{-\mu_{i,n,h}t/2}v_{i,n,h}(j)\right)_{i=1}^m\right\}_{i=1}^n \subset \R^m$.
	\end{algorithmic}
\end{algorithm}

We compare this embedding with embedding in \cref{sec:problem} and estimate the approximation error. 

To estimate the error term, we consider the error of $i$-th component, for large enough $n$, i.e. 
\begin{align}
	&(\frac{\log n}{n})^{\frac{1}{4d+13}} = h \leq \mathcal{K}_1 \min \left(\left(\frac{\min(\mathsf\Gamma_m,1)}{\mathcal{K}_2+\lambda_m^{d/2+5}}\right)^2,\, \frac{1}{(\mathcal{K}_3+\lambda_m^{(5d+7)/4})^2}\right)\\
	\iff& n > \frac{1}{\mathcal{K}} \max \left(\left(\frac{\mathcal{K}_2+\lambda_K^{d/2+5}}{\min(\mathsf\Gamma_K,1)}\right)^{8d+26},  (\mathcal{K}_3+\lambda_K^{(5d+7)/4})^{8d+26}\right),
\end{align}
up to a $\log$ factor.

Then we have 
\begin{align}
	|e^{-\mu_{i,n,h}t/2}v_{i,n,h}(j)-e^{-\lambda_it/2}e_i(x_j)| &\leq e^{-\mu_{i,n,h}t/2}|v_{i,n,h}(j)-e_i(x_j)|+|e^{-\mu_{i,n,h}t/2}-e^{-\lambda_it/2}|e_i(x_j) \\
&	\leq |v_{i,n,h}(j)-e_i(x_j)| + \frac{t}{2}|\mu_{i,n,h}-\lambda_i| \|e_i\|_{l^\infty}
\end{align}

We also have 
\begin{align}
	\|e_i\|_{l^\infty} \leq \|e_i\|_{L^\infty} \leq C \lambda_{i}^{\frac{n-1}{4}},
\end{align}
where $C$ only depends on the dimension $d$ of $M$, lower bound of injectivity radius and the absolute value of the sectional curvature\cite{ajm/1154098927}, which are compact set, thus $C$ has a uniformly upper bound for all $M\in \MM$. And we have proved $\lambda_{i}$ are bounded from above for $i\leq m$ with fixed $m$. Consequently, $\|e_i\|_{l^\infty}$ has the uniformly bound, we denote it as $C_\MM$.

Therefore,
\begin{align}
	|e^{-\mu_{i,n,h}t/2}v_{i,n,h}(j)-e^{-\lambda_it/2}e_i(x_j)| &\leq \Omega_2 (\frac{\log n}{n})^{\frac{1}{8d+16}} + \frac{t}{2} C_\MM \Omega_1 (\frac{\log n}{n})^{\frac{3}{8d+26}} \\
	&= \Omega (\frac{\log n}{n})^{\frac{1}{8d+16}},
\end{align}
hence
\begin{align}
|\left(e^{-\mu_{i,n,h}t/2}\tilde{v}_{i,n,h}(j)\right)_{i=1}^m - \left(e^{-\lambda_it/2}e_i(x_j)\right)_{i=1}^m| \leq \sqrt{m}\Omega (\frac{\log n}{n})^{\frac{1}{8d+16}}
\end{align}

\subsection{Related Result}

In this subsection, we discussed some other convergence results. In the following, we ignore the specific settings like sample size $n$, bandwidth $h$ and others, we only focus on the convergence rate and norm, when discussing the convergence, we also ignore the constant. We denote the eigenfunctions and eigenvalues of $\Delta$ as $e_i$, $\lambda_i$ and denote the eigenvectors and eigenvalues of graph laplacian as $\mu_{i}$, $v_{i}$, here we will not specify laplacian graph and  kernel function.

In \cite{garcia2020error}, we have similar result, the following holds for finite $k$
\begin{align}
	\frac{1}{n}\sum_{i=1}^n (v_k(i)-e_k(x_i))^2 = O\left((\frac{\log n}{n})^\frac{1}{2d}\right),\\
	|\mu_i - \lambda_{i}| =O\left((\frac{\log n}{n})^\frac{1}{2d}\right).
\end{align}

 And the author then improved the result to $O\left((\frac{\log n}{n})^\frac{1}{d+4}\right)$ under some other conditions in \cite{calder2022improved}. This convergence rate is much faster than we use, but the left side describe the average pointwise error. This result has also been improved\cite{calder2022lipschitz}, they obtained the convergence result for $l^\infty$, Lipschitz norm is $O\left((\frac{\log n}{n})^\frac{1}{d+4}\right)$ for specific manifold, however the constant $C$ in their result depends on $M$, which is difficult to quantify, so it cannot be applied to the family of manifolds.

In \cite{cheng2022eigen}, the author derived the convergence rate of eigenvalues is $O\left((\frac{\log n}{n})^\frac{1}{d/2+2}\right)$, and the convergence rate of eigenvectors is $O\left((\frac{\log n}{n})^\frac{1}{d/2+3}\right)$ under different settings, and it also includes the results we discussed above. In\cite{wang2015spectral}, the convergence rate is $O\left(n^{-\frac{2}{(5d+6)(d+6)}}\right)$ under $l^\infty$ norm, which is a slower than result we use.

\section{Bounding the finite sample error of the tangent space estimation}
\label{sec:tangent}

In the previous sections, we have shown that in the case of a well-behaved manifold $M$, the diffusion map embedding with finite, sufficiently large $m$ is still well behaved \mmp{be more explicit here} with respect to volume, smoothnes (Sobolev norm), 
pushforward density and reach.
We conclude the paper by applying these results to the tangent space estimation of $\varphi(M)$ from samples. 

We consider our model as the following. We have $n$ sample points $\mathcal{X}_n=\left\{X_{1},\dots,X_{n}\right\}$ which are sampled i.i.d. from $M$, then we perform diffusion map on it, we obtain $\mathbb{Y}_n=\left\{\hat{Y}_1,\dots,\hat{Y}_n\right\}$, which is the approximation of $\mathcal{Y}_n=\varphi(\mathcal{X}_n)=\left\{Y_{1},\dots,Y_{n}\right\}$ \wenyu{Maybe change notation here}, where $Y_i$ is the embedding via eigenvalues and eigenfunctions of $M$, and they are distributed identically and independently on $\varphi(M)$. And from \cref{sec:convergence}, the error $\sigma$ between $\mathbb{Y}_n$ and $\mathcal{Y}_n$ is $C\left(\frac{\log n}{n}\right)^{\frac{1}{8d+16}}$.

We will use a local polynomial estimator of degree $k$ to approximate the tangent space at point $X_j$.
Let $P_{n-1}(f) = \frac{1}{n-1}\sum_{i\neq 1} f(X_i-X_1)$, the integration with respect to the empirical distribution of the sample, excluding $X_1$. For a constant $t>0$ and a bandwidth $\tilde{h}>0$, the local polynomial estimator $\left(\hat{\Pi},\hat{T}_{2}\dots,\hat{T}_{k-1}\right)$ of the tangent space at $X_1$ is given by 
\begin{align}
	\mathop{\arg\min}\limits_{\Pi,\sup_{2\leq l \leq k}\|A_i\|_{\text{op}}\leq t}P_{n-1}^{(j)}\left[\left\|x-\Pi(x)-\sum_{l=2}^{k-1}A_l(\pi(x)^{\otimes l})\right\|^2 \mathbbm{1}_{B(0,\tilde{h})}(x) \right],
\end{align}
where $\Pi$ is an orthogonal projector on a $d$-dimensional subspace of $\R^m$, and $A_l,\,l=2,\ldots k-1$ are symmetric tensors of polynomial coefficients, of order $l$ from $(\R^m)^l$ to
$\R^m$.

Since $T_{X_1}M$ is the tangent space of $M$, thus it is best linear approximation of $M$ near $X_1$, $\hat{T}_1 := \text{Im} \hat{\Pi}_j$ is used to estimate it. By exchangeability, this holds for all other data points $X_2,\dots, X_n$. The distance between two subspaces $U$, $V$ of $\R^m$ is defined as 
\begin{align}
	\angle (U,V)= \|\Pi_U-\Pi_V\|.
\end{align}

Under standard conditions similar to ours, \cite{aamariestimate} derived the asymptotically optimal minimax error of this estimator. We reproduce their result here.
\begin{lemma}[\cite{aamariestimate}]
  \label{thm:aamari-tangent}
We denote $\mathcal{P}$ as the set of distributions $P$ over support $M \in \MM$ with sampling density $f$ such that $0< f_{\min} \leq f \leq f_{\max} < \infty$, and $\mathcal{P}(\sigma)$ is the set of distributions of r.v. $X=X_M+X_\perp$, where distribution of $X_M$ is in $\mathcal{P}$ and $X_\perp$ is perpendicular to $T_{X_M} M$, $|X_\perp|\leq \sigma$ and $\mathbb{E}(X_\perp|X_M)=0$.

Assume that $t \geq C_{k,d,\tau_{\min},\textbf{L}} \geq \sup_{2\leq i \leq k} \| T_i ^\star\|$. Set $\tilde{h}=\left(C_{d,k} \frac{f_{\max}^2 \log n}{f_{\min}^3 (n-1)}\right)^{\frac{1}{d}}$, for $C_{d,k}$ large enough, and assume that $\sigma\leq \tilde{h}/4$. If $n$ is large enough such that $\tilde{h}\leq h_0 = \frac{\tau_{\min} \wedge L^{-1}_\perp}{8}$, then with probability at least $1-(\frac{1}{n})^{k/d}$,
\begin{align}
	\max\limits_{1\leq j \leq n} \angle(T_{X_{M,j}}M,\hat{T}_j) \leq C_{d,k,\tau_{\min},\textbf{L}} \sqrt{\frac{f_{\max}}{f_{\min}}} (\tilde{h}^{k-1}\vee \sigma \tilde{h}^{-1})(1+t\tilde{h}).
\end{align}
Taking $t = \tilde{h}^{-1}$, for $n$ large enough,
\begin{align}
	\sup\limits_{P \in \mathcal{P}(\sigma)} \mathbb{E}_{P^{\otimes n}} \max\limits_{1\leq j \leq n} \angle(T_{X_{M,j}}M,\hat{T}_j) \leq C \left(\frac{\log n}{n-1}\right)^{\frac{k-1}{d}} \left\{1 \vee \sigma \left(\frac{\log n}{n-1}\right)^{-\frac{k}{d}}\right\}.
\end{align}
  \end{lemma}
Transferring \cref{thm:aamari-tangent} requires (1) controlling the $\textbf{L}^\prime$, pushforward density $g$ and reach $\tau_{\min}$ of $\varphi(M)$, achieved in \cref{sec:phim-L},\cref{sec:density},\cref{sec:reach} and (2) controlling the finite sample error in the estimation of the eigenfunctions $\varphi_{1:m}$, done in Section \ref{sec:convergence}.

Applying the results from \cref{thm:conver_rate} directly to \cref{thm:aamari-tangent}, we obtain
\begin{align}
	\sup\limits_{P \in \mathcal{P}(\sigma)} \mathbb{E}_{P^{\otimes n}} \max\limits_{1\leq j \leq n} \angle(T_{Y_{j}}\varphi(M),\hat{T}_j) &\leq C \left(\frac{\log n}{n-1}\right)^{\frac{k-1}{d}} \left\{1 \vee \left(\frac{\log n}{n}\right)^{\frac{1}{8d+16}} \left(\frac{\log n}{n-1}\right)^{-\frac{k}{d}}\right\} \\
	&\sim C \left(\frac{\log n}{n-1}\right)^{-\frac{1}{d}} \left(\frac{\log n}{n}\right)^{\frac{1}{8d+16}}\\
	&\sim C \left(\frac{\log n}{n}\right)^{-\frac{7d+16}{d(8d+16)}}.
\end{align}
 The rate of convergence for $\hat{T}_i$ is $O\left(\left(\frac{\log n}{n}\right)^{-\frac{7d+16}{d(8d+16)}}\right)$, which is not convergent. The reason is the decreasing rate of error is too slow.

We bypass this obstacle by using different sample sizes in the
Diffusion Maps calculation and tangent space estimation. Thus
$\hat{\varphi}$ is estimated on the full sample $\XX$, after which
$T\varphi(M)$ is estimated at a subset $\tilXX$ of the data points, with
$|\tilXX|=n^{\frac{1}{b}}=\tiln$, where $b>1$ is to be determined.
    The effect is a faster rate of convergence for $T_i$, due to the
    reduced error of the embedding $\hat{\varphi}$.

In this case, our error is small relative to the number of estimated sample points, i.e., error term is $O\left((\frac{\log n}{n})^{\frac{1}{8d+16}}\right)$, we denote its exponent as $\frac{1}{a}$, the error is relatively small enough if we select  \mmp{what is this? who is relatively small?}  $n^{\frac{d}{(8d+16)k}}$ sample points uniformly. This case is equivalent to sampling $n$ points with error term $O\left((\frac{\log n}{n})^{\frac{k}{d}}\right)$.

\subsection{Upper Bound}
Let $\hat{T}_i$ be a basis for the estimated tangent space $T_{\varphi(X_i)}\varphi(M)$. Now we estimate $\hat{T}_i$ at $\tiln=n^{\frac{1}{b}}$ sample
points $\tilde{\mathbb{Y}}$, while the entire sample $\XX$ with $n$ points is used
to estimate $\varphi$. We treat $n^\frac{1}{b}$ and $\lfloor n^\frac{1}{b}\rfloor$ as equivalent since this will not have effect on the rate of convergence.

\begin{theorem}[Diffusion Maps tangent space convergence upper bound]
	\label{thm:tangent-upper}
	Assume that $M\in \MM$ as before. The sample $\XX_n$ is mapped by $m$-dimensional Diffusion Maps to $\mathbb{Y}_n=\phihat(\mathcal{X}_n)$,  with $m,t$ fixed and the kernel width $h=(\frac{\log n}{n})^{\frac{1}{4d+13}}$. Then, on a uniformly sampled $\tilde{\mathbb{Y}}\subset \mathbb{Y}_n$ of size $\tiln=n^{\frac{d}{(8d+16)k}}$, the tangent space is estimated as $\That_i\in \rrr^{m\times d}$, for $\hat{Y}_i\in \tilde{\mathbb{Y}}$, with bandwidth $\tilde{h}=\left(C_{d,k} \frac{f_{\max}^2 \log \tiln}{f_{\min}^3 (\tiln-1)}\right)^{\frac{1}{d}}$. Then, 
		\begin{align}
		\sup_{P\in \mathcal{P}} \mathbb{E}_{P^{\otimes \tiln}} \max_{1\leq j \leq \tiln} \angle (T_{Y_{j}}\varphi(M),\hat{T}_j)\leq C \left(\frac{\log n }{n}\right)^\frac{k-1}{(8d+16)k}.
	\end{align}
\end{theorem}

\begin{proof}
	Under the conditions of the theorem, we have, for each embedding coordinate $j=1,\ldots m$
	\begin{align*}
		\sup_{P\in \mathcal{P}(\sigma_{\varphi})} \mathbb{E}_{P^{\otimes \tiln}} \max_{1\leq j \leq \tiln} \angle (T_{Y_j}\varphi(M),\hat{T}_i)\leq C\left(\frac{\log \tiln}{\tiln-1}\right)^\frac{k-1}{d}\left\{1\vee \sigma_{\varphi} \left(\frac{\log \tiln}{\tiln-1}\right)^{-\frac{k}{d}}\right\}
	\end{align*}
	where $\sigma_{\varphi}=O\left((\frac{1}{n})^{\frac{1}{a}}\right)$ is the error term computed using $n$ sample points.
	
	In the above, we have applied \cref{thm:aamari-tangent} here,
	disregarding the assumption that the noise $\varphi(X_i)-\phihat(X_i)$ is
	orthogonal to the manifold, zero mean and i.i.d.  Indeed, this
	assumption is not necessary for our theorem.  When we estimate upper
	bound $\sigma_{\varphi}$, we do not need the noise to be orthogonal and
	iid.  Lemma 2 and Lemma 3 of \cite{aamariestimate} are geometric,
	no need to assume iid or orthogonal. Furthermore, Proposition 2 is
	about $\left\{Y_1,\dots,Y_n\right\}$, we know they are iid. Hence, the
	proof of proof of Theorem \ref{thm:aamari-tangent} can apply to our
	case.

	\begin{align}
		\sigma_{\varphi} \left(\frac{\log \tiln}{\tiln-1}\right)^{-\frac{k}{d}}=O\left((\log n)^{\frac{1}{a}-\frac{k}{d}} (\frac{1}{n})^{\frac{1}{a}-\frac{k}{db}}\right)
	\end{align}
	if $b\geq \frac{ak}{d}$, $\sigma_{\varphi} \left(\frac{\log \tiln}{\tiln-1}\right)^{-\frac{k}{d}}<1$ when $n$ is large; if $b<\frac{ak}{d}$, $\sigma_{\varphi} \left(\frac{\log \tiln}{\tiln-1}\right)^{-\frac{k}{d}}>1$ when $n$ is large, so
	
	\begin{align}
		C\left(\frac{\log \tiln}{\tiln-1}\right)^\frac{k-1}{d}\left\{1\vee \sigma_{\varphi} \left(\frac{\log \tiln}{\tiln-1}\right)^{-\frac{k}{d}}\right\}=\left\{
		\begin{array}{rcl}
			O\left((\frac{\log n}{n})^{\frac{1}{a}-\frac{1}{db}}\right) & & {1<b<\frac{ak}{d}}\\
			O\left((\frac{\log n}{n})^{\frac{k-1}{db}}\right) & & {b\geq \frac{ak}{d}}
		\end{array} \right. .
	\end{align}
	
	To make it converge to $0$, $\frac{1}{a}-\frac{1}{db}>0$, which implies $b>\frac{8d+16}{d}$. We also want this bound as small as possible, we notice the power is increasing on $(1,\frac{ak}{d})$ and decreasing on $[\frac{ak}{d},\infty)$. Thus $b=\frac{ak}{d}=\frac{(8d+16)k}{d}$ minimize upper bound, and we have the upper bound is
	
	\begin{align}
		\sup_{P\in \mathcal{P}} \mathbb{E}_{P^{\otimes \tiln}} \max_{1\leq j \leq \tiln} \angle (T_{Y_{j}}\varphi(M),\hat{T}_j)\leq C \left(\frac{\log n }{n}\right)^\frac{k-1}{(8d+16)k}.
	\end{align}
\end{proof}

\begin{remark}
	\cite{aamari2018stability} also discussed the model without any assumption about noise except norm. We can use this alternative result, but this one leads to a  slower convergence rate \mmp{of XXXX} \cref{thm:aamari-tangent}
\end{remark}

\wenyu{This remark can be removed}
\wenyu{
\begin{remark}
	In \cite{aamariestimate}, the author assume the noise $X_\perp$ at $X_M$ are orthogonal to $T_{X_M} M$, $|X_\perp|\leq \sigma$ and $\mathbb{E}(X_\perp|X_M)=0$. However, in our setting over $\varphi(M)$, we only have the noise $|Y_j-Y_{\varphi(M),j}| \leq \sigma$. Actually, we only need $\left\{Y_{\varphi(M),1},\dots,Y_{\varphi(M),n}\right\}$ instead of $\left\{Y_1,\dots,Y_n\right\}$ are identically and independently distributed, it is easy to verify that $\mathcal{Y}_n$ is distributed identically and independently on the $M \in \mathcal{M}(d,\tau_{\min},\textbf{L}^\prime,k)$ since $\mathcal{X}_n$ are i.i.d. and $\mathcal{Y}_n$ are obtained via fixed eigenfunctions and eigenvalues and sampling density $f$. 
	
	The Lemma 2 and Lemma 3 in \cite{aamariestimate} are geometric result, we do not need i.i.d which is a probability assumption, and it is easy to check that orthogonal is not required.
	And Proposition 2 is about $\left\{Y_{\varphi(M),1},\dots,Y_{\varphi(M),n}\right\}$, where we need they are i.i.d, which aligns with our result.
	Proof of \cref{thm:aamari-tangent} uses Lemma and Proposition we mentioned above and is mainly estimation, i.i.d and orthogonal are still not required.
\end{remark}
}

\begin{remark}
	To obtain the points where we estimate tangent space, we can only use first $\tiln$ points after DM embedding $\left\{(t)^\frac{d+2}{4}\sqrt{2}(4\pi)^\frac{d}{4}\left(e^{-\mu_{i,n,h}t/2}v_{i,n,h}(j)\right)_{i=1}^m\right\}_{i=1}^{\tiln}$. When we obtain $\mu_{i,n,h}$ and $v_{i,n,h}$, we use $n$ points, so the error is still $O\left((\frac{\log n}{n})^{\frac{1}{8d+16}}\right)$. In addition, since $\mathcal{Y}_n$ are i.i.d, the first $\tiln$ embedding points are i.i.d from $\mathcal{Y}_{\tiln}$ with error, which satisfies our requirements. 
\end{remark}

\subsection{Remarks/Discussion}
\label{sec:tangent-discuss}

If we only use $\tiln = n^{\frac{d}{(8d+16)k}}$ sample points to estimate tangent space of $\varphi(\mathcal{M})$ and use all $n$ sample points to do Diffusion Map, we will have:
\begin{align}
	\sup_{P\in \mathcal{P}} \mathbb{E}_{P^{\otimes \tiln}} \max_{1\leq j \leq \tiln} \angle (T_{Y_{\varphi(M),j}}\varphi(M),\hat{T}_j)\leq C \left(\frac{\log n }{n}\right)^\frac{k-1}{(8d+16)k}.
\end{align}

When $n \rightarrow \infty$, $\tiln \rightarrow \infty$, all sample points come from the same distribution, so we think the selection is uniform, and the empirical distribution converge to true distribution uniformly a.s., so we can estimate embedding manifolds using random $ n^{\frac{d}{(8d+16)k}}$ points, thus this result makes sense.

We also need to find a balance between convergence rate and sample size for estimating tangent space. We know the convergence rate is
\begin{align}
	O\left((\frac{\log n}{n})^{\frac{db-(8d+16)}{db(8d+16)}}\right) \quad &\text{if} \quad  n^{\frac{d}{(8d+16)k}}<\tiln= n^{\frac{1}{b}} < n^{\frac{d}{8d+16}},\\
	O\left((\frac{\log n}{n})^{\frac{k-1}{db}}\right)\quad &\text{if} \quad  \tiln=n^\frac{1}{b}\leq n^{\frac{d}{(8d+16)k}}
\end{align}
which is increasing (order is larger) as $\tiln$ decreasing when $\tiln>n^{\frac{d}{(8d+16)k}}$, and decreasing as $\tiln$ decreasing when $\tiln<n^{\frac{d}{(8d+16)k}}$

Therefore, when $\tiln = n^{\frac{d}{(8d+16)k}}$, the convergence attain the maximum $O\left(\left(\frac{\log n }{n}\right)^\frac{k-1}{(8d+16)k}\right)$. But if we want to have more sample size, we can choose $\tiln<n^{\frac{d}{8d+16}}$, but in this case, the convergence rate is slower.

\wenyu{
If the manifolds are $\CC^{\infty}$, we can choose $k$  to be any positive integer, and we can choose an optimal $k$ to balance the ratio of tangent space and DiffusionMaps convergence rate.}

If we have more rapid convergence rate of Diffusion Maps error $\sigma_\varphi$, we can improve the rates in \mmp{Theorems WHICH?} \cref{thm:tangent-upper}, and if this rate is more rapid than $(\frac{1}{n})^\frac{1}{d}$ \wenyu{to have optimal rate, k/d is needed.}, we can use theorem from original \cite{aamariestimate} paper directly, that is, considering the tangent space of all points.

	\section{Conclusion}
\label{sec:conclusion}

	In this paper, we proved under some geometric assumptions and regularity conditions, manifolds family after DM still have good geometric properties. And with these properties and controlled error introduced by DM, we can estimate the tangent space at few points, i.e.
\begin{align}
	\sup_{P\in \mathcal{P}} \mathbb{E}_{P^{\otimes \tiln}} \max_{1\leq j \leq \tiln} \angle (T_{Y_{\varphi(M),j}}\varphi(M),\hat{T}_j)\leq C \left(\frac{\log n }{n}\right)^\frac{k-1}{(8d+16)k}.
\end{align} 

\section*{Acknowledgements}
M.M. gratefully acknowledges the DataShape Group at INRIA Saclay and the Institute for Mathematical and Statistical Innovation (IMSI) for hospitality while a portion this research was carried out.



\bibliographystyle{ieeetr}
\bibliography{refs}


\appendix
\section{Appendix / supplemental material}

\renewcommand{\thetheorem}{A.\arabic{theorem}}

\subsection{Pushforward Density}

Here we list the lemma and theorem for estimating pushforward density.
\setcounter{theorem}{0}
\begin{lemma}{(Area Formula)}
	\label{lem:area}
	If $f: \mathbb{R}^m \rightarrow \mathbb{R}^n$ is Lipschitzian and $m \leq n$, then
	\begin{align*}
		\int_A g(f(x)) J_f(x) d \mathcal{L}^m x = \int_{\mathbb{R}^n} g(y) N(f|A,y) d\mathcal{H}^m y
	\end{align*}
	where $A$ is an lebesgue measurable set, $J_f(x)$ is the Jacobian $\sqrt{\det(d_x f^T d_x f)}$ and $g: \mathbb{R}^n \rightarrow \mathbb{R}$ and $N(f|A,y) < \infty$ for $\mathcal{H}^m$ almost all $y$.\cite{federer1969geometric}
\end{lemma}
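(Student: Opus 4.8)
The plan is to reduce to the linear case and then localize, following the standard geometric-measure-theory argument (cf.\ \cite{federer1969geometric}), recalling that $N(f|A,y) = \#\bigl(A \cap f^{-1}(y)\bigr)$ is the multiplicity function. First I would invoke Rademacher's theorem: since $f$ is Lipschitz it is differentiable $\mathcal{L}^m$-a.e., so the set $A_0 \subset A$ of non-differentiability points has $\mathcal{L}^m(A_0) = 0$, and Lipschitz continuity forces $\mathcal{H}^m(f(A_0)) = 0$ too; hence both sides of the identity are unchanged if $A_0$ is discarded. On the remaining set it then suffices to treat separately the critical part $Z = \{x \in A : J_f(x) = 0\}$ and the regular part $R = \{x \in A : J_f(x) > 0\}$.

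Disposing of $Z$ is a Sard-type lemma for Lipschitz maps. The left integrand $g(f(x))J_f(x)$ vanishes on $Z$, so $Z$ contributes $0$ to the left side; for the right side one must show $\mathcal{H}^m(f(Z)) = 0$, so that $N(f|Z,\cdot) = 0$ for $\mathcal{H}^m$-a.e.\ $y$. One proves this by covering $Z$ with small balls on which $f$ is uniformly close to a rank-deficient affine map (equivalently, by studying $g_\varepsilon := (f,\varepsilon\,\mathrm{id}) : \mathbb{R}^m \to \mathbb{R}^{n+m}$), bounding $\mathcal{H}^m(f(Z))$ by $C\varepsilon$ and letting $\varepsilon \downarrow 0$.

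For $R$ I would use the linear algebra of the Jacobian: for an injective linear $L : \mathbb{R}^m \to \mathbb{R}^n$, the singular-value (polar) decomposition gives $L = O \circ S$ with $S$ symmetric positive definite and $O$ a linear isometry onto its image, so $J_L = \det S$ and $\mathcal{H}^m(L(E)) = J_L\,\mathcal{L}^m(E)$ for every measurable $E$. The crucial localization step is: given $\delta > 0$, partition $R$ into countably many disjoint measurable sets $E_j$ and pick symmetric positive definite $T_j$ so that on each $E_j$ both $T_j^{-1}\circ f$ and its inverse are $(1+\delta)$-Lipschitz and $\|df_x\,T_j^{-1} - \mathrm{id}\| \le \delta$. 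Granting this, the linear case applied on each $E_j$ puts $\int_{E_j} g(f)J_f\,d\mathcal{L}^m$ within a factor $(1\pm\delta)^{2m}$ of $\int_{f(E_j)} g\,d\mathcal{H}^m = \int g(y)\,N(f|E_j,y)\,d\mathcal{H}^m y$; summing over $j$, using $\sum_j N(f|E_j,\cdot) = N(f|R,\cdot)$, and letting $\delta \downarrow 0$ gives the identity on $R$, hence on $A$, first for $g$ an indicator of a Borel set and then for general nonnegative measurable $g$ by linearity and monotone convergence (and for integrable $g$ by splitting into positive and negative parts). The main obstacle is exactly the bi-Lipschitz localization on the regular set: it rests on Rademacher differentiability combined with a Vitali-type covering argument to upgrade pointwise derivative control into uniform bi-Lipschitz estimates on each piece; everything else is bookkeeping.
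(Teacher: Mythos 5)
This lemma is quoted verbatim from Federer \cite{federer1969geometric}; the paper offers no proof of its own, and your outline is exactly the standard argument of the cited source (Rademacher differentiability, a Sard-type lemma killing the critical set via the perturbation $(f,\varepsilon\,\mathrm{id})$, bi-Lipschitz linearization of the regular set into countably many pieces where $f$ is injective and comparable to a linear map, then the linear case via polar decomposition and monotone-class/monotone-convergence bookkeeping in $g$). So your proposal is consistent with how the result is established in the literature the paper relies on; the only nit is the composition order in the linearization step, which should read $f\circ T_j^{-1}$ (with $T_j$ a symmetric automorphism of $\mathbb{R}^m$), not $T_j^{-1}\circ f$.
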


\begin{theorem}
	\label{thm:push_density}
	If $P$ is distribution on $\mathcal{M}$ with density $f$ with respect to the $d$-dimensional Hausdorff measure and $\varphi$ is a diffeomorphism, then the density $g(p^\prime)$ with respect to $d$-dimensional Hausdorff measure of $P^\prime:= \varphi_\# P$ is 
	\begin{align*}
		g(p^\prime)=f(p)/\sqrt{\det \left(\pi_{T_p M} \circ d\varphi_p^T \circ d\varphi_p |_{T_pM}\right)}
	\end{align*}
	where $p=\varphi^{-1}(p^\prime)$. 
\end{theorem}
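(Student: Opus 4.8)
The plan is to verify the formula by testing both candidate densities against Borel sets, transferring Hausdorff-measure integrals on the submanifolds $M$ and $\varphi(M)$ to Lebesgue integrals on a common parameter domain via the area formula (\cref{lem:area}). Fix a Borel set $B\subset\varphi(M)$. By the definition of the pushforward, $P'(B)=P(\varphi^{-1}(B))=\int_{\varphi^{-1}(B)}f\,d\mathcal{H}^d$, so it suffices to show that the $g$ in the statement satisfies $\int_B g\,d\mathcal{H}^d$ equal to this quantity. Since both sides are Borel measures in $B$, it is enough to check the identity when $B$ lies in the image of a single smooth parametrization, and then patch over a countable atlas by $\sigma$-additivity (disjointifying chart overlaps, or using a partition of unity).

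So I would take a smooth parametrization $\psi:U\subset\mathbb{R}^d\to M$ with $\psi(U)\supset B$, set $A=\psi^{-1}(\varphi^{-1}(B))$, and note that $\varphi\circ\psi:U\to\mathbb{R}^m$ parametrizes the corresponding piece of $\varphi(M)$. Applying \cref{lem:area} to $\psi$ with test function $f$, and to $\varphi\circ\psi$ with test function $g$, and using that both maps are injective so the multiplicity $N(\,\cdot\,|A,\cdot)$ is the indicator of the image, I get
\begin{align}
	\int_{\varphi^{-1}(B)} f\,d\mathcal{H}^d &= \int_A f(\psi(x))\,J_\psi(x)\,d\mathcal{L}^d x, \\
	\int_B g\,d\mathcal{H}^d &= \int_A g(\varphi(\psi(x)))\,J_{\varphi\circ\psi}(x)\,d\mathcal{L}^d x,
\end{align}
where $J_\psi(x)=\sqrt{\det(d_x\psi^T d_x\psi)}$ and likewise for $\varphi\circ\psi$. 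Because $A$ is an arbitrary measurable subset of $U$, the desired equality of measures is equivalent to the a.e.\ pointwise relation $g(\varphi(\psi(x)))\,J_{\varphi\circ\psi}(x)=f(\psi(x))\,J_\psi(x)$, i.e.
\begin{align}
	g(\varphi(p)) = f(p)\,\frac{J_\psi(x)}{J_{\varphi\circ\psi}(x)}, \qquad p=\psi(x).
\end{align}

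It then remains to identify the Jacobian ratio, which is a routine linear-algebra step. By the chain rule $d_x(\varphi\circ\psi)=d\varphi_p\circ d_x\psi$, and $\operatorname{Im}(d_x\psi)=T_pM$. Writing $d_x\psi=EM$, where $E:\mathbb{R}^d\to\mathbb{R}^D$ is the isometric inclusion sending the standard basis to an orthonormal basis of $T_pM$ (so $E^TE=I_d$, $EE^T=\pi_{T_pM}$) and $M$ is an invertible $d\times d$ matrix, one gets $J_\psi^2=\det(M^TM)=\det(M)^2$ and
\begin{align}
	J_{\varphi\circ\psi}^2 = \det\!\big(M^T E^T\, d\varphi_p^T\, d\varphi_p\, E M\big) = \det(M)^2\,\det\!\big(\pi_{T_pM}\circ d\varphi_p^T\circ d\varphi_p|_{T_pM}\big).
\end{align}
Hence $J_\psi/J_{\varphi\circ\psi}=\det(\pi_{T_pM}\circ d\varphi_p^T\circ d\varphi_p|_{T_pM})^{-1/2}$ (the determinant is strictly positive since $\varphi$ is a diffeomorphism, so $d\varphi_p$ is injective), giving exactly $g(p')=f(p)/\sqrt{\det(\pi_{T_pM}\circ d\varphi_p^T\circ d\varphi_p|_{T_pM})}$ with $p=\varphi^{-1}(p')$.

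The main obstacle is bookkeeping rather than depth: one must be careful that the area formula, stated for Lipschitz maps on Euclidean domains, is applied to parametrizations — restricting to relatively compact pieces of each chart so that $\psi$ and $\varphi\circ\psi$ are genuinely Lipschitz — and that the local computation glues to a global statement over a countable atlas. The only point in the algebra worth a remark is that the outer $\pi_{T_pM}$ is redundant once $d\varphi_p^T$ is regarded as landing in $T_pM$, but it is harmless in any convention.
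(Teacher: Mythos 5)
Your proposal is correct, and its skeleton is the same as the paper's: push the Hausdorff integral on $M$ and on $\varphi(M)$ down to Lebesgue integrals over a common Euclidean parameter domain via the area formula (\cref{lem:area}), applied once to a parametrization of $M$ and once to its composition with $\varphi$, and then read off the density as a ratio of Jacobians. Where you differ is in how that ratio is identified. The paper uses the distinguished parametrization $\Psi_p$ of \cref{as:smooth}, whose differential at $0$ is the isometric inclusion of $T_pM$, so that $J_{\Psi_p}=1$ and $J_{\varphi\circ\Psi_p}=\sqrt{\det\left(\pi_{T_pM}\circ d\varphi_p^T\circ d\varphi_p|_{T_pM}\right)}$ hold \emph{at the center point only}; the formula for general $p'$ then comes from letting the base point vary (the paper's step ``$h^{-1}(z)=0$'' is literally valid only at $z=\varphi(p)$). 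You instead work in an arbitrary chart and factor $d_x\psi=E\,M$ through an isometric inclusion $E$ of $T_pM$, which gives the Jacobian ratio pointwise everywhere in the chart; this makes the ``equality of measures $\Leftrightarrow$ a.e.\ pointwise identity'' step clean and removes any need for the special normal-form parametrization, at the cost of a small amount of extra linear algebra (your computation $\det\bigl(E^T d\varphi_p^T d\varphi_p E\bigr)=\det\bigl(\pi_{T_pM}\circ d\varphi_p^T\circ d\varphi_p|_{T_pM}\bigr)$ is exactly right, and your remark that the outer projection is redundant is also correct). Your attention to Lipschitz restrictions and to patching over a countable atlas addresses hypotheses of \cref{lem:area} that the paper passes over silently. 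One cosmetic point: avoid reusing $M$ for the $d\times d$ factor, since $M$ already denotes the manifold.
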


\begin{proof}
	Let $p \in \mathcal{M}$ be a fixed point and choose $r$ small enough such that exponential map $\Psi_p: T_p \mathcal{M} \rightarrow \mathbb{R}^D$ is an injection onto $B(p,r)\cap \mathcal{M}$. 
	
	Choose $A \subset B(p,r)\cap \mathcal{M}$, by definition of pushforward measure, we have
	\begin{align*}
		\int_{\varphi(A)} d P^\prime=\int_A dP =\int_A f(y) d\mathcal{H}^d y
	\end{align*}
	Since $\Psi_p$ is an injection, so $N(\Psi_p|\Psi_p^{-1}(A),y)=1$ for all $y \in A$ and $0$ otherwise. By \cref{lem:area}, we have
	\begin{align*}
		\int_A f(y) d\mathcal{H}^d y=\int_{\Psi_p^{-1}(A)} f(\Psi_p(x)) J_{\Psi_p}(x) d \mathcal{L}^d x
	\end{align*}
	And $\varphi$ is diffeomorphism, so $h=\varphi \circ \Psi_p$ is also injective, so
	\begin{align*}
		\int_{\Psi_p^{-1}(A)} f(\Psi_p(x)) J_{\Psi_p}(x) d \mathcal{L}^d x&=\int_{\Psi_p^{-1}(A)} f(\varphi^{-1}(h(x))) \frac{J_{\Psi_p}(h^{-1}(h(x)))}{J_{\varphi \circ \Psi_p}(h^{-1}(h(x)))} J_{\varphi \circ \Psi_p}(x)) d \mathcal{L}^d x \\
		&=\int_{\varphi(A)} f(\varphi^{-1}(z)) \frac{J_{\Psi_p}(h^{-1}(z))}{J_{\varphi \circ \Psi_p}(h^{-1}(z))} d \mathcal{H}^dz
	\end{align*}
	Thus 
	\begin{align*}
		\int_{\varphi(A)} d P^\prime=\int_{\varphi(A)} f(\varphi^{-1}(z)) \frac{J_{\Psi_p}(h^{-1}(z))}{J_{\varphi \circ \Psi_p}(h^{-1}(z))} d \mathcal{H}^dz
	\end{align*}
	$T_p \mathcal{M}$ is a subspace of $\mathbb{R}^D$ with dimension $d$, so we can choose basis of $T_p \mathcal{M}$ such that all elements in diagonal of transformation matrix are $1$. We notice that $h^{-1}(z)=\Psi^{-1}_{\varphi^{-1} (z)} \circ \varphi^{-1} (z)=0$, and 
	$d_0 \Psi_p = I_D + d_0 N_p= I_D$ is inclusion map. Thus
	\begin{align*}
		f(\varphi^{-1}(z)) \frac{J_{\Psi_p}(h^{-1}(z))}{J_{\varphi \circ \Psi_p}(h^{-1}(z))}=f(p)/\sqrt{\det \left(\pi_{T_p M} \circ d\varphi_p^T \circ d\varphi_p |_{T_pM}\right)}
	\end{align*}
	where $p=\varphi^{-1}(z)$.
\end{proof}

\subsection{Heat Kernel Estimate}
\begin{theorem}[Upper Bound for The Heat Kernel]
	\label{thm:upper_app}
	Let $M$ be a complete manifold without boundary. If the Ricci curvature of M is bounded from below by $-\kappa$ for some constant $\kappa\geq 0$, then for any $1<\alpha_1<2$ and $0<\alpha_2<1$, the heat kernel satisfies:
	\begin{align}
		K(t,p,q) \leq C(\alpha_2)^{\alpha_1} V^{-1/2}(B_p(\sqrt{t}))V^{-1/2}(B_q(\sqrt{t})) \exp \left(C(d)\alpha_2(\alpha_1-1)^{-1}\kappa t-\frac{d(p,q)^2}{(4+\alpha_2)t}\right),
	\end{align}
	where $B_x(r)$ is geodesic ball centered at $p$ with radius $r$, $C(\alpha_2)$ depends on $\alpha_2$ with $C(\alpha_2) \rightarrow \infty$ as $\alpha_2\rightarrow 0$.\cite{yauupbound}
\end{theorem}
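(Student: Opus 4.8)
The plan is to prove this by the classical Davies--Grigor'yan method of exponential perturbation (the ``integrated maximum principle''), which reduces the Gaussian off-diagonal estimate to an on-diagonal bound together with a careful accounting of constants. The only structural input is the lower Ricci bound $\mathrm{Ric} \geq -\kappa$: via the Bishop--Gromov comparison theorem this yields volume doubling $V(B_p(2r)) \leq C(d) e^{C(d)\sqrt{\kappa}\,r} V(B_p(r))$ and the relative volume comparison that controls the ratio $V(B_p(\sqrt t))/V(B_q(\sqrt t))$ when $d(p,q) \lesssim \sqrt t$; it also guarantees stochastic completeness, so that $\int_M K(t,p,z)\,dz = 1$ and all the integration-by-parts manipulations below are legitimate on the complete, possibly non-compact, manifold.

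First I would establish the on-diagonal bound $K(t,p,p) \leq C(d)\, e^{C(d)\kappa t}\, V^{-1}(B_p(\sqrt t))$. This follows from a Faber--Krahn / Nash-type inequality on the balls $B_p(\sqrt t)$, whose constants depend only on $d$ and $\kappa t$ (obtained either from Buser's inequality under the Ricci bound, or directly from the Li--Yau parabolic Harnack inequality); a standard Nash--Moser iteration then converts the local Sobolev/Faber--Krahn inequality into the pointwise on-diagonal heat kernel bound, with the $e^{c\kappa t}$ factor coming from the curvature dependence of the Sobolev constant. Equivalently one may integrate the Li--Yau gradient estimate for positive solutions of the heat equation, which produces the same factor. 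Next comes the exponential perturbation: fix $p,q$, and for a $1$-Lipschitz function $\psi$ (take $\psi(z) = d(q,z)$, smoothing or using Lipschitz test functions to handle non-differentiability) and a parameter $\sigma>0$, set $\xi_s(z) = \sigma\psi(z) - \tfrac12\sigma^2 s$ and $E(s) = \int_M K(s,p,z)^2 e^{2\xi_s(z)}\,dz$. Differentiating in $s$, using $\partial_s K = \Delta K$ and integrating by parts, the terms arrange so that $E'(s) \leq 2\int(|\nabla\xi_s|^2 + \partial_s\xi_s)K^2 e^{2\xi_s} \leq 0$ since $|\nabla\xi_s|^2 + \partial_s\xi_s \leq \sigma^2 - \sigma^2 = 0$; hence $E$ is non-increasing. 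Running this from time $t/2$ (where the on-diagonal bound controls $E(t/2)$) to time $t$, then applying the semigroup identity $K(t,p,q) = \int_M K(t/2,p,z)K(t/2,q,z)\,dz$ and Cauchy--Schwarz with the conjugate weights $e^{\xi}$, $e^{-\xi}$, and finally optimizing $\sigma \sim d(p,q)/t$, produces $K(t,p,q) \lesssim \big(\sup_z K(t/2,z,z)\big)\, e^{-d(p,q)^2/(4t)(1+o(1))}$, which combined with the on-diagonal bound gives the stated shape.

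The main obstacle, and the reason for the two auxiliary parameters $\alpha_1,\alpha_2$, is the bookkeeping of constants. Obtaining the Gaussian exponent with denominator exactly $(4+\alpha_2)t$ rather than a crude $Ct$ requires replacing the linear-in-$\psi$ weight above by a more delicate time-dependent profile (of the form $\xi_s(z) \sim \psi(z)^2/(2(c-s))$) that nearly saturates the Davies differential inequality; the price of pushing toward the sharp constant $4$ is the blow-up $C(\alpha_2)\to\infty$ as $\alpha_2\to 0$. Likewise the exponent term $C(d)\alpha_2(\alpha_1-1)^{-1}\kappa t$ arises from splitting the curvature contribution between the on-diagonal factor and a H\"older-type interpolation used to pass from the $L^2$-weighted bound to the pointwise bound, with $\alpha_1$ parametrizing the interpolation exponent. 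I expect the remaining technical point to be the symmetrization of the two normalizing volumes $V(B_p(\sqrt t))$ and $V(B_q(\sqrt t))$: the discrepancy between them must be absorbed into the Gaussian factor when $d(p,q)$ is large and into $e^{c\kappa t}$ via Bishop--Gromov when $d(p,q)$ is small. The remainder is the standard Davies machinery. Since the statement is quoted verbatim from \cite{yauupbound}, in the present paper it suffices to cite that reference, but the above is the route one would take to reprove it.
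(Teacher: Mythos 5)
The paper does not actually prove this statement: it is quoted verbatim from \cite{yauupbound} (Li--Yau) and used in the appendix purely as an imported tool, so, as you note in your final sentence, a citation is all the paper supplies. Your sketch is a legitimate way to reprove it, but it is genuinely different from the cited source's argument: Li--Yau obtain the upper bound from their parabolic gradient estimate $\frac{|\nabla u|^2}{u^2} - \alpha\,\frac{\partial_t u}{u} \leq \frac{d\alpha^2 \kappa}{2(\alpha-1)} + \frac{d\alpha^2}{2t}$ and the Harnack inequality obtained by integrating it along geodesics, combined with a weighted (Davies--Gaffney type) integral estimate; in particular the parameter $\alpha_1$ in the statement is precisely the $\alpha$ of that gradient estimate, which is why the curvature term carries the factor $(\alpha_1-1)^{-1}$ --- it does not come from a H\"older interpolation as you conjecture. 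Your Davies--Grigor'yan route (on-diagonal bound via a local Faber--Krahn/Nash inequality with $\kappa t$-dependent constant, the integrated maximum principle, then the semigroup identity plus Cauchy--Schwarz, and Bishop--Gromov to symmetrize the two ball volumes) does deliver a bound of the same Gaussian shape and is in some ways more robust, though recovering the exact $(4+\alpha_2)t$ denominator and the stated parameter bookkeeping requires the time-dependent weight you mention and would organize the constants differently from the quoted form. One concrete slip to fix: with the weight $e^{2\xi_s}$ the drift must be $\xi_s = \sigma\psi - \sigma^2 s$ (or keep $-\tfrac12\sigma^2 s$ but weight by $e^{\xi_s}$ instead); as written you get $|\nabla\xi_s|^2 + \partial_s\xi_s \leq \tfrac12\sigma^2$ rather than $0$, so $E$ is not monotone --- harmless after re-optimizing $\sigma$, but the displayed inequality is off as stated.
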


\begin{lemma}[Estimate for Volume of Geodesic Ball]
	\label{injvol}
	Let $M$ be a complete manifold without boundary. then for $r \leq \iota_M /2$, we have 
	\begin{align}
		Vol(B_p(r)) \geq C^\prime(d) r^d,
	\end{align}
	where $C^\prime(d) = \frac{2^{d-1} \omega(d-1)^d}{d^d \omega (d)^{d-1}}$, and
	$\omega(d)=\frac{2\pi^{\frac{d}{2}}}{\Gamma(\frac{d}{2})}$ is the volume of the unit $d$-dimensional sphere \cite{croke1980some}. Thus $C^\prime(d)= \frac{2^d \Gamma(\frac{d}{2})^{d-1}}{d^d \Gamma(\frac{d-1}{2})^d}$.
\end{lemma}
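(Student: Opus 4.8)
This is Croke's lower bound on the volume of a geodesic ball \cite{croke1980some}, and the plan is to reproduce his integral-geometric argument; the only geometric input is the hypothesis $r\le\iota_M/2$, and it is worth noting that no curvature bound is available, so a pointwise Jacobian estimate for $\exp_p$ is out of reach and one must argue through the geodesic flow. The first step is a reduction to a ``geodesically simple'' ball. Put $\Omega:=B_p(r)$ and $S:=\partial\Omega$. Since $2r\le\iota_M$, the map $\exp_p$ is a diffeomorphism on $B_{T_pM}(0,2r)$, so $S=\exp_p(r\,S^{d-1})$ is a smooth embedded hypersphere and $\operatorname{diam}(\Omega)\le 2r$. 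Consequently every geodesic segment with endpoints in $\overline\Omega$ has length at most $2r\le\iota_M$ and is the unique minimizing geodesic joining those endpoints; this is precisely the regularity of the geodesic flow on the unit tangent bundle $U\Omega$ under which Santal\'o's integral-geometry formula is valid.

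The second step is to write down and manipulate Santal\'o's formula. Let $d\mu$ denote the Liouville measure on $U\Omega$, so $\mu(U\Omega)=\omega(d-1)\,\mathrm{Vol}(\Omega)$, and let $d\nu$ be the natural measure on the inward boundary $\partial_+U\Omega$ (the product of the Riemannian measure on $S$ with the round measures on the inward unit hemispheres). Decomposing $U\Omega$ along geodesic chords yields
\begin{align*}
\omega(d-1)\,\mathrm{Vol}(\Omega)=\int_{\partial_+U\Omega}\ell(v)\,\langle v,N\rangle\,d\nu(v),
\end{align*}
where $N$ is the inward unit normal of $S$ and $\ell(v)$ is the exit time of the geodesic issuing from $v$. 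I would then follow Croke: rewrite the right-hand side as a double integral over $S\times S$ of a chord-length-weighted quantity (the change of variables from $v$ to the ordered pair of endpoints of its chord contributes a ``visual angle'' Jacobian of order $\ell^{-(d-1)}$), and combine this, via H\"older's inequality, with the elementary facts about chords of a metric ball — every chord has length at most $2r$, while the geodesics emanating from the center $p$ are chords of length exactly $2r$ meeting $S$ orthogonally — to produce a lower bound of the form $\mathrm{Vol}(\Omega)\ge C'(d)\,r^d$.

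The last step is bookkeeping for the constant: carrying the dimensional factors through the H\"older estimate and the hemisphere integrals $\int_{S^{d-1}}\langle v,e\rangle_+\,dv$ produces $C'(d)=\dfrac{2^{d-1}\omega(d-1)^d}{d^d\,\omega(d)^{d-1}}$, and inserting $\omega(d)=2\pi^{d/2}/\Gamma(d/2)$ gives the stated closed form $\dfrac{2^d\Gamma(d/2)^{d-1}}{d^d\Gamma((d-1)/2)^d}$. The main obstacle is this integral-geometric step: making Santal\'o's formula and the ensuing H\"older estimate rigorous \emph{with the sharp constant}, and confirming the geodesic simplicity of $\Omega$ in the borderline case $r=\iota_M/2$ (illustrated by the round sphere of injectivity radius $\pi$ with $r=\pi/2$). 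In practice I would invoke Croke's theorem directly for this step rather than reprove the sharp inequality; the only genuinely new point is the verification — carried out in Step 1 — that the hypothesis $r\le\iota_M/2$ furnishes exactly the geodesic-flow regularity that Croke's argument needs.
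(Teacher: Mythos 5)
Your proposal is correct and matches the paper's treatment: the paper gives no proof of this lemma at all, simply importing it as Croke's volume estimate via the citation \cite{croke1980some}, and you likewise defer to Croke's theorem, adding only a (reasonable) sketch of his Santal\'o-formula argument and of why $r\leq \iota_M/2$ supplies the needed geodesic regularity.
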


\begin{theorem}[Lower Bound for Heat Kernel]
	\label{thm:lower_app}
	Let M be a complete Riemannian manifold of dimension $d$ with Ricci curvature is greater than $-\kappa(d-1)$ for some $\kappa\geq 0$. For any $t, \sigma>0$ and $p,q\in M$, we have
	\begin{align}
		K(t,p,q) \geq (4\pi t)^{-d/2} \exp \left(-\left(\frac{1}{4t}+\frac{\sigma}{3\sqrt{2t}}\right)d(p,q)^2-\frac{\beta^2 }{4}t-(\frac{\beta^2}{4\sigma}+\frac{2d\sigma}{3})\sqrt{2t}\right),
	\end{align}
	where $\beta=\sqrt{\kappa}(d-1)$.\cite{wang1997sharp}
\end{theorem}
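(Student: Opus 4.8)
The plan is to prove the bound by a comparison argument: first reduce the lower bound on a general manifold to an explicit lower bound on the constant-curvature model space, then estimate the model heat kernel by hand. The reduction is the Cheeger--Yau comparison theorem: since $M$ is complete with $\mathrm{Ric}_M \ge -\kappa(d-1)$, one has $K(t,p,q) \ge \underline{K}(t,d(p,q))$, where $\underline{K}$ denotes the heat kernel of the simply connected $d$-dimensional space form of constant sectional curvature $-\kappa$, written as a radial function of geodesic distance. (If $\kappa=0$ the model is $\R^d$ and the statement follows by letting $\sigma\to 0$; so assume $\kappa>0$.) Thus it suffices to establish, for every $\rho\ge 0$ and $t,\sigma>0$ with $\beta=\sqrt{\kappa}(d-1)$,
\begin{align}
	\underline{K}(t,\rho) \ge (4\pi t)^{-d/2}\exp\left(-\left(\tfrac{1}{4t}+\tfrac{\sigma}{3\sqrt{2t}}\right)\rho^2-\tfrac{\beta^2}{4}t-\left(\tfrac{\beta^2}{4\sigma}+\tfrac{2d\sigma}{3}\right)\sqrt{2t}\right).
\end{align}

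For the model-space estimate I would start from the fact that $\underline{K}(t,\rho)$ factors as $e^{-\beta^2t/4}$ times the heat kernel written in ``Euclidean-like'' geodesic polar coordinates, so that, up to the volume-density Jacobian $\bigl(\tfrac{\sqrt{\kappa}\,\rho}{\sinh(\sqrt{\kappa}\,\rho)}\bigr)^{(d-1)/2}$, it behaves like the Euclidean kernel $(4\pi t)^{-d/2}e^{-\rho^2/4t}$; concretely one uses the closed form for odd $d$ together with the Millson/subordination recursion (or the Gegenbauer integral representation) to reduce even $d$ to the odd case, and bounds the representing integral below by its contribution near the diagonal of the representing measure. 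The remaining work is one-variable calculus: bound the Jacobian below with an elementary inequality such as $\sinh s\le s\,e^{s^2/6}$ and its consequences, and then split the resulting $\kappa\rho^2$- and $\kappa t$-type corrections against the leading $e^{-\rho^2/4t}$ via Young's inequality $\rho\le \tfrac{\sigma\rho^2}{2\sqrt{2t}}+\tfrac{\sqrt{2t}}{2\sigma}$ --- this is exactly where the free parameter $\sigma$ enters and where the coefficients $\tfrac13$, $\tfrac{\beta^2}{4}$, $\tfrac{2d\sigma}{3}$ get produced.

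As a fallback not relying on Cheeger--Yau, the same inequality can be reached from Wang's dimension-free Harnack inequality $(P_tf)^{p}(x)\le (P_tf^{p})(y)\exp\!\bigl(\tfrac{pK\,d(x,y)^2}{2(p-1)(1-e^{-2Kt})}\bigr)$, valid for $\mathrm{Ric}\ge -K$ and $p>1$: applying it to $f=\mathbf{1}_{B_q(r)}$ with $x=q$, $y=p$ converts it into $K(t,p,q)\ge c(d)\,K(t,q,q)^{p}\,\mathrm{Vol}(B_q(r))^{p-1}e^{-\Phi}$, after which one feeds in an on-diagonal lower bound $K(t,q,q)\ge (4\pi t)^{-d/2}e^{-c(d,\kappa)t}$ (a consequence of stochastic completeness together with the small-time asymptotics and the parabolic Harnack inequality) and optimizes over $r\sim\sqrt{t}$ and over $p$, with $p\to\infty$ producing the sharp $\tfrac{1}{4t}$ coefficient and the rate at which $p\to\infty$ tied to $\sigma$.

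I expect the main obstacle to be the explicit model-space computation in even dimension $d$, where no elementary closed form is available, so that one must control the Millson recursion (or the subordination integral) carefully enough to extract a fully explicit lower bound of precisely the stated shape; matching the exact numerical constants $\tfrac13$, $\tfrac{\beta^2}{4}$, $\tfrac{2d\sigma}{3}$, $\tfrac{\beta^2}{4\sigma}$ then amounts to choosing the elementary inequalities and the Young splitting so that the bookkeeping closes --- routine in principle but delicate, and the place where sharpness (the coefficient $\tfrac1{4t}$ rather than $\tfrac{C}{t}$) is actually won or lost.
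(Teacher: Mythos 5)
The paper does not actually prove this statement: it is imported verbatim, with the citation \cite{wang1997sharp}, from Wang's work on sharp explicit heat kernel lower bounds, so there is no in-paper proof to compare yours against. Of your two routes, the fallback via the dimension-free Harnack inequality is essentially the method of the cited source (that is where the free parameter $\sigma$, the $\sqrt{2t}$ correction terms, and the sharp coefficient $\tfrac{1}{4t}$ genuinely come from), while the Cheeger--Yau-plus-model-space route is a different comparison strategy that would at best reprove a bound of the same shape, with no guarantee of the same constants.

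As a proof, the proposal has concrete gaps. In the first route, the reduction to the space form is fine, but the model-space estimate is exactly the hard part and is only gestured at: for even $d$ there is no elementary closed form, and you would need to control the Millson recursion (or subordination integral) explicitly enough to extract the precise constants $\tfrac{\sigma}{3\sqrt{2t}}$, $\tfrac{\beta^2}{4}$, $\tfrac{2d\sigma}{3}$, $\tfrac{\beta^2}{4\sigma}$; you acknowledge this but give no argument that this particular bookkeeping closes, and a generic Young splitting only yields unspecified constants. In the second route, two steps as written would fail or need repair: (i) the on-diagonal bound $K(t,q,q)\ge (4\pi t)^{-d/2}e^{-c(d,\kappa)t}$ with the exact Euclidean prefactor does not follow from ``stochastic completeness plus small-time asymptotics plus parabolic Harnack'' --- small-time asymptotics give no uniform-in-$t$ control and Harnack constants destroy the sharp prefactor; one needs a genuine comparison input (Cheeger--Yau against the space form, or an $L^2$ argument $K(2t,q,q)=\|K(t,q,\cdot)\|_{L^2}^2$ combined with Bishop--Gromov); (ii) the passage from the Harnack inequality applied to $\mathbf{1}_{B_q(r)}$ to a pointwise kernel bound degenerates in the limits you invoke: as $r\to 0$ the factor $\mathrm{Vol}(B_q(r))^{p-1}\to 0$, and as $p\to\infty$ the factor $K(t,q,q)^{p}\,\mathrm{Vol}(B_q(r))^{p-1}$ collapses as well, so $r$ and $p$ must be kept finite and optimized jointly against the exponential term; that joint optimization is precisely the content of Wang's argument and is where the $\sigma$-dependent terms are produced, so it cannot be waved through. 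In short, this is a plausible plan pointing at the right literature, not yet a proof of the stated inequality.
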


\subsection{Global Reach}
\begin{lemma}
	\label{app:global_case}
	If $d(p,q)\leq s_0$ where $s_0=2\sqrt{2}\tau_{l}$, then $p, q$ cannot satisfy global reach case.
\end{lemma}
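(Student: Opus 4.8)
The plan is to show that if $d(p,q) \le s_0 = 2\sqrt{2}\,\tau_l$, then the midpoint $(p+q)/2$ cannot lie on $\mathrm{Med}(M)$ with $\|p-q\| = 2\tau$ being the bottleneck distance; more precisely, I will argue that a short chord forces $p$ and $q$ to be joined by a short geodesic, which in turn forces the midpoint of the segment to have a \emph{unique} nearest point on $M$, contradicting the defining property of a bottleneck. First I would invoke the local reach bound: since $\tau_l = \inf_{x\in M} 1/\|\mathrm{II}_x\|$ has been shown (\cref{sec:reach}) to be bounded below, the manifold is locally graphical over its tangent spaces with controlled curvature. The standard consequence (see e.g. the reach literature cited, \cite{aamari2019estimating}) is that two points at Euclidean distance $\delta < \tau_l$ (or some explicit fraction thereof) are connected by a geodesic of length at most $\delta(1 + O(\delta^2/\tau_l^2))$, and the geodesic stays in a tube of radius $O(\delta^2/\tau_l)$ around the chord.

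Next I would estimate the distance from $m := (p+q)/2$ to $M$. On one hand, $d_E(m, M) \le \|m - \gamma(1/2)\|$ where $\gamma$ is the connecting geodesic; since the geodesic deviates from the chord by at most $c\,d(p,q)^2/\tau_l$, we get $d_E(m,M) \le c\, s_0^2/\tau_l = 8c\sqrt{2}\,\tau_l$ — wait, this needs care: the useful bound is that $d_E(m,M)$ is \emph{quadratically small} in $d(p,q)$ relative to $\tau_l$ only when $d(p,q)$ itself is small compared to $\tau_l$, and here $s_0$ is a fixed multiple of $\tau_l$. So the real argument must be that the constant $2\sqrt{2}$ is chosen precisely so that $m$ lies strictly inside the reach tube: one shows $d_E(m,M) < \tau_l$ using the second-fundamental-form bound directly (the chord midpoint of a curve with $\|\mathrm{II}\| \le 1/\tau_l$ lies within distance $\tau_l(1 - \sqrt{1 - (d(p,q)/2\tau_l)^2})$ of the curve, which is $< \tau_l$ exactly when $d(p,q) < 2\sqrt{2}\,\tau_l$... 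I would verify the sharp constant by the osculating-ball picture). Then, since $m$ is within the local reach of the piece of $M$ near $p,q$, it has a unique nearest point on that local patch.

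The remaining issue is to rule out that $m$ is equidistant to a \emph{far-away} sheet of $M$ — that is, to go from "unique nearest point locally" to "unique nearest point globally," which is what $\mathrm{Med}(M)$ demands. Here I would use that in the global (bottleneck) case the definition forces $\|p - m\| = \|q - m\| = d_E(m, M) = \tau$, so both $p$ and $q$ realize the \emph{global} distance; combined with the local uniqueness just established (which says no point of $M$ other than the one nearest point in the patch can be at distance $\le \tau_l$ from $m$, and $\|p-m\|, \|q-m\| \le d(p,q)/2 \le \sqrt 2\,\tau_l$), I would derive that $p = q$, contradicting $\|p-q\| = 2\tau > 0$. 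The main obstacle I anticipate is pinning down the sharp constant $2\sqrt{2}$ and the precise geometric lemma relating chord length, the second fundamental form bound $1/\tau_l$, and the midpoint-to-manifold distance; this is where the osculating-ball interpretation of reach (a curve with curvature $\le 1/\tau_l$ cannot bend away from its chord faster than an arc of a circle of radius $\tau_l$) must be made quantitative and combined correctly with the projection/uniqueness property of points within the reach.
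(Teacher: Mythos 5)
Your overall strategy differs from the paper's and, as written, has a genuine gap at its crucial step. The paper argues directly against the equality $d_E\bigl(\tfrac{p+q}{2},M\bigr)=\tfrac{|p-q|}{2}$: since a geodesic of $M$ has acceleration in $\R^D$ bounded by the second fundamental form, hence by $1/\tau_l$, the chord--arc bound $|p-q|\ge 2\tau_l\sin\bigl(d(p,q)/(2\tau_l)\bigr)$ together with $d(p,q)\le 2\sqrt2\,\tau_l$ gives $d(p,q)\le\tfrac32|p-q|<\tfrac\pi2|p-q|$; any curve joining $p$ to $q$ while avoiding the open ball with diameter $\overline{pq}$ has length at least $\tfrac\pi2|p-q|$, so the geodesic must enter that ball, producing a point of $M$ strictly closer to the midpoint than $|p-q|/2$ --- a contradiction with the bottleneck property that needs no uniqueness statement at all. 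Your route instead tries to conclude via ``unique nearest point locally, hence $p=q$'', and that is where it breaks: a bound $\|\mathrm{II}\|\le 1/\tau_l$ guarantees unique projection only strictly within distance $\tau_l$ of the patch, and it fails at the focal distance (the center of a sphere of radius $\tau_l$ is at distance exactly $\tau_l$ and every point of the sphere is nearest). In the bottleneck configuration the midpoint sits at distance exactly $\tau=\min\{\tau_g,\tau_l\}$, which may equal $\tau_l$, and your own estimate only places $p,q$ within $d(p,q)/2\le\sqrt2\,\tau_l>\tau_l$ of the midpoint, outside the range where a curvature bound alone gives uniqueness; so the final contradiction is not established.

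The quantitative step you flag yourself is also genuinely problematic: the midpoint-to-manifold expression $\tau_l\bigl(1-\sqrt{1-(d(p,q)/2\tau_l)^2}\,\bigr)$ is undefined once $d(p,q)>2\tau_l$, while the hypothesis allows $d(p,q)$ up to $2\sqrt2\,\tau_l$, and the tube/deviation estimates you quote are only effective when $d(p,q)\ll\tau_l$, which is not the regime here. The repair is essentially to drop the uniqueness argument and use the hypothesis as the paper does: the constant $2\sqrt2$ serves only to make the geodesic-to-chord ratio at most $\sqrt2/\sin\sqrt2\le 3/2<\pi/2$, which forces the connecting geodesic inside $B\bigl(\tfrac{p+q}{2},\tfrac{|p-q|}{2}\bigr)$ and contradicts the fact that $p$ and $q$ realize the distance from the midpoint to $M$.
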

\begin{proof}
	If not, we assume there exists $p,q \in M$ such that $d(p,q)\leq s_0$ and $p,q$ satisfy global reach case, i.e. 
	\begin{align}
		
&\frac{p+q}{2} \in Med(M), \\
 \pi(\frac{p+q}{2})=p,q &\quad \text{and} \quad d_E(p,M)=d_E(q,M)=|p-q|/2 \label{proj_med}
	\end{align}
Since $d(p,q)\leq s_0$, we have $d(p,q)\leq \frac{3}{2}|p,q|$.

	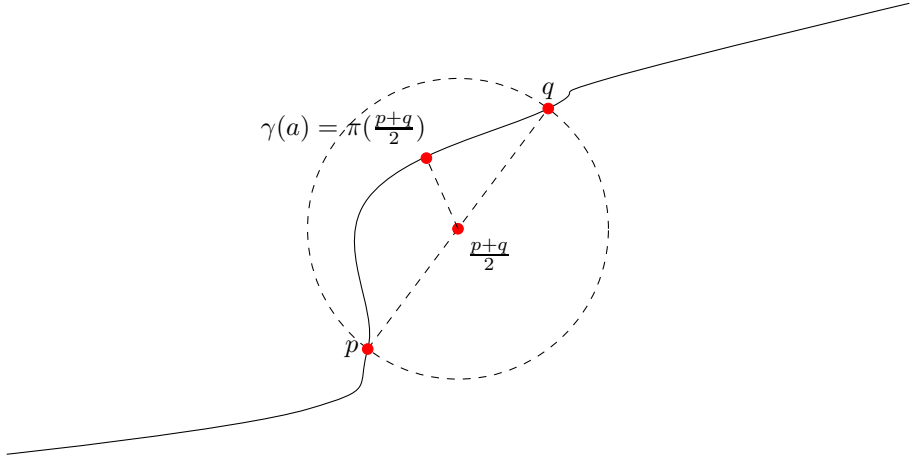
\begin{figure}[H]
		\centering
		\begin{tikzpicture}[scale=2]
			
			\draw[dashed] (0,0) circle(1cm);
			
			\draw[dashed] (-0.6,-0.8) -- (0.6,0.8);
			
			\draw plot[smooth, tension=0.7] coordinates {(-3,-1.5) (-1,-1.2) (-0.6,-0.8) (-0.6,0.2) (0.6,0.8) (1,1) (3,1.5)};
			\node at (-0.6,-0.8) [left] {$p$};
			\node at (0.6,0.8) [above] {$q$};
			\node at (0,0) [below right] {$\frac{p+q}{2}$};
			\node at (-0.15,0.5) [above left] {$\gamma(a)=\pi(\frac{p+q}{2})$};
			\filldraw[black] (-0.6,-0.8) circle (1pt);
			\filldraw[black] (0.6,0.8) circle (1pt);
			\filldraw[black] (0,0) circle (1pt);
			\filldraw[black] (-0.21,0.47) circle (1pt);
			\draw[dashed] (0, 0) -- (-0.21,0.47);
		\end{tikzpicture}
		\caption{Local Estimate}
	\end{figure}
	
	The inequality above implies the geodesic $\gamma(s)$ connecting $p$ and
	$q$ must be partly inside the ball
	$B(\frac{p+q}{2},\frac{|p-q|
	}{2})$, since $\overline{pq}$ is the diameter of this ball and
	\begin{align}
		|\gamma(s)|=d(p,q) \leq \frac{3}{2} |p-q| < \frac{\pi}{2}|p-q|,
	\end{align}
	which implies there exist $a \in (0,s)$ such that
	\begin{align}
		|\gamma(a)-\frac{p+q}{2}| < \frac{|p-q|}{2},
	\end{align}
	which contradicts with $\varphi(p),\varphi(q)$ satisfy global reach case \cref{proj_med}.
\end{proof}

\subsection{Selection of $t_0$, $m$}

\subsubsection{Selection of $t_0$}
For $t_0$, see theorem 4.4 in \cite{portegies2016embeddings}.

We define 
\begin{align}
	\Gamma(s,x,y) := r^d K(sr^2, u^{-1}(xr),u^{-1}(yr)),
\end{align}
where $u : B_r(p) \rightarrow \mathbb{R}^d$ is the harmonic coordinates.

We select $R_1 = R_1(d,\epsilon)$ such that, $\frac{1}{2}\leq s \leq 2$, 
\begin{align}
	\int_{\mathbb{R}^d \backslash B_{R_1}(0)} |\nabla \Gamma(s,0,y)|^2 dy < \epsilon.
\end{align}

We set $\alpha=\frac{1}{2}$ and $Q>1$ such that 
\begin{align}
	2(Q-1)C(d,\alpha) \leq \sigma,
\end{align}
where $C(d,\alpha)$ is the constant in Lemma 6.1 of \cite{portegies2016embeddings} and $\sigma \leq \sigma_1(d,\epsilon)$ such that $C(d,\alpha)\sigma |B_{R_1}(0)| < \epsilon$.

We also set $R_0=R(d,\alpha, C(d),Q)$ is the radius in Lemma 6.1, where $C(d)$ is the constant as below:
\begin{align}
	|\nabla\Gamma(s,x,y)| \leq \frac{C(d)}{s^{(d+1)/2}}\exp\left(-\frac{|x-y|^2}{8}\right).
\end{align}

Let $r_h := r_h(d,\kappa,\iota,\alpha,Q)$ be the harmonic radius, and set $r_3 = r_3(d,\kappa,\iota,\epsilon)< r_h/2$ such that for $t< 2r_3^2$, then
\begin{align}
	(2t)^{\frac{d+2}{2}}\int_{M\backslash B_{r_h/2}(p)}|\nabla K(t,p,q)|^2 dq <\epsilon.
\end{align}

Now we set 
\begin{align}
	r_0 = \min \left\{\frac{r_h}{R_0},\frac{r_h}{R_1},r_3\right\},
\end{align}
then $t_0 = r_0^2/2$.

\subsubsection{Selection of $N_0$}
According to theorem 4.4 in \cite{portegies2016embeddings}, 
\begin{align}
	\mathcal{H}(p)(q):= (2t)^{\frac{d+2}{4}}\sqrt{2}(4\pi)^{d/4} K(t,p,q)
\end{align}
is an embedding of $M$ into $L^2(M)$ with 
\begin{align}
	1-\epsilon < \|(d\mathcal{H})_p\|<1+\epsilon.
\end{align}

In addition, there is an isometry from $L^2(M)$ to $l^2$ by
\begin{align}
	U(f)_j=\int_M f(q) e_j(q) dq.
\end{align}

Then,
\begin{align}
	p\in M \overset{\mathcal{H}}{\rightarrow} \mathcal{H}(p) \in L^2(M) \overset{U}{\rightarrow} U(\mathcal{H}(p))
\end{align}
is an embedding, i.e.
\begin{align}
	\mathcal{F}(p) := (2t)^{\frac{d+2}{4}}\sqrt{2}(4\pi)^{d/4} \left(e^{-\lambda_1 t} e_1(p),e^{-\lambda_2 t} e_2(p),\dots\right)
\end{align}
is an embedding. We ignore the $e_0$ since it is the constant.

Therefore, for any $v \in T_p M$ with $|v|=1$, we have
\begin{align}
	2 (2t)^\frac{d+2}{2}(4\pi)^{d/2} \sum_{i=1}^{\infty} e^{-2\lambda_i t} (\nabla e_i (v))^2 = \|(d\mathcal{F})_p\|^2= \|(d\mathcal{H})_p\|^2 \in \left((1-\epsilon)^2,(1+\epsilon)^2\right).
\end{align}

In \cref{sec:phim-higher-deri}, we derived 
\begin{align}
	\|\nabla e_j\| \leq C \lambda_j^{\frac{d+2}{4}},
\end{align}
thus 
\begin{align}
2 (2t)^\frac{d+2}{2}(4\pi)^{d/2} \sum_{i=1}^{\infty} e^{-2\lambda_i t} (\nabla e_i (v))^2 \leq 2C (2t)^\frac{d+2}{2}(4\pi)^{d/2} \sum_{i=1}^{\infty} e^{-2\lambda_i t}  \lambda_j^{\frac{d+2}{2}}.
\end{align}

Furthermore, $\lambda_i$ is bounded from below by
\begin{align}
	\lambda_i ^ {d/2} \geq \alpha(d) \frac{i}{V} \left(\frac{V}{\int_{0}^{\text{diam}} F_\kappa(r)dr}\right)^{n(n+1)/(n-1)},
\end{align}
where $F_\kappa(r) = \begin{cases} 
	(-\kappa)^{-1/2} (\sinh \sqrt{-\kappa} r)^{n-1} & \text{if } \kappa<0, \\
	r^{n-1} & \text{if } \kappa = 0, \\
	(\kappa)^{-1/2} (\sinh \sqrt{\kappa} r)^{n-1} & \text{if } \kappa<0.
\end{cases}$ and $\kappa(n-1)$ is the lower bound of Ricci curvature of $M$\cite{Cheng1981}. Therefore, $\lambda_i \geq c i^{2/d}$ for some uniform constant $c$, when $i$ is large enough, we have 
\begin{align}
	e^{-2\lambda_it} \lambda_i^ {\frac{d+2}{2}} \leq e^{-\lambda_i t} \leq e^{-c i^{2/d} t},
\end{align}
then
\begin{align}
2C (2t)^\frac{d+2}{2}(4\pi)^{d/2} \sum_{i=N_0}^{\infty} e^{-2\lambda_i t}  \lambda_j^{\frac{d+2}{2}} \leq C^\prime \sum_{i=N_0}^{\infty} e^{-c i^{2/d} t}.
\end{align}

It is easy to verify that the tail can be controlled to arbitrary small, thus we can truncate $\mathcal{F}$ by $\mathcal{F}^{N_0}$:
\begin{align}
	\mathcal{F}^{N_0}(p) := (2t)^{\frac{d+2}{4}}\sqrt{2}(4\pi)^{d/4} \left(e^{-\lambda_1 t} e_1(p),\dots, e^{-\lambda_{N_1} t} e_{N_1}(p)\right)
\end{align}
such that $1-\epsilon < \|\mathcal{F}^{N_0}\| <1+\epsilon$.

\subsubsection{Selection of $N_1$}

We need find $N_1=N_1(d,\kappa,\iota,V,\epsilon^\prime,t_0)$ such that $|K_m(t^\prime,p,\cdot)-K(t^\prime,p,\cdot)|\leq \epsilon^\prime$ holds for any $m\geq N_1$ and any $t^\prime$ with $t_0\leq t^\prime \leq 4$.

In fact,
\begin{align}
	|K_{m}(t^\prime,p,q)-K(t^\prime,p,q)| = |\sum_{i=m}^{\infty} e^{-\lambda_i t^\prime} e_i(p)e_i(q)| \\
	\leq \sum_{i=N_1}^{\infty} e^{-\lambda_i t^\prime}| e_i(p)| |e_i(q)|.
\end{align}
We also have the estimate for L-infinity norm of eigenfunction \cite{ajm/1154098927}:
\begin{align}
	\|e_i(p)\|_{L^\infty} \leq C(d,\kappa,\iota) \lambda_i^ {\frac{d-1}{4}} \|e_i(p)\|_{L^2}.
\end{align}
Therefore, we have
\begin{align}
		|K_m(t^\prime,p,q)-K(t^\prime,p,q)| \leq C(d,\kappa,\iota) \sum_{i=N_1}^{\infty} e^{-\lambda_i t^\prime} \lambda_i^ {\frac{d-1}{2}}.
\end{align}

Similarly, $\lambda_i \geq 2c i^{2/d}$ for some uniform constant $c$, when $i$ is large enough, we have 
\begin{align}
	e^{-\lambda_i t^\prime} \lambda_i^ {\frac{d-1}{2}} \leq e^{-\lambda_i t^\prime /2} \leq e^{-c i^{2/d} t^\prime},
\end{align}
thus
\begin{align}
|K_m(t^\prime,p,q)-K(t^\prime,p,q)| \leq C(d,\kappa,\iota) \sum_{i=N_1}^{\infty} e^{-c i^{2/d} t^\prime}.
\end{align}

It is obvious that $\sum_{i=1}^{\infty} e^{-c i^{2/d} t^\prime}$ converges, so there exists  $N_1=N_1(d,\kappa,\iota,V,\epsilon^\prime,t_0)$ such that $|K_m(t^\prime,p,\cdot)-K(t^\prime,p,\cdot)|\leq \epsilon^\prime$. 

We notice that more precise estimate of eigenvalues and eigenfunction can make $N_1$ smaller, here we only list one possible method for selection.

\subsection{Example of $S^2$}
In this section, we consider the special case of $S^2$ with radius $1$ to verify the \cref{star}.

For $S^2$, the intrinsic dimension $d=2$, the injectivity radius is $\pi$, the sectional curvature and the Ricci curvature are both $1$, thus negative bound $\kappa=0$, $\beta=\sqrt{\kappa}(d-1)=0$ . 

The eigenvalues of $S^2$ is $l(l+1)$, with multiplicity $2l+1$, and the complex-valued eigenfunctions are 
\begin{align}
	Y_l^m(\theta,\phi)=N_l^m P_l^m(\cos \theta) e^{im\phi}\quad l\in \N, m=0,\pm1,\dots,\pm l,
\end{align} 
where $N_l^m$ is the normalization factor, $P_l^m$ is the associated Legendre polynomials, then the real valued eigenfunctions are
\begin{align}
Y_{lm} &= \begin{cases} 
	\sqrt{2}(-1)^m\Re(Y_l^{m}) & \text{if } m>0, \\
	Y_l^0 & \text{if } m = 0, \\
	\sqrt{2}(-1)^m\Im(Y_l^{-m}) & \text{if } m<0.
\end{cases} \\
&=\begin{cases} 
	(-1)^m \sqrt{2}\sqrt{\frac{2l+1}{2\pi} \frac{(l-m)!}{(l+m)!}}P_l^m(\cos \theta)\cos(m\phi) & \text{if } m>0, \\
	\sqrt{\frac{2l+1}{4\pi}} P_l^m(\cos \theta) & \text{if } m = 0, \\
		(-1)^m \sqrt{2}\sqrt{\frac{2l+1}{2\pi} \frac{(l-|m|)!}{(l+|m|)!}}P_l^{|m|}(\cos \theta)\sin(|m|\phi) & \text{if } m<0.
\end{cases}
\end{align}

We first consider embedding $S^2$ into $L^2(S^2)$ as the following:
\begin{align}
	f : p \rightarrow (4t) \sqrt{2\pi} K(t,p, \cdot).
\end{align}
Its operator norm is
\begin{align}
\|df_p\|^2= \sup_{\|v\|=1} 32\pi t^2 \int_{S^2} |\nabla_v K(t,p,q)|^2 dq.
\end{align}

We need find $t_0$ such that for any $0<t<t_0$, 
\begin{align}
	(1-\epsilon)^2<\sup_{\|v\|=1} 32\pi t^2 \int_{S^2} |\nabla_v K(t,p,q)|^2 dq < (1+\epsilon)^2.
\end{align}

The coordinate represent of $S^2$ is 
\begin{align}
	r(\theta, \phi) = (\sin \theta \cos \phi, \sin \theta \sin \phi, \cos \theta),\quad  \theta \in (0,\pi), \phi \in (0,2\pi),
\end{align}
and thus we can select unnormalized coordinate basis as the following
\begin{align}
	\frac{\partial}{\partial \theta} &= (\cos \theta \cos \phi, \cos \theta \sin \phi, -\sin \theta), \\
	\frac{\partial}{\partial \phi}  &= (-\sin\theta \sin \phi, \sin \theta\cos \phi,0).
\end{align}
Therefore, $g_{\theta \theta}=1 , g_{\theta \phi}=0, g_{\phi \phi}=\sin^2 \theta$. We normalize basis and then we have the gradient of $f$
\begin{align}
	\nabla f = \frac{\partial f}{\partial \theta} e_\theta + \frac{\partial f}{\partial \phi} \frac{1}{\sin \theta} e_\phi,
\end{align}
where $e_\theta = \frac{\partial}{\partial \theta}$, $e_\phi = \frac{1}{\sin \theta}\frac{\partial}{\partial \phi}$.

Then for $v= v^1 e_\theta + v^2 e_\phi$, we have
\begin{align}
	\nabla_v f = v(f)= df(v)=\left\langle \nabla f, v\right\rangle = \frac{\partial f}{\partial \theta} v^1 + \frac{\partial f}{\partial \phi} \frac{v^2}{\sin \theta}.
\end{align}

Now we compute $\int_{S^2} |\nabla_v K(t,p,q)|^2 dq$, without loss of generality, we can assume $v=\frac{\sqrt{2}}{2}e_\theta+\frac{\sqrt{2}}{2}e_\phi$ and $p=(\pi/2,0)$ ($\pi/2$ can simplify Riemannian metric.) due to symmetry of $S^2$.

We first prove $\int_{S^2} \frac{\partial K}{\partial \theta} \frac{\partial K}{\partial \phi} dq =0$. We know 
\begin{align}
	K(t,p,q) = \sum_{l=0}^\infty \sum_{m=-l}^{l} e^{-l(l+1) t} Y_{l}^m(p),\bar{Y}_{l}^m(q) \\
	=\sum_{l=0}^{\infty} \frac{2l+1}{4\pi}e^{-l(l+1) t} P_l(\cos d),
\end{align}
where the equality comes from spherical harmonic addition theorem, $d$ is the geodesic distance between $p$ and $q$ and $\cos d = \cos \theta_p \cos \theta_q + \sin \theta_p \sin \theta_q \cos(\phi_p -\phi_q)$.

Thus
\begin{align}
	\frac{\partial K}{\partial \theta}& = \sum_{l=0}^\infty c_l P_l^\prime(\cos d) \left(-\sin \theta_p \cos \theta_q + \cos \theta_p \sin \theta_q \cos (\phi_p -\phi_q) \right), \\
	\frac{\partial K}{\partial \phi}& = \sum_{l=0}^\infty c_l P_l^\prime(\cos d) \left(-\sin \theta_p \sin \theta_q \sin (\phi_p -\phi_q) \right).
\end{align}
At $p = (\pi/2 ,0 )$, they are
\begin{align}
\frac{\partial K}{\partial \theta} &= \sum_{l=0}^\infty c_l P_l^\prime(\sin \theta_q \cos \phi_q) \left( -\cos \theta_q \right), \\
\frac{\partial K}{\partial \phi} &= \sum_{l=0}^\infty c_l P_l^\prime(\sin \theta_q \cos \phi_q) \left( \sin \theta_q \sin \phi_q \right).
\end{align}

We only need to check 
\begin{align}
	\int_{S^2} P^\prime_{l}(\sin \theta_q \cos \phi_q) P^\prime_{l^\prime}(\sin \theta_q \cos \phi_q)  \cos \theta_q \sin \theta_q \sin \phi_q dq= 0.
\end{align}

We notice that $P_l$ is the Legendre polynomial, so $P_l^\prime$ is a polynomial. All terms are looks like $(\sin \theta_q \cos \phi_q)^k$.

It is easy to verify
\begin{align}
&\int_{S^2} (\sin \theta_q \cos \phi_q)^k  \cos \theta_q \sin \theta_q \sin \phi_q dq\\
=&\int_{0}^{2 \pi} \int_0^\pi \sin^{k+2}(\theta_q) \cos \theta_q \cos^k (\phi_q) \sin \phi_q d\phi d\theta  \\
=& \int_{0}^{2 \pi}\cos^k (\phi_q) \sin \phi_q d\phi \int_0^\pi \sin^{k+2}(\theta_q) \cos \theta_q d\theta_q =0.
\end{align}
Therefore, add them up and we get the result we need. And immediately, we have
\begin{align}
\int_{S^2} |\nabla_v K(t,p,q)|^2 dq &= \int_{S^2} \left|\frac{\sqrt{2}}{2} \frac{\partial K }{\partial \theta}+ \frac{\sqrt{2}}{2} \frac{\partial K }{\partial \phi}\right|^2 dq\\
&= \frac{1}{2} \int_{S^2} \left(\frac{\partial K }{\partial \theta}\right)^2 + \left(\frac{\partial K }{\partial \phi}\right)^2 + 2\frac{\partial K }{\partial \theta}\frac{\partial K }{\partial \phi} dq\\
&=\frac{1}{2}\int_{S^2} \left\langle \nabla K, \nabla K\right\rangle dq
\end{align}

Then by Green Identity,
\begin{align}
\int_{S^2} \left\langle \nabla K, \nabla K\right\rangle dq = -\int_{S^2} K(t,p,q) \Delta K(t,p,q) dq,
\end{align}
and $K(t,p,q)$ is the heat kernel, thus
\begin{align}
	\Delta_p K(t,p,q) &= \frac{\partial}{\partial t} K(t,p,q)\\
	&=-\frac{1}{4\pi}\sum_{l=0}^{\infty} l(l+1)(2l+1)e^{-l(l+1) t} P_l(\cos d).
\end{align}
Therefore, 
\begin{align}
&-\int_{S^2} K(t,p,q) \Delta K(t,p,q) dq \\
=&\frac{1}{(4\pi)^2} \sum_{l,l^\prime=0}^\infty l(l+1)(2l+1)(2l^\prime+1)e^{-\left(l(l+1)+ l^\prime(l^\prime+1)\right) t} \int_{S^2} P_l(\cos d) P_{l^\prime}(\cos d) dq.
\end{align}
To compute the last integral, we can always assume $p$ is the north pole by symmetry, so the geodesic distance between $p$ and $q$ is naturally the polar angle of $q$, that is
\begin{align}
&\int_0^{2\pi} \int_0^\pi P_l(\cos \theta) P_{l^\prime}(\cos \theta) \sin \theta d\theta d\phi \\
=&2\pi \int_{-1}^1 P_l(x) P_{l^\prime}(x) dx = \frac{4\pi}{ 2l+1} \delta_l^{l^\prime}.
\end{align}
hence
\begin{align}
\int_{S^2} \left\langle \nabla K, \nabla K\right\rangle dq &= \int_{S^2} K(t,p,q) \Delta K(t,p,q) dq \\
&= \frac{1}{4\pi} \sum_{l=0}^\infty l(l+1)(2l+1)e^{-2l(l+1)t}.
\end{align}

Therefore,
\begin{align}
	32\pi t^2 \int_{S^2} |\nabla_v K(t,p,q)|^2 dq = 4t^2\sum_{l=0}^\infty l(l+1)(2l+1)e^{-2l(l+1)t}.
\end{align}


By the isometry from $L^2(M)$ to $l^2$
\begin{align}
	U(f)_j=\int_M f(q) e_j(q) dq,
\end{align}
we have 
\begin{align}
	32\pi t^2 \sum_{i=1}^{\infty} e^{-2\lambda_i t} (\nabla e_i (v))^2  \in \left((1-\epsilon)^2,(1+\epsilon)^2\right),
\end{align}

We can truncate it at $N_0$ such that 
\begin{align}
	32\pi t^2 \sum_{i=N_0}^{\infty} e^{-2\lambda_i t} (\nabla e_i (v))^2 <\epsilon^2.
\end{align}

We consider the first $8$ eigenfunctions: $\frac{1}{2}\sqrt{\frac{3}{\pi}}\cos \theta, \frac{1}{2}\sqrt{\frac{3}{\pi}}\sin \theta \cos \phi, \frac{1}{2}\sqrt{\frac{3}{\pi}}\sin \theta \sin \phi$ $\dots$, we can obtain easily that the embedding norm, that is
\begin{align}
32\pi t^2 \sum_{i=1}^{8} e^{-2\lambda_i t} (\nabla e_i (v))^2  = 4t^2\sum_{l=1}^2 l(l+1)(2l+1)e^{-2l(l+1)t}.
\end{align}

We set $\epsilon = 0.05$ and $t=0.25$, then this embedding norm $\|df\| \in (0.95,1.05)$.

In addition, to control the error between the truncated heat kernel and the heat kernel, we need
\begin{align}
	\|K_{N_1}(t,p,q) - K(t,p,q)\| = |\sum_{i=N_1}^{\infty} e^{-\lambda_i t} e_i(p) e_i(q)| < \epsilon^\prime,
\end{align}
where $\epsilon^\prime= \frac{1}{32 \pi t}\exp\left(-\frac{\beta^2 t}{4}-\frac{2\sqrt{3dt} \beta}{3}\right) = \frac{1}{32 \pi t}$.

By spherical harmonic addition theorem, we have
\begin{align}
\sum_{i=N_1}^{\infty} e^{-\lambda_i t} e_i(p) e_i(q) = \frac{1}{4\pi}\sum_{l=l_1}^{\infty}e^{-l(l+1)t}(2l+1) P_l(\cos d).
\end{align}

We can compute $\epsilon^\prime$ and the summation above numerically, and we find that $l_1=3$ makes truncation error is smaller than $\epsilon^\prime$.

As a result, our embedding setting is $t=0.25$ and $m=8$.

\subsubsection{Verification for Inequality (\ref{star})}

We compute the second fundamental form of $\varphi(M)$ with ambient manifold $\R^m$. Through the numerical computation, the local reach for $\varphi(S^2)$ is approximately $0.646924$. 

Substituting coefficients into the \cref{star}, it is
\begin{align}
	8\tau^2_{l,\varphi(S^2)} 
	\geq \frac{9(1+0.05)^2\times 0.25}{2}\left(\log\left(8 \pi C_1(S^2)\right)\right),
\end{align}
by numerical computation, we have $C_1(S^2)$ is approximately $0.408912$. After computation, the left side is approximately $3.34809$, the right side is approximately $2.88982$. Therefore, there exist manifolds ensures that \cref{star} holds.

\end{document}